% SIAM Article Template
\documentclass[onefignum,onetabnum]{siamonline220329}
\usepackage{pdfpages}
% Information that is shared between the article and the supplement
% (title and author information, macros, packages, etc.) goes into
% ex_shared.tex. If there is no supplement, this file can be included
% directly.

% SIAM Shared Information Template
% This is information that is shared between the main document and any
% supplement. If no supplement is required, then this information can
% be included directly in the main document.

% Packages and macros go here
\usepackage{lipsum}
\usepackage{amsfonts, amssymb, bbm, amsmath}
\usepackage{setspace}
\usepackage{aligned-overset}
\usepackage{graphicx}
\usepackage{epstopdf}
\usepackage{algorithm}
\usepackage{algorithmicx}
\usepackage{algpseudocode}
\usepackage{multirow}

\ifpdf
  \DeclareGraphicsExtensions{.eps,.pdf,.png,.jpg}
\else
  \DeclareGraphicsExtensions{.eps}
\fi

% Prevent itemized lists from running into the left margin inside theorems and proofs
\usepackage{enumitem}
\setlist[enumerate]{leftmargin=.5in}
\setlist[itemize]{leftmargin=.5in}

% Add a serial/Oxford comma by default.

\def\ddefloop#1{\ifx\ddefloop#1\else\ddef{#1}\expandafter\ddefloop\fi}
\def\ddef#1{\expandafter\def\csname v#1\endcsname{\ensuremath{\boldsymbol{#1}}}}
\ddefloop abcdefghijklmnopqrstuvwxyzABCDEFGHIJKLMNOPQRSTUVWXYZ\ddefloop
\def\ddef#1{\expandafter\def\csname v#1\endcsname{\ensuremath{\boldsymbol{\csname #1\endcsname}}}}
\ddefloop {alpha}{beta}{gamma}{delta}{epsilon}{varepsilon}{zeta}{eta}{theta}{vartheta}{iota}{kappa}{lambda}{mu}{nu}{xi}{pi}{varpi}{rho}{varrho}{sigma}{varsigma}{tau}{upsilon}{phi}{varphi}{chi}{psi}{omega}{Gamma}{Delta}{Theta}{Lambda}{Xi}{Pi}{Sigma}{varSigma}{Upsilon}{Phi}{Psi}{Omega}{ell}\ddefloop

\DeclareMathOperator*{\E}{\mathbb{E}}
\DeclareMathOperator*{\p}{\mathbb{P}}

\DeclareMathOperator*{\argmin}{arg\,min}
\DeclareMathOperator{\col}{col}
\DeclareMathOperator{\supp}{supp}

\DeclareMathOperator{\op}{op}

\DeclareMathOperator\LLE{LLE}

\newcommand{\minspace}{\mathcal{U}}
\newcommand\T{{\scriptscriptstyle{\mathsf{T}}}}
\newcommand{\vzero}{\boldsymbol{0}}
\newcommand{\id}{\mathbbm{1}}
\newcommand\R{\mathbb{R}}
\newcommand\N{\mathbb{N}}

\newcommand\Normal{\mathcal{N}}
\newcommand\Data{\mathcal{D}}

\newcommand\cO{\mathcal{O}}

\newcommand\ratiomoment{\mu_{\rho}}
\newcommand\linkmoment{\mu_{f}}
\newcommand{\Egrad}[1]{\overline{\nabla}_{#1} f}
\newcommand{\shift}{\vtheta}
\newcommand\deltamoment{\mu_{\nabla}}
\newcommand\hermite{w}

\newcommand{\ESGOP}{\overline{\vM}}
\newcommand{\ASGOP}{\widetilde{\vM}}
\newcommand{\estimator}{\widehat\vM}

\newlength\oversetwidth
\newlength\underwidth
\newcommand\alignedoverset[2]{
  \settowidth\oversetwidth{$\overset{#1}{#2}$}
  \settowidth\underwidth{$#2$}
  \setlength\oversetwidth{\oversetwidth-\underwidth}
  \hspace{.5\oversetwidth}
  &
  \settowidth\oversetwidth{$\overset{#1}{#2}$}
  \settowidth\underwidth{$#2$}
  \setlength\oversetwidth{\oversetwidth-\underwidth}
  \hspace{-.5\oversetwidth}
  \overset{#1}{#2}
}

% Used for creating new theorem and remark environments
\newsiamremark{remark}{Remark}
\newsiamremark{hypothesis}{Hypothesis}
\crefname{hypothesis}{Hypothesis}{Hypotheses}
\newsiamthm{claim}{Claim}
\newsiamthm{fact}{Fact}
\newtheorem{assumption}{Assumption}

% Sets running headers as well as PDF title and authors
\headers{Smoothed Gradient Outer Product}{G. Yuan, M. Xu, S. Kpotufe and D. Hsu}

% Title. If the supplement option is on, then "Supplementary Material"
% is automatically inserted before the title.
\title{Efficient Estimation of the Central Mean Subspace via Smoothed Gradient Outer Products}%\thanks{Submitted to editors on Dec 23, 2023.}}
% \funding{This work was funded by the Fog Research Institute under contract no.~FRI-454.}}}

% Authors: full names plus addresses.
% \author{Dianne Doe\thanks{Imagination Corp., Chicago, IL 
%   (\email{ddoe@imag.com}, \url{http://www.imag.com/\string~ddoe/}).}
% \and Paul T. Frank\thanks{Department of Applied Mathematics, Fictional University, Boise, ID 
%   (\email{ptfrank@fictional.edu}, \email{jesmith@fictional.edu}).}
% \and Jane E. Smith\footnotemark[3]}

\author{Gan Yuan\thanks{Department of Statistics, Columbia University, New York, NY, 10027
  (\email{gan.yuan@columbia.edu}, \email{skk2175@columbia.edu}).}
\and Mingyue Xu\thanks{Data Science Institute, Columbia University, New York, NY, 10027 (\email{mx2258@columbia.edu}).}
\and Samory Kpotufe\footnotemark[2]
\and Daniel Hsu\thanks{Department of Computer Science and Data Science Institute, Columbia University, New York, NY, 10027 (\email{djhsu@cs.columbia.edu}).}}

\usepackage{amsopn}
\DeclareMathOperator{\diag}{diag}

%%%% HELPER CODE FOR DEALING WITH EXTERNAL REFERENCES ON OVERLEAF
% (from an answer by cyberSingularity at http://tex.stackexchange.com/a/69832/226)
%%%
\makeatletter
\newcommand*{\addFileDependency}[1]{% argument=file name and extension
  \typeout{(#1)}% latexmk will find this if $recorder=0 (however, in that case, it will ignore #1 if it is a .aux or .pdf file etc and it exists! if it doesn't exist, it will appear in the list of dependents regardless)
  \@addtofilelist{#1}% if you want it to appear in \listfiles, not really necessary and latexmk doesn't use this
  \IfFileExists{#1}{}{\typeout{No file #1.}}% latexmk will find this message if #1 doesn't exist (yet)
}
\makeatother

\newcommand*{\myexternaldocument}[1]{%
    \externaldocument{#1}%
    \addFileDependency{#1.tex}%
    \addFileDependency{#1.aux}%
}
%%% END HELPER CODE

%%% Local Variables: 
%%% mode:latex
%%% TeX-master: "ex_article"
%%% End: 

% Optional PDF information
\ifpdf
\hypersetup{
  pdftitle={Efficient Estimation of the Central Mean Subspace},
  pdfauthor={G. Yuan, M. Xu, S. Kpotufe and D. Hsu}
}
\fi

% The next statement enables references to information in the
% supplement. See the xr-hyperref package for details.

%\externaldocument[][nocite]{ESGOP_supplementary}
\myexternaldocument{ESGOP_supplementary}

% FundRef data to be entered by SIAM
%<funding-group specific-use="FundRef">
%<award-group>
%<funding-source>
%<named-content content-type="funder-name"> 
%</named-content> 
%<named-content content-type="funder-identifier"> 
%</named-content>
%</funding-source>
%<award-id> </award-id>
%</award-group>
%</funding-group>

\begin{document}

\maketitle

% REQUIRED
\begin{abstract}
 We consider the problem of \emph{sufficient dimension reduction} (SDR) for multi-index models. The estimators of the central mean subspace in prior works either have slow (non-parametric) convergence rates, or rely on stringent distributional conditions (e.g., elliptical symmetric covariate distribution $P_{\vX}$). In this paper, we show that a fast parametric convergence rate of form $C_d \cdot  n^{-1/2}$ is achievable via estimating the \emph{expected smoothed gradient outer product}, for a general class of distribution $P_{\vX}$ that admits Gaussian or heavier distributions. When the link function is a polynomial with a degree of at most $r$ and $P_{\vX}$ is the standard Gaussian, we show that the prefactor depends on the ambient dimension $d$ as $C_d \propto d^r$.
\end{abstract}

% REQUIRED
\begin{keywords}
central mean subspace, multi-index model, smoothed gradient outer product, sufficient dimension reduction
\end{keywords}

% REQUIRED
\begin{MSCcodes}
62B05, 62G08
\end{MSCcodes}

\section{Introduction} \label{sec:intro}

We consider a regression model for data $(\vX,Y) \in \R^d \times \R$ in which the regression function $g(\vx) = \E[Y \mid \vX = \vx]$ has the form
\begin{equation*}
  g(\vx) = f(\vU^\T \vx)
\end{equation*}
for some unknown matrix $\vU \in \R^{d \times k}$ of rank $k$ and some unknown function $f \colon \R^k \to \R$.
The goal of this work is to provide computationally tractable methods for estimating the column space of $\vU$, denoted $\col(\vU)$, from a finite sample of $(\vX,Y)$.

The model described above is called a \emph{multi-index model} in the literature, and $\col(\vU)$ is referred to as the \emph{index space}.
The index space is also known as the \emph{central mean subspace} in the context of sufficient dimension reduction~\cite{cook1994interpretation, cook2009regression} under mild identifiability conditions on $\supp(\vX)$\footnote{See \Cref{sec:indexspace} for a discussion of these conditions.}.
An accurate estimate of the index space has many potential applications, including the following:

\begin{itemize}
  \item
    The index space has, in the context of regression, an explanatory role analogous to that of the leading principal components of the data.
    In particular, it reveals which (linear combinations of) covariates are able to best predict the response $Y$.

  \item If $k \ll d$, then the achievable rates for estimating the regression function $g$ may be greatly improved by first projecting the covariates $\vx$ to the index space before performing estimation.
    For instance, if $g$ is only known to be Lipschitz, then we may hope to improve the rate of estimation from $n^{-1/(d+2)}$ to $n^{-1/(k+2)}$.

\end{itemize}

The problem of estimating the index space has received much attention, but important gaps remain in terms of what approximation error rates are achievable by computationally tractable estimators.
Results may be cataloged into two main directions, one emphasizing fast convergence rates\footnote{Under suitable subspace approximation measures, e.g., angles between spaces (see \Cref{sec:indexspace}).} but requiring strict distributional conditions, the other preferring general distributional settings but admitting significantly worse convergence rates.
For example, on one hand, so-called \emph{inverse regression} approaches---which estimate the index space via functionals of $\E[\vX \mid Y]$---often admit fast convergence rates of the form $O(n^{-1/2})$ however at the cost of restrictive conditions tying the marginal distribution $P_{\vX}$ to $\col(\vU)$; furthermore, outside of these conditions on $P_X$, only a strict subspace of $\col(\vU)$ may be recovered. On the other hand, \emph{forward regression} methods---which leverage functionals of the regression function $g := \E[Y \mid \vX]$,
e.g., its derivatives---can ensure full recovery of the index space, importantly without un-naturally forcing a relation between $P_{\vX}$ and $\col(\vU)$; however such milder distributional conditions often come at the cost of significantly slower estimation rates, often of a nonparametric form $n^{-1/\Omega(k)}\gg n^{-1/2}$. 

Our main contribution is to elucidate the following middle-ground: we show that fast rates of the form $C_d\cdot n^{-1/2}$ remain possible for a reweighted variant of forward methods, for a prefactor $C_d$  depending on data distribution.
While our distributional conditions---knowledge of the marginal distribution $P_{\vX}$ and certain moment assumptions---are stricter than for some forward regression methods, our rate guarantees are of a parametric form even for nonparametric link $g$, similar to usual convergence rates for inverse regression methods. Yet, unlike inverse regression methods, we guarantee full recovery of the index space $\col(\vU)$ under qualitatively milder distributional conditions, where in particular, we impose no condition on the relation between $P_{\vX}$ and the unknown index space $\col(\vU)$ as is common in that literature. 

Our result relies on estimating outer products of a \emph{smoothed} version of $\nabla f$, where the smoothing itself---induced by careful reweighting of the training sample---leverages a first-order Stein's Lemma to guarantee full recovery of $\col(\vU)$.
Similar to other existing analyses, we use the knowledge of $P_{\vX}$ to properly reweight the training sample. 

To better situate our results, we expand next on the state-of-the-art on the subject, followed by a more detailed overview of the results.

\subsection{Prior Theoretical Works}
\label{sec:other_works}

The problem of index space estimation has received much attention over the last few decades from both statistics and machine learning communities~\cite{li91sir, Li92phd, cook00save, Xia02mave, Spokoiny01structure, hristache2001direct, Dalalyan08, babichev18slice, klock21estimating, dudeja2018learning, chen2020learning, mousavi-hosseini2023neural}.
For ease of comparison, we first note that a prevalent approach in the literature, also adopted here, is to proceed by defining and estimating a population object $\vM \in \R^{d \times d}$, whose column space estimates (part of) 
$\col(\vU)$; the object $\vM$ is typically a functional of either the inverse or forward conditional distribution $\vX \mid Y$ or $Y \mid \vX$. 

\paragraph{Inverse regression methods}

As the name suggests, the class of inverse regression methods leverage functionals of the inverse relation $\vX\mid Y$ that may reveal the index space or a subspace thereof.
Although these methods are initially designed to estimate the \emph{central subspace}, they can also serve the purpose of estimating the central mean subspace when the covariates $\vX$ and the error $\varepsilon = Y - g(\vX)$ are independent, under which case these two subspaces are identical.
For instance, Sliced Inverse Regression (SIR) by \cite{li91sir} estimates  $\vM = \text{Cov}[\E[\vX\mid Y]]$ and Sliced Average Variance Estimator (SAVE) by \cite{cook00save} estimates $\vM =\E[(\vI_d - \text{Cov}[\vX\mid Y])^2]$. 
 {By carefully estimating means or covariances over \emph{slices}, i.e., partitions of the $Y$ space, parametric convergence rates of the form $O(n^{-1/2})$ may be guaranteed, albeit under restrictive distributional conditions.} 
 
For instance, SIR imposes a so-called \textit{linear conditional mean} (LCM) condition (where $\E[\vX\mid \vU^{\T}\vX]$ is linear in  $\vU^{\T} \vX$) in order to ensure $\col(\vM) \subset \col(\vU)$, i.e., in order to guarantee \emph{partial} recovery of $\col(\vU)$; if in addition we impose a so-called \textit{random conditional projection} (RCP) condition (requiring for all $v\in \R^d$ that $\E[\vv^{\T}\vX\mid Y]$ is not constant as a r.v.) then one can ensure $\col(\vM) = \col(\vU)$. These distributional conditions, by restricting the relation between $P_{\vX}$ and the index space $\col(\vU)$, rule out some rather benign settings: for instance, for $k = 1$, it is well known that under Gaussian $P_{\vX}$, these conditions cannot hold for \emph{even} link functions such as $f(z) = z^2$ (also referred as phase retrieval problem). Similar examples of failure modes are discussed for SAVE in \Cref{app:save}. Additional discussions and examples for non-exhaustiveness issues can be found in~\cite{dudeja2018learning} and~\cite{klock21estimating}.

\paragraph{Forward regression methods}

In contrast to inverse regression methods, the class of forward regression methods leverage functionals of the regression function $\E[Y \mid \vX]$.
An important line of work relies on \textit{Stein's Lemma} in constructing $\vM$. 
The earliest work along this line by \cite{Brillinger82glm} considered a single-index model ($k=1$)
estimates the index space direction, say $\vu \in \R^d$, via $\vM = \E[Y \cdot \vX]$: suppose that $P_{\vX}$ were standard Gaussian, and assuming an additive noise model, then Stein's Lemma implies that $\E[Y \cdot \vX] =  \E[f'(\vu^{\T} \vX)] \cdot \vu$. 
For the general case $k > 1$, \cite{Yang17stein} proposed an estimator based on a second-order Stein's Lemma that is $\sqrt{n}$-consistent. Similar to our work, they assume the setting on known $P_X$, and admit a broad class of distributions including Gaussian and more heavy-tailed distributions. 
However, their theoretical results relies on the non-singularity of the expected Hessian $\E[\nabla^2 g(\vX)]$, again ruling out some apparently benign settings: for instance for $k =1$, and $\E X = 0$, simple link functions such as $f(z) = z^3$ are not admitted. In contrast, the approach analyzed here relies on a first-order Stein's Lemma (\Cref{lem:stein}) and thus imposes no condition on the Hessian or any other higher order derivatives of $g$.  

Another class of forward regression methods are those that rely on functionals of the gradient of $g$.
In the simplest case of single index models ($k=1$), we may take $\vM = \E[\nabla g(\vX)] = \E[f'(\vu^{\T} \vX)] \cdot \vu$, for an index direction $\vu$; thus, provided $\E[f'(\vu^{\T} \vX)]$ does not vanish (similar to \cite{Brillinger82glm} discussed above), \cite{hristache2001direct} designs an estimator of $\vu$ with convergence rate $n^{-1/2}$.
In the more general case $k \geq 1$, one may consider a so-called 
\emph{expected gradient outer product} (EGOP), defined as $\vM = \E[\nabla g(\vX)\nabla g(\vX)^{\T}] = \vU \E[\nabla f(\vU^{\T}\vX) \nabla f(\vU^{\T}\vX)^{\T}] \vU^{\T}$.
This approach is quite appealing as it holds that, under quite mild conditions (see \Cref{sec:indexspace}), one can show that $\col(\vM) = \col(\vU)$. Thus, a number of estimators of the EGOP $\vM$ have been proposed over the years, e.g., \cite{Samarov93functional,Xia02mave,Trivedi14EGOP,radhakrishnan2022feature}.
However, the convergence guarantees available for these methods are, at best, of the nonparametric form $n^{-1/\Omega(k)}$, without additional assumptions on $P_{\vX}$.
For the restricted case of $k \leq 3$, \cite{Spokoiny01structure, Dalalyan08} shows that the parametric rate $n^{-1/2}$ may be achieved under known Gaussian noise and in a fixed design setting (under some conditions on the observed $\vX$ data itself).
We have no such restriction on the index space dimension $k$, and operate in the more general random design setting, however while assuming known $P_{\vX}$. 

Finally, the popular MAVE approach of \cite{Xia02mave}, while still a forward regression method, differs considerably from all the above in that it does not explicitly define a surrogate object $\vM$ for $\vU$; instead, it works by optimizing a suitable objective over all possible projections $\vU$. MAVE is appealing in that, similar to EGOP, it can guarantee full recovery of the index space under mild conditions.
However, this also comes at the cost of a nonparametric rate of the form $n^{-1/\Omega(k)}$.

\begin{table}[tbhp]
    %\color{red}
    \setstretch{1.2}
    \footnotesize
    \centering
    \caption{Various Methods of Estimating CMS with Their Key Assumptions and Theoretical Guarantee.}
    \label{tab:literature}
    \begin{tabular}{ccccc }
      \multirow{2}{*}{\bf Methods} & \bf Single/Multiple & {\bf Key} & {\bf Error} &  {\bf Prefactor with} \\
      & \bf Index & \bf Assumptions & \bf Rate & \bf Polynomial Link\\\hline
       SIR\cite{li91sir}  & Multiple  & LCM, RCP &  $O(n^{-1/2})$ & NA \\\hline
       SAVE\cite{cook00save}  & Multiple & LCM, RCP &  $O(n^{-1/2})$ & NA \\   \hline
       MAVE\cite{Xia02mave}& Multiple & Smooth Link Functions & $O(n^{-1/\Omega(k)})$ & NA \\\hline
       \multirow{2}{*}{SAA\cite{Spokoiny01structure,hristache2001direct}}  &  \multirow{2}{*}{Multiple} &  \multirow{2}{*}{Smooth Link Functions} & $O(n^{-1/2})$ &\multirow{2}{*}{NA}  \\
        & & & for $k \le 3$ \\\hline
       SOSM\cite{Yang17stein}   &  Multiple & Non-degenerate Hessian &  $O(n^{-1/2})$ & NA \\\hline
       ADE\cite{Brillinger82glm} & Single & Gaussian Covariates &   $O(n^{-1/2})$ & NA \\\hline
       \multirow{2}{*}{EIVH\cite{dudeja2018learning}}&\multirow{2}{*} {Single} & Gaussian Covariates& \multirow{2}{*}{$O(n^{-1/2})$} & \multirow{2}{*}{NA}\\
       & & Smooth Link Function & \\\hline
       \multirow{2}{*}{ShallowNet\cite{bietti22learning}}&\multirow{2}{*} {Single} & Gaussian Covariates& \multirow{2}{*}{$O(n^{-1/2})$} & \multirow{2}{*}{$C_d \sim d^{r/2}$}\\
       & & Smooth Link Function & \\\hline
        \multirow{2}{*}{SmoothedSGD\cite{damian2023smoothing}}&\multirow{2}{*} {Single} & Gaussian Covariates & \multirow{2}{*}{$O(n^{-1/2})$} & \multirow{2}{*}{$C_d \sim d^{r/4}$}\\
       & & Known Link Function & \\\hline
       {ESGOP}&\multirow{2}{*} {Multiple} & Known Density Ratio & \multirow{2}{*}{$O(n^{-1/2})$} & \multirow{2}{*}{$C_d \sim d^{r}$}\\
        (Our Method)& & Smooth Link Function & \\\hline
    \end{tabular}
\end{table}

 {%\color{red} 
 Finally, for easier reference, we list the key assumptions and theoretical guarantee of some popular works in Table \ref{tab:literature}.}
\subsection{Detailed Contributions}
We introduce an object called the \emph{Expected Smoothed Gradient Outer Product (ESGOP)} $\ESGOP$, which can be viewed as a surrogate for the EGOP.
The ESGOP is based on a \emph{smoothed} version of the gradient $\nabla g(\vX)$, as well as a subtle change of distribution (from $P_{\vX}$ to a Gaussian) that enables application of Stein's inequality. 

We first show that $\ESGOP$ is \emph{exhaustive} in that $\col(\ESGOP) = \col(\vU)$ (\Cref{prop:exhaust}). The surrogate $\ESGOP$ can then be estimated from a finite sample from $P_{\vX, Y}$ via a reweighting scheme; in particular, the estimation of the \emph{smoothed} gradients may be interpreted as local linear estimator with a data-dependent kernel (\Cref{rmk:llr}).

Our main result states that, for a general class of distributions $P_{\vX}$ admitting Gaussians and heavier tail distributions, our estimator admits a parametric error rate of the form $C_d \cdot n^{-1/2}$.
% No other assumption is needed. 
Finally, to elucidate the leading constant $C_d$, we consider the special case of polynomial links $g$---which has received much recent interest---and show that it behaves as $d^{O(r)}$ for polynomials of degree at most $r$ (\Cref{cor:poly-rate}), in line with recent lower-bounds of \cite{damian2022neural} for certain classes of computationally tractable procedures. 

\subsection{Outline of the Paper}

The remainder of the paper is organized as follows. In \Cref{sec:preliminaries}, we introduce the basic setups and some notations. In \Cref{sec:results}, we propose a novel estimator for central mean subspace based on the expected smoothed gradient outer product, and provide upper-bounds for the estimation errors for general link functions and polynomial link functions. In \Cref{sec:analysis}, we outline the analysis of the main results presented in \Cref{sec:results}. In \Cref{sec:experiment}, we demonstrate the choice of hyperparameters in the main algorithm through a simulation study. Omitted proofs can be found in the Supplementary Material.

\section{Preliminaries} \label{sec:preliminaries}

% ------------------------ section 2.1 --------------------------
\subsection{Notations}

Throughout, we use bold font letters (e.g. $\vx$, $\vX$) to refer to vectors and matrices. 
Let $\langle\cdot,\cdot\rangle$ and $\lVert \cdot \rVert$ denote Euclidean inner product and  norm on $\R^d$.
For a probability measure $\nu$ on $\R^d$, we let $\langle f,g\rangle_{\nu} := \E_{\vX\sim\nu}[f(\vX)\cdot g(\vX)]$ and $\lVert f\rVert_{\nu} := \sqrt{\langle f,f\rangle_{\nu}}$ denote the inner-product and norm on the function space $L_{2}(\nu)$. 
The operator norm of a matrix $\vM$ (with respect to Euclidean norm) is denoted by $\lVert\vM\rVert_{\op}$.
For a positive integer $d$, we use $[d]$ to denote the set $\{1,\ldots,d\}$.

% ------------------------ section 2.2 --------------------------
\subsection{Central Mean Subspace} \label{sec:indexspace}

Let $(\vX,Y)\sim P_{\vX, Y}$, with $P_{\vX}$ the marginal distribution assumed to be known, and let $g(\vX) := \E [Y \mid \vX]$ denote the regression function. 

\begin{definition}[Mean dimension-reduction subspaces and central mean subspace~\cite{Cook02cms}]
  A \emph{mean dimension-reduction subspace} for $\E[Y \mid \vX]$ is a subspace $W \subseteq \R^d$ such that $\E[Y \mid \vX]$ is a function only of $\vV^\T \vX$ for some matrix $\vV$ whose columns form a basis for $W$.
  Define
  \begin{equation*}
    \minspace:= \bigcap \{ W \subseteq \R^d : \text{$W$ is a mean dimension-reduction subspace for $\E[Y \mid \vX]$} \}.
  \end{equation*}
  We say $\minspace$ is the \emph{central mean subspace (CMS)} for $\E[Y \mid \vX]$ if $\minspace$ is a mean dimension-reduction subspace.
  (If $\minspace$ is not a mean dimension-reduction subspace, then the CMS does not exist.)

\end{definition}

The CMS is known to exist under mild conditions on $\supp(\vX)$, as discussed by \cite{Cook02cms}; an example sufficient condition is that $\supp(\vX)$ is open and convex.
Henceforth, we shall assume that $\supp(\vX)$ satisfies a sufficient condition that guarantees the existence of the CMS.
(Such an assumption will be explicit given later in \Cref{assume:basic}.)

Let $k = \dim(\minspace)$, and let the columns of $\vU := [\vu_1, \ldots, \vu_k] \in \R^{d \times k}$ be an orthonormal basis for $\minspace$. 
Because $\minspace$ is the CMS for $\E[Y \mid \vX]$, the regression function has the form of a \emph{$k$-index model}, specifically
\begin{align}
    \label{eqn:model}
    g(\vX) =  f(\vU^{\T} \vX)
\end{align} 
for some function $f \colon \R^k \to \R$, which we call the \emph{link function}.
Our goal is to estimate $\minspace$ (or equivalently, $\col(\vU)$) from an i.i.d.~sample from $P_{\vX, Y}$.
We evaluate our estimate using the following measure of error.

\begin{definition}[Distance with Optimal Rotation, Adapted from \cite{Chen2021spectral}]
    \label{def:distance}
    Given an orthogonal matrix $\vU$ and its estimate $\widehat\vU$ (which is also orthogonal), the distance with optimal rotation between them defined as:
        $$d(\widehat\vU, \vU) := \min_{\vR \in \cO^{k \times k}} \lVert \widehat\vU \vR - \vU\rVert_{\op}\,, $$
    where $\cO^{k \times k}$ is the set of all $k \times k$ orthogonal matrices. 
\end{definition} 

The distance with optimal rotation in \Cref{def:distance} turns out to be equivalent to the so-called \emph{$\mathbf \sin_\Theta$ distance}, another popular notion of distance between subspaces given as follows. Let $\vU_{\perp}  \in \R^{d \times (d-k)}$ denote a matrix whose columns form an orthonormal basis for the orthogonal complement $\minspace^{\perp}$ of $\minspace$. Then, the $\sin_\Theta$ distance between $\vU$ and $\widehat{\vU}$ is defined as $\sin_\Theta(\widehat{\vU}, \vU) := \lVert  \widehat{\vU}^{\T} \vU_{\perp} \rVert_{\op}.$ 

It can be shown (Lemma 2.6 from \cite{Chen2021spectral}) that there exists $c, C > 0$ such that $$c \cdot d(\widehat{\vU}, \vU) \le \sin_\Theta (\widehat{\vU}, \vU) \le C \cdot d(\widehat{\vU}, \vU).$$

The distance with optimal rotation is appealing in that it readily yields control on regression error when data is first projected to $\col(\widehat{\vU})$ rather than $\minspace$. For instance, suppose $f$ is $L$-Lipschitz (e.g, as in linear regression, or as in polynomial regression with a bounded domain), and let $\tilde{f}:= f \circ \vR^{\T}$, where $\vR \in \cO^{k \times k}$ is such that $d(\widehat{\vU},\vU) = \lVert \widehat{\vU} \vR - \vU \rVert_{\op}$, then:
$$\lVert \tilde{f}(\widehat{\vU}^{\T} \vX) - f (\vU^{\T} \vX)  \rVert_{P_{\vX}} := \lVert f((\widehat{\vU}\vR)^{\T} \vX) - f (\vU^{\T} \vX)  \rVert_{P_{\vX}} \le  L \cdot \lVert \vX \rVert_{P_{\vX}} \cdot d(\widehat{\vU}, {\vU})\,.$$
To estimate $\tilde{f}$, one can apply the classical Nadaraya-Watson estimator on the projected sample of $(\widehat{\vU}^{\T} \vX, Y)$. If $d(\widehat{\vU}, \vU) \lesssim n^{-1/(k+2)}$, the regression error rate is the same as if $\vU$ is known.

% ------------------------ section 3 --------------------------
\section{Main Results} 
  \label{sec:results}

We first introduce a surrogate object $\ESGOP$ in Section \ref{sec:ESGOP} below, and show that it indeed is \emph{exhaustive}, i.e., it recovers all of $\col({\bf U})$. Following this, we derive a separate surrogate object 
$\ASGOP$ in Section \ref{sec:estimation} which approximates $\ESGOP$ and turns out to be simpler to estimate and retains the same exhaustiveness properties. Our main algorithm then estimates $\ASGOP$, and returns the column space $\hat \vU$ of this estimate. Our main result of Theorem \ref{thm:generalrate} which bounds the estimation error is presented in Section \ref{sec:general-link}, followed by an instantiation for polynomial link functions in Section \ref{sec:poly}. 

We will need the following few assumptions. 

\begin{assumption} 
    \label{assume:basic}
    The joint distribution $P_{\vX,Y}$ satisfies the following basic conditions. 
    \begin{enumerate}[label={(\arabic*)}, ref={(\arabic*)}]
        \item \label{assume:basic:differentiable}The link function $f$ is differentiable, with gradient $\nabla f(\vz), \forall \vz \in \R^{k}$; 
        \item \label{assume:basic:support} The marginal $P_{\vX}$ admits a Lebesgue density $p$ supported on $\R^d$, i.e., $p> 0$ everywhere.
    \end{enumerate}
\end{assumption}

The assumption of differentiability above implies that of the regression function $g$, and is needed as our approach relies on gradient estimates. The second assumption above will be required in designing the estimation procedure, and also ensures the existence of $\minspace$ as previously remarked. 

% ------------------------ section 3.1 --------------------------
\subsection{Expected Smoothed Gradient Outer Product} \label{sec:ESGOP}

%Next, we define a surrogate object $\ESGOP$ that can be regarded 

We now present a smoothed version of the expected gradient outer product $\vM := \E_{\vX \sim P_{\vX}}[\nabla g(\vX) \nabla g(\vX)^{\T}]$, which was discussed in the introduction. 

\begin{definition}
\label{def:ESGOP}
The {Expected Smoothed Gradient Outer Product (ESGOP)} $\ESGOP$ is defined as follows.
First, let $h>0$ be given,
and define the \emph{smoothed gradient} of $g$ at $\shift \in \R^d$ as: 
\begin{align*}
    \vbeta_h(\shift) &:= \vU \cdot \Egrad{h}(\shift), \quad \text{where } \Egrad{h}(\shift):= \E_{\vZ \sim \Normal(\vzero_d, h^2 \vI_d)}[\nabla f(\vU^{\T}(\vZ+\shift))].  
\end{align*} 
Let $\sigma_\theta>0$. We then have: 
\begin{equation} \label{eqn:M}
    \ESGOP := \E_{\shift \sim \Normal(\vzero_d, \sigma_\theta^2 \vI_d)} \left[\vbeta_h(\shift) \vbeta_h(\shift)^{\T}\right] = \vU\left(\E_{\shift \sim \Normal(\vzero_d, \sigma_\theta^2 \vI_d)}\left[  \Egrad{h}(\shift)\Egrad{h}(\shift)^\T\right]\right) \vU^\T.
\end{equation}
\end{definition}

The ESGOP $\ESGOP$ differs from the EGOP $\vM$ in two important ways. First, the gradient $\nabla f(\vU^{\T} \vX)$ is replaced by a smoothed gradient $\Egrad{h}(\shift)$. Second, the expectation in the definition of $\ESGOP$ is taken over a Gaussian distribution (i.e., $\vtheta \sim \Normal(\vzero_d,\sigma_{\theta}^2 \vI_d )$), instead of the data distribution $P_{\vX}$ as in the EGOP. As we will see later in \Cref{sec:estimation}, such a design will allow us to invoke Stein's Lemma (\cref{lem:stein}) for Gaussians.

\subsubsection{Exhaustiveness of the ESGOP $\ESGOP$}
From \cref{eqn:M} it is clear that $\col(\ESGOP)$ is contained in $\minspace := \col(\vU)$. We argue here that the two column spaces are in fact equal. From the same equation, this boils down to arguing that 
$$ \E_{\shift \sim \Normal(\vzero_d, \sigma_\theta^2 \vI_d)}\left[  \Egrad{h}(\shift)\Egrad{h}(\shift)^\T\right] $$ 
is full-rank. To gain some intuition, let's first revisit a similar argument for the EGOP 
$\vM = \vU \left( \E_{\vX \sim P_{\vX}} \nabla f(\vU^{\T}\vX) \nabla f(\vU^{\T}\vX)^{\T}\right) \vU^{\T}. $

Suppose $\vM$ were not exhaustive, then one can find a direction $\vv \in \R^k$ with $\lVert \vv \rVert = 1$, such that $\E_{\vX \sim P_{\vX}}[(\vv^{\T}\nabla f(\vU^{\T}\vX))^2] = 0$. Equivalently, $\vv^{\T}\nabla f(\vU^{\T}\vx) = 0$ for $P_{\vX}$-almost every $\vx$. Then, $\vU \vv \in \R^d$ is a direction along which the directional derivative of $g$ is zero almost surely.  In other words, $\minspace$ admits an irrelevant direction for the regression function $g$, which contradicts the assumption that $\minspace$ is the CMS and hence cannot be further reduced. 

 The exhaustiveness of $\ESGOP$ is established in a similar way as above, with the added difficulty that we now have to instead argue about directional \emph{smoothed} derivatives $\vv^\T \Egrad{h}$ and how they might similarly lead to a contradiction on the minimality of $\minspace$. This is established by relating $\vv^{\T} \Egrad{h}$ to $\vv^{\T} \nabla f$ by a Hermite expansion of $f$. {Note that, now the contradiction on minimality of $\minspace$ can only be established $\Normal$-a.s., which fortunately implies $P_X$-a.s.\ since $P_X$ has a density and is therefore $\Normal$ dominated.} Such arguments lead to the following proposition whose proof is given in \Cref{sec:proof_exhaust}. The result requires an additional minor condition on square integrability of $f$ so that Hermite expansions are well defined.

\begin{proposition}[Exhaustiveness of $\ESGOP$]
  \label{prop:exhaust}
Suppose that \Cref{assume:basic} holds, and that the link function $f$ satisfies $\E_{\vZ\sim\Normal(\vzero_k, h^2 \vI_k)} [f(\vZ)^2] < \infty$. Then, for $h, \sigma_\theta > 0$, the ESGOP $\ESGOP$ is exhaustive, i.e., we have $\col({\ESGOP}) = \minspace.$
\end{proposition}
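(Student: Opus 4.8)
Write $f_h(\vm):=\E_{\vW\sim\Normal(\vzero_k,h^2\vI_k)}[f(\vW+\vm)]$ for the Gaussian smoothing of the link function. The hypothesis $\E_{\vZ\sim\Normal(\vzero_k,h^2\vI_k)}[f(\vZ)^2]<\infty$ guarantees (via Cauchy--Schwarz after an explicit change of Gaussian density) that $f_h$ is finite and $C^\infty$ on $\R^k$, and from \Cref{def:ESGOP} one checks that $\Egrad{h}(\shift)=\nabla f_h(\vU^\T\shift)$, using that $\vU$ has orthonormal columns (so $\vU^\T\vZ\sim\Normal(\vzero_k,h^2\vI_k)$) and differentiating under the integral. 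Since $\shift\sim\Normal(\vzero_d,\sigma_\theta^2\vI_d)$ pushes forward to $\vU^\T\shift\sim\Normal(\vzero_k,\sigma_\theta^2\vI_k)$, the bracketed $k\times k$ matrix in \cref{eqn:M} equals $\vN:=\E_{\vm\sim\Normal(\vzero_k,\sigma_\theta^2\vI_k)}\!\left[\nabla f_h(\vm)\,\nabla f_h(\vm)^\T\right]\succeq 0$, so $\ESGOP=\vU\vN\vU^\T$; as $\vU$ has full column rank $k$ we get $\col(\ESGOP)=\vU\,\col(\vN)$, whence $\col(\ESGOP)=\minspace$ is equivalent to $\vN\succ 0$. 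The plan is to prove this latter statement by contradiction.

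Suppose $\vN$ is singular. Since $\vN\succeq 0$ there is a unit vector $\vv\in\R^k$ with $\vv^\T\vN\vv=\E_{\vm\sim\Normal(\vzero_k,\sigma_\theta^2\vI_k)}[(\vv^\T\nabla f_h(\vm))^2]=0$; because the Gaussian density is strictly positive and $\nabla f_h$ is continuous, this forces $\vv^\T\nabla f_h\equiv 0$ on $\R^k$, i.e.\ $f_h$ is constant along the direction $\vv$. The crux is to \emph{undo the smoothing} and conclude that $f$ itself is constant along $\vv$. Rotate coordinates on $\R^k$ so that $\vv=\ve_1$, and expand $f=\sum_{\alpha\in\N^k}c_\alpha\hermite_\alpha$ in the tensorized Hermite basis $\{\hermite_\alpha\}$ of $L^2(\Normal(\vzero_k,h^2\vI_k))$ (legitimate by square-integrability). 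The smoothing operator $\cS_h\colon f\mapsto f_h$ acts coordinatewise on products, and the one-dimensional translation formula for Hermite polynomials together with the vanishing of odd Gaussian moments shows it is \emph{unitriangular} in the Hermite basis: $\cS_h\hermite_\alpha=\hermite_\alpha+(\text{a combination of }\hermite_\beta\text{ with }\beta<\alpha\text{ coordinatewise})$. In particular $\cS_h$ is injective on polynomials, it maps the subspace of functions independent of $x_1$ into itself, and so does its inverse (the inverse of a coordinatewise unitriangular map fixing constants again fixes constants); hence ``$\cS_h f=f_h$ is independent of $x_1$'' forces ``$f$ is independent of $x_1$'', i.e.\ $c_\alpha=0$ for every $\alpha$ with $\alpha_1\ge 1$, equivalently $\vv^\T\nabla f=0$ Lebesgue-almost everywhere on $\R^k$. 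Since $f$ is continuous, a Fubini argument (on almost every line parallel to $\ve_1$ the continuous map $t\mapsto f(t,\cdot)$ is a.e.\ constant, hence constant) together with the fact that a closed conull subset of $\R^{k-1}$ is the whole space upgrades this to an identity $f=\tilde f\circ(\vI_k-\vv\vv^\T)$ for some differentiable $\tilde f\colon\R^{k-1}\to\R$.

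It remains to derive the contradiction. From \cref{eqn:model}, $g=f\circ\vU^\T$, so $g(\vx)=\tilde f\bigl((\vU(\vI_k-\vv\vv^\T))^\T\vx\bigr)$, i.e.\ $\E[Y\mid\vX]$ depends on $\vX$ only through $(\vU(\vI_k-\vv\vv^\T))^\T\vX$; thus $\col(\vU(\vI_k-\vv\vv^\T))$, a mean dimension-reduction subspace of dimension $k-1<k=\dim\minspace$, contradicts the minimality of the central mean subspace $\minspace$. Here we invoke \Cref{assume:basic}\ref{assume:basic:support}: because $P_{\vX}$ has a full-support density it is dominated by the Gaussian (and by Lebesgue) measure, which is what lets the almost-everywhere statements above transfer to $P_{\vX}$, and what makes $\minspace$ well-defined in the first place. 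This proves $\vN\succ 0$, and therefore $\col(\ESGOP)=\minspace$.

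The step I expect to be the main obstacle is the ``unsmoothing'' in the second paragraph. Morally it is nothing more than injectivity of Gaussian convolution---its Fourier multiplier $\exp(-h^2\lVert\xi\rVert^2/2)$ vanishes nowhere---but making this rigorous in the weighted-$L^2$ setting requires care: the hypotheses provide only differentiability of $f$ and $f\in L^2$ of a Gaussian, neither $C^1$ regularity nor membership in a Gaussian Sobolev space, and such an $f$ need not even be a tempered distribution; moreover $\cS_h$ does not map $L^2(\Normal(\vzero_k,h^2\vI_k))$ into itself, so the invertibility argument sketched above has to be run on polynomials and then extended to general $f$ by approximation (or by testing $f$ against Hermite polynomials directly). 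Working with $f_h$, which is automatically smooth, and applying $\cS_h$ to $f$ rather than to $\nabla f$ sidesteps the worst of the regularity bookkeeping; a secondary technical point is the upgrade from an almost-everywhere statement to an everywhere identity for $f$, needed precisely because $\nabla f$ is not assumed continuous.
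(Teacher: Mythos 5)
Your proof is correct and relies on the same Hermite-expansion machinery as the paper, but organizes the argument differently. You introduce the smoothed link $f_h=\cS_h f$ and exploit the clean identity $\Egrad{h}(\shift)=\nabla f_h(\vU^\T\shift)$ — which the paper leaves implicit — so the crux becomes ``$f_h$ constant along $\vv$ implies $f$ constant along $\vv$,'' argued via unitriangularity of $\cS_h$ in the tensorized Hermite basis together with (as you yourself flag) a formal inversion of an infinite unitriangular system on a weighted $L^2$ space. The paper stays at the level of gradients and sidesteps the inversion entirely: it expands $\partial_i f$ in the Hermite basis, $\partial_i f(\vU^\T\vx)=\sum_\valpha\lambda_{\valpha,i}H_\valpha(\vU^\T\vx)$, and independently expands the smoothed coordinate $\Egrad{h}$ in the scaled monomial basis, $\sum_\valpha\lambda_{\valpha,i}S_\valpha(\vU^\T\shift)$, observing that the \emph{same} coefficient array $\lambda_{\valpha,i}$ appears in both. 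Vanishing of $\veta^\T\Egrad{h}$ a.e.\ then forces every $\sum_i\eta_i\lambda_{\valpha,i}=0$ (a power series with full-support zero set has zero coefficients), which immediately yields $\veta^\T\nabla f\equiv 0$ in the Hermite expansion — no operator inverse is needed, which is precisely the step your write-up identified as requiring the most care. Your a.e.-to-everywhere Fubini upgrade for $f$ is also dispensable: the contradiction with CMS minimality already follows from the $P_\vX$-a.s.\ vanishing of the directional derivative of $g$, and both proofs arrive at this point the same way. In short, the two routes are mathematically equivalent; yours gives a more conceptual picture (injectivity of Gaussian convolution), while the paper's coefficient-matching is technically cleaner because it replaces inverting the smoothing map by simply reading off that both expansions encode the same information.
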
 

\subsection{Average Smoothed Gradient Outer Product}
\label{sec:ASGOP} We now derive a first approximation to $\ESGOP$ by approximating the outer-expectation in \cref{eqn:M} from a random sample of $\vtheta$ locations. 

\begin{definition}
    \label{def:tildeM}
    Let $m, h > 0$, and let 
    $\shift_1, \ldots, \shift_m \overset{i.i.d.}{\sim} \Normal(\vzero_d, \sigma_{\theta}^2 \vI_d)$. The Average Smoothed Gradient Outer Product (ASGOP) is obtained as: 
    \begin{align}
    \label{eqn:asgop}
        \ASGOP 
        := \frac{1}{m} \sum_{j=1}^m \vbeta_h(\shift_j) \vbeta_h(\shift_j)^{\T} 
        = \vU\left(\frac{1}{m} \sum_{j=1}^m\left[  \Egrad{h}(\shift_j)\Egrad{h}(\shift_j)^\T\right]\right) \vU^\T.
    \end{align}
\end{definition}

We will require that $m$ is sufficiently large so that $\ASGOP$ inherits the exhaustiveness of $\ESGOP$. This is established in the following proposition via a concentration argument on eigenvalues (see \Cref{sec:proof_exhaust}). We require the following moment condition for matrix concentration. 

\begin{assumption}
\label{assume:smooth_gradient}
The smoothed gradient $\Egrad{h}$ satisfies:
    $$\deltamoment := \left(\E_{\shift \sim \Normal(\vzero_d, \sigma_\theta^2 \vI_d)}\left[\left\lVert\Egrad{h}\left({\shift}\right)\right\rVert^{4}\right] \right)^{1/4}< \infty.$$
\end{assumption}

Examples of link function $f$ for which the condition $\deltamoment < \infty$ is satisfied include Lipschitz functions (in which case $\nabla f$ is bounded and hence so is $\Egrad{h}$) and polynomials (see \Cref{lem:poly} (ii)).

\begin{corollary}[Exhaustiveness of $\ASGOP$]
    \label{cor:exhaust}
    Suppose that \Cref{assume:smooth_gradient} and the condition of \Cref{prop:exhaust} holds, and  let $0 < \delta < 1/2$, if $m \ge {8 \deltamoment^4} {\lambda_k(\ESGOP)^{-2}} \delta^{-1} (\log\left({4d}/{\delta}\right))^2 $, then with probability at least $1- \delta$, the ASGOP $\ASGOP$ satisfies $\col(\ASGOP) = \minspace.$
\end{corollary}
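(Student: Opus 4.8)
The plan is to reduce the claim $\col(\ASGOP) = \minspace$ to a matrix-concentration statement for an i.i.d.\ sum. Write
$\vA := \E_{\shift\sim\Normal(\vzero_d,\sigma_\theta^2\vI_d)}\!\left[\Egrad{h}(\shift)\Egrad{h}(\shift)^\T\right]\in\R^{k\times k}$
and $\widehat{\vA}_m := \frac{1}{m}\sum_{j=1}^m \Egrad{h}(\shift_j)\Egrad{h}(\shift_j)^\T$, so that $\ESGOP = \vU\vA\vU^\T$ and $\ASGOP = \vU\widehat{\vA}_m\vU^\T$ by \cref{eqn:M} and \cref{eqn:asgop}. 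Since $\vU$ has orthonormal columns, $\col(\ASGOP) = \vU\,\col(\widehat{\vA}_m)$, which equals $\col(\vU) = \minspace$ exactly when the positive semidefinite matrix $\widehat{\vA}_m$ has full rank $k$, i.e.\ $\lambda_{\min}(\widehat{\vA}_m) > 0$; likewise $\lambda_{\min}(\vA) = \lambda_k(\ESGOP)$, which is \emph{strictly} positive because exhaustiveness of $\ESGOP$ (\Cref{prop:exhaust}) is equivalent to $\vA \succ \vzero$. By Weyl's perturbation inequality (applied either to the $d\times d$ matrices $\ASGOP,\ESGOP$ or, equivalently, to the $k\times k$ matrices $\widehat{\vA}_m,\vA$, which have the same operator norms after the isometry $\vU$), it is therefore enough to show that, under the stated lower bound on $m$, the event $\bigl\{\lVert \widehat{\vA}_m - \vA\rVert_{\op} < \lambda_k(\ESGOP)\bigr\}$ occurs with probability at least $1-\delta$.

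I would obtain this from a concentration bound for $\widehat{\vA}_m - \vA = \frac{1}{m}\sum_{j=1}^m(\vY_j - \E\vY_j)$, where $\vY_j := \Egrad{h}(\shift_j)\Egrad{h}(\shift_j)^\T \succeq \vzero$ is rank one. The only tail information available is \Cref{assume:smooth_gradient}, which is precisely $\E\lVert\vY_j\rVert_{\op}^2 = \E\lVert\Egrad{h}(\shift_j)\rVert^4 = \deltamoment^4 < \infty$; this also controls the matrix-variance proxy, since $\E[(\vY_j-\E\vY_j)^2]\preceq\E[\vY_j^2]$ and $\lVert\E[\vY_j^2]\rVert_{\op}\le\E\lVert\vY_j\rVert_{\op}^2 = \deltamoment^4$. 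Because the $\vY_j$ are unbounded, I would first truncate at a level $\tau\asymp\deltamoment^2\sqrt{m/\delta}$: put $\vY_j^\tau := \vY_j\,\id\{\lVert\vY_j\rVert_{\op}\le\tau\}$, note by Markov that $\p(\exists j\le m:\vY_j\neq\vY_j^\tau)\le m\,\deltamoment^4/\tau^2 \le \delta/2$ and that the truncation bias is $\lVert\E\vY_1-\E\vY_1^\tau\rVert_{\op}\le\E\bigl[\lVert\vY_1\rVert_{\op}\,\id\{\lVert\vY_1\rVert_{\op}>\tau\}\bigr]\le\deltamoment^4/\tau$, and then apply the matrix Bernstein inequality to $\frac{1}{m}\sum_{j=1}^m(\vY_j^\tau-\E\vY_j^\tau)$ (summands bounded by $\tau$, variance proxy $\le\deltamoment^4$, ambient dimension $d$), which gives with probability at least $1-\delta/2$
\begin{equation*}
  \left\lVert \frac{1}{m}\sum_{j=1}^m(\vY_j^\tau-\E\vY_j^\tau)\right\rVert_{\op} \ \lesssim\ \sqrt{\frac{\deltamoment^4\log(4d/\delta)}{m}} \ +\ \frac{\tau\log(4d/\delta)}{m}\,.
\end{equation*}
Assembling the three contributions and substituting $\tau\asymp\deltamoment^2\sqrt{m/\delta}$, the heavy-tailed linear term $\tau\log(4d/\delta)/m\asymp\deltamoment^2\log(4d/\delta)/\sqrt{m\delta}$ dominates the others, so on an event of probability at least $1-\delta$ one gets $\lVert\widehat{\vA}_m-\vA\rVert_{\op}\lesssim\deltamoment^2\log(4d/\delta)/\sqrt{m\delta}$; forcing this quantity below $\lambda_k(\ESGOP)$ is exactly the hypothesis $m\ge 8\,\deltamoment^4\,\lambda_k(\ESGOP)^{-2}\,\delta^{-1}\,(\log(4d/\delta))^2$ once constants are tracked, which completes the proof.

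I expect the concentration step, not the reduction, to be the crux: since only a fourth moment of $\lVert\Egrad{h}\rVert$ is assumed, a plain (bounded-summand) matrix Bernstein/Chernoff bound does not apply directly, and the truncation radius, the induced bias, and the probability of the tail event must be traded off so that the final dependence ($\delta^{-1}$ together with $(\log(4d/\delta))^2$ in the sample-size requirement) comes out as claimed; concretely it is the \emph{linear} term of matrix Bernstein, scaled by the truncation radius $\tau\asymp\sqrt{m/\delta}$, that produces both of these factors. As a softer alternative worth recording, one may avoid truncation entirely: $\E\lVert\widehat{\vA}_m-\vA\rVert_{\op}^2\le\E\operatorname{tr}\!\bigl((\widehat{\vA}_m-\vA)^2\bigr)=\frac{1}{m}\,\E\operatorname{tr}\!\bigl((\vY_1-\E\vY_1)^2\bigr)\le\deltamoment^4/m$ (using $\operatorname{tr}((\vbeta\vbeta^\T)^2)=\lVert\vbeta\rVert^4$ and independence), and Markov's inequality then already yields $\col(\ASGOP)=\minspace$ once $m\gtrsim\deltamoment^4\,\lambda_k(\ESGOP)^{-2}\,\delta^{-1}$ — a log-free strengthening of the corollary, at the cost of not reproducing the exact constant in the stated bound.
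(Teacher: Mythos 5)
Your proposal reconstructs the paper's argument: the paper invokes a concentration bound (\Cref{lemma:M_tilde}) giving $\lVert \ASGOP - \ESGOP \rVert_{\op} \le \sqrt{2}\deltamoment^2\,m^{-1/2}\delta^{-1/2}\log(4d/\delta)$ with probability $1-\delta$, then notes that the stated bound on $m$ forces this to be at most $\lambda_k(\ESGOP)/2$, so that $\lambda_k(\ASGOP) \ge \lambda_k(\ESGOP)/2 > 0$ by Weyl's inequality and $\col(\ASGOP)$ has full rank $k$. Your reduction to the $k\times k$ matrices $\widehat{\vA}_m$, $\vA$ via the isometry $\vU$, and the Weyl step, mirror this exactly, and your truncation-plus-matrix-Bernstein derivation (truncation level $\tau\asymp\deltamoment^2\sqrt{m/\delta}$ so that the Markov tail-event cost, the truncation bias, and the linear Bernstein term all trade off to give the $\delta^{-1/2}\log(4d/\delta)$ scaling) is precisely the kind of argument that yields the form of the paper's \Cref{lemma:M_tilde}; the only cosmetic difference is that you threshold at $\lambda_k(\ESGOP)$ rather than $\lambda_k(\ESGOP)/2$, which still gives positivity and hence full rank. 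Your closing observation is a genuine and worthwhile simplification: bounding $\E\lVert\widehat{\vA}_m-\vA\rVert_{\op}^2$ by $\deltamoment^4/m$ via the trace (using $\operatorname{tr}((\vbeta\vbeta^\T)^2)=\lVert\vbeta\rVert^4$ and independence) and applying Markov gives exhaustiveness of $\ASGOP$ once $m\gtrsim\deltamoment^4\lambda_k(\ESGOP)^{-2}\delta^{-1}$ with no logarithmic factors at all — a cleaner route to the corollary itself, though it does not reproduce the specific constant in the stated hypothesis, which the paper needs in order to reuse the same threshold inside the proof of \Cref{thm:generalrate}.
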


\begin{remark} 
\label{rmk:m_and_asgop}
The above corollary is important in that we can now focus on estimating the simpler object $\ASGOP$, which turns out to be exhaustive given just $m = O(1)$ locations  $\vtheta_j$'s. Such intuition leads to the estimator of the next section, which our main theorem relies upon  (\Cref{thm:generalrate}). Note that, estimating $\ESGOP$ itself appears to require more resources, namely, $m = O(n)$ (see \Cref{rmk:ESGOP}). 

\end{remark}

\subsection{Estimation of CMS}
\label{sec:estimation}
We now present a simple estimator $\estimator$ of the $\ASGOP$ from i.i.d. samples $\Data := \{ \vX_i, Y_i\}_{i =1}^n \sim P_{\vX, Y}$; let $\widehat \vU$ denote the top $k$ eigenvectors of $\estimator$, then $\col(\widehat{\vU})$ estimates $\minspace$. We assume access to the density $p$ of $P_{\vX}$. 

To this end, we first propose an unbiased estimator of the smoothed gradient $\vbeta_h(\shift)$ of the regression function $g$, by relying on the following Stein's Lemma: 

\begin{lemma}[Stein's Lemma, \cite{chen2011stein}]
    \label{lem:stein}
    Let $\vZ \sim \Normal(\shift, h^2 \vI_d)$ and suppose $g: \R^d \rightarrow \R$ is differentiable with $\E \lVert \nabla g(\vZ) \rVert < \infty $. We then have
    \begin{align*}
     \E [g(\vZ)(\vZ-\shift)] = h^2 \E [\nabla g (\vZ)]. 
    \end{align*}
\end{lemma}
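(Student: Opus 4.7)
The plan is to prove this by a standard integration-by-parts argument on the Gaussian density, exploiting the fact that the Gaussian score is (up to sign and scaling) the centered location.

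First, I would introduce the density $\phi_h(\vz;\shift) := (2\pi h^2)^{-d/2} \exp(-\|\vz-\shift\|^2/(2h^2))$ of $\vZ \sim \Normal(\shift, h^2 \vI_d)$, and observe the key gradient identity $\nabla_{\vz}\phi_h(\vz;\shift) = -h^{-2}(\vz-\shift)\phi_h(\vz;\shift)$, which rearranges to $(\vz-\shift)\phi_h(\vz;\shift) = -h^2\nabla_{\vz}\phi_h(\vz;\shift)$. Substituting this into the expectation on the left-hand side and applying integration by parts component-wise gives
\begin{equation*}
\E[g(\vZ)(\vZ-\shift)] = \int g(\vz)(\vz-\shift)\phi_h(\vz;\shift)\,d\vz = -h^2\int g(\vz)\nabla_{\vz}\phi_h(\vz;\shift)\,d\vz = h^2\int \nabla g(\vz)\phi_h(\vz;\shift)\,d\vz = h^2\E[\nabla g(\vZ)],
\end{equation*}
which is exactly the claimed identity.

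The main obstacle is the rigorous justification of the integration by parts, namely that the boundary terms vanish. For each coordinate $i$, one needs $g(\vz)\phi_h(\vz;\shift)\to 0$ as $|z_i|\to\infty$ with the other coordinates fixed, which is not directly implied by the stated condition $\E\|\nabla g(\vZ)\| < \infty$. The cleanest way around this is a truncation-plus-mollification argument: first verify the identity for smooth, compactly supported $g$, where integration by parts is elementary and no boundary terms arise; then approximate a general differentiable $g$ by such functions and pass to the limit by dominated convergence, using the hypothesis that $\nabla g$ is integrable against the Gaussian to control the right-hand side and the linear-in-$\vz$ weight against the Gaussian tail to control the left-hand side. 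An equivalent route, which I would likely prefer for brevity, is to invoke Fubini's theorem to reduce the $d$-dimensional identity to $d$ copies of the one-dimensional Stein identity on $\R$ (one per coordinate), each of which follows from a single absolutely continuous integration by parts against the univariate Gaussian.
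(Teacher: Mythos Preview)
Your proposal is correct and follows the standard integration-by-parts derivation of Stein's identity. Note, however, that the paper does not supply its own proof of this lemma: it is stated as a cited result from \cite{chen2011stein}, so there is no in-paper argument to compare against. Your sketch (Gaussian score identity, coordinatewise integration by parts, and an approximation/truncation argument to handle the boundary terms) is exactly the classical route and would serve as a self-contained proof if one were desired.
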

\begin{algorithm}[t!]
\caption{Estimation of the CMS $\minspace$}\label{alg:main}
\setstretch{1.35} \small 
\begin{algorithmic}[1]
    \vspace{.5em}
    \State \textbf{Input:} {$h > 0, \sigma_\theta > 0, m \in \mathbb{Z}$, dataset $\Data \subset \R^d \times \R$} 
    \State Randomly sample $\{\shift_{j}\}_{j=1}^{m}$ from $\Normal(\vzero_d, \sigma_\theta^2 \vI_d )$ 
    \State Split $\Data$ into  $\Data_{1}$, $\Data_{2}$, \ldots, and $\Data_{m}$ with equal sizes

    \State Split each $\Data_{j}$ into $\Data_{j, 1}$ and $\Data_{j, 2}$ with equal sizes
    
    \State $\forall j \in [m]$, let $\hat{\vbeta}_{j,1} \gets \hat{\vbeta}_h(\shift_{j}, \Data_{j, 1})$, and  $\hat{\vbeta}_{j,2} \gets \hat{\vbeta}_h(\shift_{j}, \Data_{j,2})$  \Comment As defined in \Cref{eq:t_LR_rw_me}.
  
    \State $\estimator \gets \frac{1}{2m}\sum_{j=1}^{m}(\hat{\vbeta}_{j,1}\hat{\vbeta}^{\T}_{j,2}+\hat{\vbeta}_{j,2}\hat{\vbeta}^{\T}_{j,1})$ 
    \State \textbf{Return:} {$\widehat{\vU} \in \R^{d \times k}$, the top-$k$ eigenvectors of $\estimator$}

\end{algorithmic}
\end{algorithm}
For intuition, notice that we have $\vbeta_h(\shift) = \E_{\vZ\sim\Normal(\vzero, h^2 \vI_d)}\nabla g(\vZ + \shift) = \E_{\vZ\sim\Normal(\shift,h^2 \vI_d)}\nabla g(\vZ)$.
 
Thus, suppose for a moment we were to sample $\vX \sim \Normal(\shift, h^2 \vI_d)$, then we would have 
$$\E Y \cdot (\vX-\shift) = \E g(\vX)\cdot (\vX - \shift) = h^2 \vbeta_h(\shift), $$
that is $h^{-2} Y \cdot (\vX-\shift) $ would be unbiased for $\beta_h (\shift)$. Our main insight therefore is to proceed by importance-weighting, that is, reweighting the estimator as 
$\rho_h(\vX; \shift) \cdot Y \cdot (\vX-\shift)$ so as to recover expectation under $\Normal(\shift, h^2 \vI_d)$. In particular, recalling $p$ the density of $P_{\vX}$, we let 
\begin{align}
    \label{eqn:density_ratio}
    \rho_h(\vx; {\shift}):= \frac{\varphi_h(\vx - \shift)}{p(\vx)}\,, \quad \text{where } \varphi_h(\cdot) \text{ denotes the density of $\Normal(\vzero_d, h^2 \vI_d)$ }. 
\end{align}

\paragraph{Estimating $\vbeta_h(\shift_j)$'s} Consider the $m$ values $\{\shift_j\}_{j \in [m]}$ drawn from $\Normal(\vzero_d, \sigma_{\theta}^2 \vI_d)$, upon which $\ASGOP$ is defined (\Cref{def:tildeM}). In order to evaluate $\vbeta_h(\shift_j)$'s, assume (without loss of generality) that the sample size $n$ is divisible by $2m$, and partition the dataset $\Data$ into independent $\{\Data_{j,\ell}\}_{j \in [m], \ell = 1,2}$ of equal size. We then define the following estimators: 
\begin{equation}        
  \label{eq:t_LR_rw_me}
     \hat{\vbeta}_h(\shift_j; \Data_{j, \ell}) := \frac{h^{-2}}{|\Data_{j, \ell}|}\sum_{(\vX, Y) \in \Data_{j, \ell}} \rho_h(\vX; \shift_j) \cdot Y \cdot (\vX-\shift_j).
\end{equation}
Thus, for each $\vtheta_j$, we have two independent estimates $\hat \vbeta_{j, \ell} := \hat{\vbeta}_h(\shift_j; \Data_{j, \ell})$, $\ell = 1, 2$. 
Each such estimate is unbiased, following the above intuition, as formalized below. 

\begin{proposition}
     We have $\E \hat{\vbeta}_{j, \ell} = {\vbeta}_h(\shift_j)$ for each $j \in [m]$ and $\ell = 1, 2.$ 
\end{proposition}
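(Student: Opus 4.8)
The plan is a direct computation, chaining together: (i) the i.i.d.\ structure of $\Data_{j,\ell}$, (ii) the tower property to replace $Y$ by $g(\vX)$, (iii) the cancellation of the density $p$ built into the importance weight $\rho_h$, and (iv) Stein's Lemma (\Cref{lem:stein}). Throughout, the locations $\{\shift_j\}_{j\in[m]}$ are drawn independently of $\Data$ (\Cref{alg:main}), so all expectations below are taken conditionally on $\{\shift_j\}$; equivalently, we may simply treat $\shift_j$ as a fixed vector and then the claimed identity $\E\,\hat\vbeta_{j,\ell} = \vbeta_h(\shift_j)$ is an identity of (random, $\shift_j$-measurable) vectors.

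First, by the definition \eqref{eq:t_LR_rw_me} and linearity, since every $(\vX,Y)\in\Data_{j,\ell}$ is an i.i.d.\ copy of $(\vX,Y)\sim P_{\vX,Y}$,
\[
  \E\,\hat\vbeta_{j,\ell} \;=\; h^{-2}\,\E_{(\vX,Y)\sim P_{\vX,Y}}\!\bigl[\rho_h(\vX;\shift_j)\,Y\,(\vX-\shift_j)\bigr].
\]
Conditioning on $\vX$ and using $\E[Y\mid\vX]=g(\vX)$ turns this into $h^{-2}\,\E_{\vX\sim P_{\vX}}[\rho_h(\vX;\shift_j)\,g(\vX)\,(\vX-\shift_j)]$. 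Now recall $\rho_h(\vx;\shift_j)=\varphi_h(\vx-\shift_j)/p(\vx)$, where $p>0$ everywhere by \Cref{assume:basic}\ref{assume:basic:support} (so $\rho_h$ is well defined); the density cancels, giving
\[
  \E_{\vX\sim P_{\vX}}\!\bigl[\rho_h(\vX;\shift_j)\,g(\vX)\,(\vX-\shift_j)\bigr]
  \;=\; \int_{\R^d}\varphi_h(\vx-\shift_j)\,g(\vx)\,(\vx-\shift_j)\,\dif\vx
  \;=\; \E_{\vZ\sim\Normal(\shift_j,h^2\vI_d)}\!\bigl[g(\vZ)(\vZ-\shift_j)\bigr].
\]
Applying \Cref{lem:stein} with $\vZ\sim\Normal(\shift_j,h^2\vI_d)$ yields $\E_{\vZ}[g(\vZ)(\vZ-\shift_j)] = h^2\,\E_{\vZ\sim\Normal(\shift_j,h^2\vI_d)}[\nabla g(\vZ)]$, and writing $\nabla g(\vx)=\vU\nabla f(\vU^\T\vx)$ and substituting $\vZ=\shift_j+\vZ'$ with $\vZ'\sim\Normal(\vzero_d,h^2\vI_d)$ identifies this last expectation with $\vU\,\Egrad{h}(\shift_j)=\vbeta_h(\shift_j)$. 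The factor $h^2$ cancels the $h^{-2}$, which is the claim.

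The only real point requiring care — the "main obstacle," though it is bookkeeping rather than a genuine difficulty — is the integrability needed to justify the tower-property/Fubini step and to invoke \Cref{lem:stein}: one must check $\E_{(\vX,Y)}[\rho_h(\vX;\shift_j)\,|Y|\,\lVert\vX-\shift_j\rVert]<\infty$, equivalently (after the cancellation) $\E_{\vZ\sim\Normal(\shift_j,h^2\vI_d)}[|g(\vZ)|\,\lVert\vZ-\shift_j\rVert]<\infty$, as well as the hypothesis $\E_{\vZ\sim\Normal(\shift_j,h^2\vI_d)}\lVert\nabla g(\vZ)\rVert<\infty$ of \Cref{lem:stein}. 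Both follow from the standing assumptions, since $g=f\circ\vU^\T$ with $f$ differentiable and square-integrable against the relevant Gaussian (as in \Cref{prop:exhaust}), together with the moment control on $\Egrad{h}$ from \Cref{assume:smooth_gradient}; Gaussian tails absorb the extra linear factor $\lVert\vZ-\shift_j\rVert$. Modulo verifying these finiteness conditions, the proof is exactly the displayed chain of equalities.
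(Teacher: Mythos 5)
Your proof is correct and follows essentially the same route the paper sketches just before the proposition: tower property, cancellation of $p$ in the importance weight, and Stein's Lemma (the paper just presents the Stein step first and the reweighting step second, which is an inessential reordering). Your added remarks on integrability are a sensible, if minor, tightening that the paper elides.
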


{%\color{red}
\begin{remark}
    The estimators defined in \Cref{eq:t_LR_rw_me} depend on the knowledge of the density ratio $\rho_h$. For the cases of unknown $\rho_h$, one may consider a plug-in estimator where $\rho_h$ is replaced by its estimates $\hat{\rho}_h$ that is calculated with another unlabeled dataset $\{\widetilde{\vX}_1, \ldots, \widetilde{\vX}_N\}$. Such a strategy works best when one has access to abundant unlabeled data while the budget for labeling is limited. For a more detailed discussion of the effect of not knowing $\rho_h$, please refer to \Cref{sec:unknown_density_ratio}.
\end{remark}
}
\begin{remark}
    Note that $P_{\vX}$ being fully supported is a necessary condition for importance reweighting to recover exactly the expectation under Gaussian measure by sampling from $P_{\vX}$. If $P_{\vX}$ has a bounded support, one can still approximate the expectation by choosing the variance parameter $h$ small, see \Cref{sec:bounded_support} for a detailed discussion.
\end{remark}

Notice that, by the independence of the two estimates of $\vbeta_h(\shift_j)$, we also immediately have that  
$\E[\hat{\vbeta}_{j,1} \hat{\vbeta}_{j,2}^{\T} \mid \shift_j] = {\vbeta}_h(\shift_j){\vbeta}_h(\shift_j)^{\T}$.
This leads to the estimator of $\ASGOP$ given below: 
\begin{align*}
    \estimator := \frac{1}{2m} \sum_{j=1}^m \left[\hat{\vbeta}_{j,1} \hat{\vbeta}_{j,2}^{\T} + \hat{\vbeta}_{j,2} \hat{\vbeta}_{j,1}^{\T}\right]. 
\end{align*}
The form of $\estimator$ ensures that it is symmetric. The full procedure is detailed in \Cref{alg:main}. 

\begin{remark}[Connection to Local Linear Regression]
\label{rmk:llr}
     Finally, we remark that the estimator $\hat{\vbeta}_h(\shift)$ is tightly related to a \emph{local linear regression estimator} as they can be shown to match asymptotically (see \Cref{sec:lle}). 
\end{remark}

% ------------------------ section 3.2 --------------------------
\subsection{Upper-bound on Estimation Error} 
  \label{sec:general-link}

We start with some further assumptions, some of which imply the various conditions laid out so far in \Cref{prop:exhaust} and \Cref{cor:exhaust}, as explained below. These various conditions are required for concentration arguments on $\hat{\vbeta}_h$ and $\estimator$. 

\begin{assumption} [Tail Conditions for Noise]
  \label{assume:design1}
   The noise has a subgaussian tail, i.e., $\exists C, \sigma_Y^2>0$  such that $\forall \vx \in \R^d$, {$\p\{|Y - \E[Y \mid \vX=\vx]| > t \mid \vX =\vx\} \leq C \cdot \exp(-t^2/(2\sigma_{Y}^2))$}. 
\end{assumption}

\begin{assumption}[Moment Condition on Density Ratio]
    \label{assume:design2}
    The density ratio $\rho_h(\vX; \vzero_d)$, as defined in \cref{eqn:density_ratio}, satisfies: 
    \begin{align*}
        \ratiomoment := \left(\E_{\vX\sim P_{\vX}}\left[\rho_h(\vX; \vzero_d)^5\right]\right)^{1/5} < \infty\,.
    \end{align*}
\end{assumption}
\begin{remark}
    \label{rmk:choice_of_h}
    \Cref{assume:design2} holds when $P_{\vX}$ has a heavier tail than $\Normal(\vzero_d, h^2\vI_d)$, since $\rho_h(\vX; \vzero_d)$ is then bounded. Thus, intuitively, lighter tail $P_{\vX}$ are admitted by smaller choices of $h$. This is illustrated in the following proposition. 
\end{remark}

\begin{proposition}[Example of $\ratiomoment$ and choice of  $h$]
    \label{prop:var}    
    Suppose $P_{\vX} = \Normal(\vzero_d, \sigma
    ^2\vI_d)$. Then,

    \[\mu_{\rho} =
    \begin{cases}
        \left(\frac{5h^8}{\sigma^{8}} - \frac{4h^{10}}{\sigma^{10}}\right)^{-d/10}  \,, & \text{ when } 0 < h < \sqrt{5}\sigma/2  \\
         \infty\,, &  \text{ when } h \ge \sqrt{5}\sigma/2
    \end{cases}.
    \]
\end{proposition}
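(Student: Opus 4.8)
The plan is to compute the fifth moment $\E_{\vX\sim P_{\vX}}[\rho_h(\vX;\vzero_d)^5]$ directly, exploiting the fact that both $P_{\vX}=\Normal(\vzero_d,\sigma^2\vI_d)$ and the smoothing kernel $\varphi_h=\Normal(\vzero_d,h^2\vI_d)$ are Gaussian with diagonal covariance, so the whole integral factorizes over the $d$ coordinates and it suffices to handle the one-dimensional case and raise the result to the $d$-th power. Writing $\rho_h(\vx;\vzero_d)=\varphi_h(\vx)/p(\vx)$ with $p$ the $\Normal(\vzero_d,\sigma^2\vI_d)$ density, in one coordinate we have $\varphi_h(x)/p(x) = (\sigma/h)\exp\!\big(-\tfrac{x^2}{2h^2}+\tfrac{x^2}{2\sigma^2}\big)$, so that $\rho_h(x;0)^5 = (\sigma/h)^5 \exp\!\big(x^2\big(\tfrac{5}{2\sigma^2}-\tfrac{5}{2h^2}\big)\big)$. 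Then $\E_{x\sim\Normal(0,\sigma^2)}[\rho_h(x;0)^5]$ is a Gaussian integral of the form $\int e^{-ax^2}\,dx$ times constants, which converges iff the net quadratic coefficient is positive.

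Concretely, I would first write out $\E_{x\sim\Normal(0,\sigma^2)}[\rho_h(x;0)^5] = \frac{(\sigma/h)^5}{\sqrt{2\pi\sigma^2}}\int_{\R} \exp\!\Big(-\tfrac{x^2}{2}\big(\tfrac{1}{\sigma^2}-\tfrac{5}{\sigma^2}+\tfrac{5}{h^2}\big)\Big)\,dx$, collect the coefficient as $\tfrac12\big(\tfrac{5}{h^2}-\tfrac{4}{\sigma^2}\big)$, and observe this is positive iff $h^2<\tfrac54\sigma^2$, i.e. $h<\sqrt5\,\sigma/2$; for $h\ge\sqrt5\,\sigma/2$ the integrand does not decay and the expectation is $+\infty$, matching the second case. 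In the convergent regime, evaluating the Gaussian integral gives $\E_{x}[\rho_h(x;0)^5] = (\sigma/h)^5 \cdot \big(\sigma^2(\tfrac{5}{h^2}-\tfrac{4}{\sigma^2})\big)^{-1/2} = (\sigma/h)^5\big(\tfrac{5\sigma^2}{h^2}-4\big)^{-1/2}$, which after multiplying numerator and denominator inside by appropriate powers of $h/\sigma$ can be rewritten as $\big(\tfrac{5h^8}{\sigma^8}-\tfrac{4h^{10}}{\sigma^{10}}\big)^{-1/2}$; one checks $(\sigma/h)^{10}(\tfrac{5\sigma^2}{h^2}-4)^{-1} = (\tfrac{5\sigma^2}{h^2}-4)^{-1}(h/\sigma)^{-10} = \big(\tfrac{5h^8}{\sigma^8}-\tfrac{4h^{10}}{\sigma^{10}}\big)^{-1}$, so the one-dimensional fifth moment equals $\big(\tfrac{5h^8}{\sigma^8}-\tfrac{4h^{10}}{\sigma^{10}}\big)^{-1/2}$. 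Raising to the $d$-th power by independence of coordinates gives $\E_{\vX}[\rho_h(\vX;\vzero_d)^5] = \big(\tfrac{5h^8}{\sigma^8}-\tfrac{4h^{10}}{\sigma^{10}}\big)^{-d/2}$, and finally $\mu_\rho$ is the fifth root of this, yielding the exponent $-d/10$ as claimed.

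This argument is essentially a bookkeeping exercise in Gaussian integrals, so I do not anticipate a genuine obstacle; the only place demanding care is tracking the algebra of the exponents — in particular, confirming that the net quadratic coefficient is $\tfrac12(\tfrac{5}{h^2}-\tfrac{4}{\sigma^2})$ (the $\tfrac{1}{\sigma^2}$ from the base density, minus $\tfrac{5}{\sigma^2}$ from the fifth power of the $e^{x^2/2\sigma^2}$ factor, plus $\tfrac{5}{h^2}$ from the fifth power of the $e^{-x^2/2h^2}$ factor) and then simplifying $(\sigma/h)^{10}(\tfrac{5\sigma^2}{h^2}-4)^{-1}$ into the stated closed form. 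I would present the one-dimensional computation in a short display, note that the threshold $h<\sqrt5\,\sigma/2$ is exactly where convergence holds, invoke the tensorization $p=\NormaldDens$ and $\varphi_h$ product structure to pass from $d=1$ to general $d$, and take the fifth root.
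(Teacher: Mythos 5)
Your computation is correct and is the natural (essentially only) way to prove this statement: express the density ratio in one coordinate as $\rho_h(x;0) = (\sigma/h)\exp\big(\tfrac{x^2}{2\sigma^2}-\tfrac{x^2}{2h^2}\big)$, raise to the fifth power, take the expectation over $\Normal(0,\sigma^2)$, observe that the net quadratic coefficient $\tfrac12\big(\tfrac{5}{h^2}-\tfrac{4}{\sigma^2}\big)$ is positive exactly when $h<\sqrt{5}\sigma/2$, evaluate the Gaussian integral in the convergent regime, tensorize over the $d$ independent coordinates, and take the fifth root. I checked the algebra: the one-dimensional fifth moment simplifies to $\tfrac{\sigma^4}{h^4}\big(5-\tfrac{4h^2}{\sigma^2}\big)^{-1/2} = \big(\tfrac{5h^8}{\sigma^8}-\tfrac{4h^{10}}{\sigma^{10}}\big)^{-1/2}$, and raising to the $d$-th power and taking the fifth root indeed gives the claimed exponent $-d/10$. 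This matches the paper's approach; there is nothing to improve beyond cleaning up the prose into a short displayed computation.
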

    
Next, we require some additional moment conditions on the link function $f$. 

\begin{assumption} [Moment Condition on Link Function]
  \label{assume:link}
    The link function $f$ satisfies:
    \begin{align*}
        \linkmoment := \left(\E_{\vX \sim P_{\vX}}[f(\vU^{\T} \vX)^{6}]\right)^{1/6} < \infty\, .
    \end{align*} 
\end{assumption}

\begin{remark}
    We note that the condition of \Cref{prop:exhaust} (exhaustiveness of $\ESGOP$), i.e., $f$ is square integrable with respect to $\Normal(\vzero_k, h^2I_k)$, holds under \Cref{assume:design2} and \ref{assume:link}. 

\end{remark}

The following result establishes an upper-bound of order $O(n^{-1/2})$ on the estimation error for the CMS $\minspace$, and holds for general link functions $f$ satisfying the minor moment conditions discussed above. In particular, this main result allows nonparametric, i.e., highly complex link and regression functions $f$ and $g$, and yet admits fast convergence to $\minspace$. 

\begin{theorem} [Main Result]
  \label{thm:generalrate}
   Let $\Data$ be a random sample of size $n$ from $P_{\vX, Y}$, $h, \delta > 0$, ${16 \deltamoment^4} {\lambda_k(\ESGOP)^{-2}} \delta^{-1} (\log\left({8d}/{\delta}\right))^2 \le m \le n/(2d)$ and $0 < \sigma_{\theta} < {h}/\sqrt{20}$. Let  $\widehat{\vU}$ be the output from \Cref{alg:main} with inputs $(h, \sigma_\theta, m, \Data)$. 
   Under \Cref{assume:basic}-\ref{assume:link}, there exists some absolute constant $C_1>0$, such that with probability at least $1-\delta$,

    \begin{align} \label{eqn:UEstimation}
        &d(\widehat{\vU}, \vU) \le C_1 \Bigg[{C_{h, \sigma_{\theta},1} \cdot {d(\linkmoment + \sigma_Y)^2} } \cdot \frac{\sqrt{m}}{n} + {C_{h, \sigma_{\theta},2} \cdot {d^{1/2}(\linkmoment + \sigma_Y)}} \cdot \frac{1}{\sqrt{n}} \  
         \Bigg] \cdot \frac{\log({8d}/{{\delta}})}{\sqrt\delta}\,,
    \end{align}
    where $C_{h, \sigma_{\theta},1}$ and $C_{h, \sigma_{\theta},2}$ are given by:
    \begin{align}
    \label{eqn:constant_h_sigma}
        C_{h, \sigma_{\theta},1} &:= (1-20\sigma_{\theta}^2/h^2)^{-d/2} \cdot {\ratiomoment^{5/3}} \cdot (\lambda_k(\ESGOP))^{-1}, \text{ and } \nonumber \\
        C_{h, \sigma_{\theta},2} &:= (1-10\sigma_{\theta}^2/h^2)^{-d/2} \cdot \deltamoment \cdot \ratiomoment^{5/6} \cdot (\lambda_k(\ESGOP))^{-1}.
    \end{align}
\end{theorem}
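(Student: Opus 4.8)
The plan is to proceed via a standard perturbation-theoretic route: control $\lVert \estimator - \ESGOP \rVert_{\op}$ with high probability, invoke a Davis--Kahan $\sin\Theta$ theorem to convert this operator-norm error into a bound on $d(\widehat{\vU}, \vU)$ via division by the eigengap $\lambda_k(\ESGOP)$ (recall $\ESGOP$ has rank exactly $k$ by \Cref{prop:exhaust}, so $\lambda_{k+1}(\ESGOP) = 0$ and the gap is $\lambda_k(\ESGOP)$), and then decompose $\lVert \estimator - \ESGOP \rVert_{\op}$ into two pieces: (i) $\lVert \ASGOP - \ESGOP \rVert_{\op}$, the error from approximating the $\shift$-expectation by $m$ samples, already controlled in the proof of \Cref{cor:exhaust} by a matrix concentration argument and contributing something like $\deltamoment^2 (\log(d/\delta))^2 / \sqrt{m\delta}$; and (ii) $\lVert \estimator - \ASGOP \rVert_{\op}$, the sampling error in estimating the $m$ smoothed gradients $\vbeta_h(\shift_j)$ from the data $\Data$. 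The first piece is essentially done; the substance is in piece (ii).

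For piece (ii), I would write $\estimator - \ASGOP = \frac{1}{2m}\sum_j [(\hat\vbeta_{j,1} - \vbeta_h(\shift_j))\hat\vbeta_{j,2}^\T + \vbeta_h(\shift_j)(\hat\vbeta_{j,2} - \vbeta_h(\shift_j))^\T + (\text{symmetric terms})]$ and bound each summand using $\lVert \hat\vbeta_{j,\ell} - \vbeta_h(\shift_j)\rVert$. The key lemma (surely proved in the supplement) is a concentration bound for the single-location estimator $\hat\vbeta_h(\shift_j; \Data_{j,\ell})$ defined in \eqref{eq:t_LR_rw_me}: it is an average over $|\Data_{j,\ell}| = n/(2m)$ i.i.d.\ terms $h^{-2}\rho_h(\vX;\shift_j) Y (\vX - \shift_j)$, which is unbiased for $\vbeta_h(\shift_j)$ by the preceding proposition. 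I would split $Y = g(\vX) + \varepsilon$ and bound the moments of each vector term: the $\rho_h \cdot g(\vX) \cdot (\vX - \shift_j)$ contribution is controlled by H\"older's inequality using $\ratiomoment$ (the fifth-moment bound on the density ratio, \Cref{assume:design2}), $\linkmoment$ (the sixth-moment bound on $f$, \Cref{assume:link}), and Gaussian moments of $(\vX - \shift_j)$ after the change of measure; the noise contribution $\rho_h \cdot \varepsilon \cdot (\vX - \shift_j)$ uses the subgaussian tail $\sigma_Y$ from \Cref{assume:design1}. The $(1 - c\sigma_\theta^2/h^2)^{-d/2}$ factors in \eqref{eqn:constant_h_sigma} arise precisely from integrating the Gaussian density ratio $\rho_h(\vx;\shift_j)$ over the randomness of $\shift_j \sim \Normal(\vzero_d, \sigma_\theta^2 \vI_d)$ against another Gaussian, i.e., from bounding $\E_{\shift_j}[\ratiomoment(\shift_j)^{\text{power}}]$ where $\ratiomoment(\shift_j)$ shifts with $\shift_j$; the condition $\sigma_\theta < h/\sqrt{20}$ is exactly what keeps these integrals finite (cf.\ the threshold in \Cref{prop:var}). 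Then a Chebyshev/Markov bound (explaining the $\sqrt\delta$ rather than $\log(1/\delta)$ dependence) on the variance of the vector average gives $\lVert \hat\vbeta_{j,\ell} - \vbeta_h(\shift_j)\rVert \lesssim \sqrt{m/n}\cdot(\linkmoment + \sigma_Y)\cdot(\text{density-ratio and dimension factors})$ with the stated probability, after a union bound over $j \in [m]$ and $\ell = 1,2$.

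Assembling: the cross terms $\vbeta_h(\shift_j)(\hat\vbeta_{j,\ell} - \vbeta_h(\shift_j))^\T$ contribute $\lVert \vbeta_h \rVert \cdot \sqrt{m/n}$, which after a $\sqrt d$-type bound on $\lVert \vbeta_h\rVert$ (from $\deltamoment$) gives the $d^{1/2}(\linkmoment + \sigma_Y) n^{-1/2}$ term with prefactor $C_{h,\sigma_\theta,2}$; the quadratic terms $(\hat\vbeta_{j,1} - \vbeta_h(\shift_j))(\hat\vbeta_{j,2} - \vbeta_h(\shift_j))^\T$ contribute $(m/n) \cdot d(\linkmoment+\sigma_Y)^2$, but since we carry the sum $\frac1{2m}\sum_j$ and each $\lVert\cdot\rVert^2 \lesssim (m/n)d(\cdots)^2$, this becomes $\sqrt m/n \cdot d(\cdots)^2$ after re-balancing the probability budget — matching the $C_{h,\sigma_\theta,1} \cdot d(\linkmoment+\sigma_Y)^2 \sqrt m/n$ term. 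Dividing the total by $\lambda_k(\ESGOP)$ (appearing in both $C_{h,\sigma_\theta,1}$ and $C_{h,\sigma_\theta,2}$) via Davis--Kahan yields \eqref{eqn:UEstimation}. The constraint $m \le n/(2d)$ ensures $|\Data_{j,\ell}| = n/(2m) \ge d$, which the per-location concentration needs.

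The main obstacle I anticipate is the careful bookkeeping of the interacting moment parameters through the change of measure: one must simultaneously (a) apply Stein/importance-weighting correctly so biases vanish, (b) track how the density-ratio moment $\ratiomoment$ (defined at $\shift = \vzero_d$) transforms when the base point is a random $\shift_j$, producing the $(1-c\sigma_\theta^2/h^2)^{-d/2}$ blow-up and the $\ratiomoment^{5/3}$ vs.\ $\ratiomoment^{5/6}$ exponents from different H\"older splits, and (c) keep the dimension dependence tight ($d$ vs.\ $d^{1/2}$ in the two terms), since a loose bound here would destroy the polynomial-in-$d$ prefactor that \Cref{cor:poly-rate} relies on. Everything else is a routine, if lengthy, composition of H\"older's inequality, Gaussian moment computations, matrix Bernstein/Chebyshev, and Davis--Kahan.
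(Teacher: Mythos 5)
There is a genuine structural gap in your top-level decomposition, and it is one the paper itself flags in \Cref{rmk:ESGOP}. You propose to control $\lVert \estimator - \ESGOP \rVert_{\op}$ via the triangle inequality
$\lVert \estimator - \ESGOP \rVert_{\op} \le \lVert \estimator - \ASGOP \rVert_{\op} + \lVert \ASGOP - \ESGOP \rVert_{\op}$,
then apply Davis--Kahan with $\ESGOP$ as the reference. But the second piece, which by \Cref{lemma:M_tilde} scales as $\deltamoment^2 (\log(d/\delta))^2/\sqrt{m\delta}$, is $O(1/\sqrt{m})$, not $O(1/\sqrt{n})$. The theorem's hypothesis allows $m$ as small as ${16 \deltamoment^4} {\lambda_k(\ESGOP)^{-2}} \delta^{-1} (\log\left({8d}/{\delta}\right))^2$, which for fixed $d,\delta$ is a constant; in that regime your $\lVert \ASGOP - \ESGOP \rVert_{\op}$ term is $O(1)$ and does not vanish as $n \to \infty$. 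Consequently, your approach cannot reproduce the bound \eqref{eqn:UEstimation}, which contains only $\sqrt{m}/n$ and $1/\sqrt{n}$ terms and no $1/\sqrt{m}$ term. (The two pieces only coincide in order if one forces $m \asymp n/d$, but the theorem must hold across the full stated range of $m$.)

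The paper sidesteps this by applying Davis--Kahan with $\ASGOP$, not $\ESGOP$, as the reference matrix. The crucial observation is that one does not need $\ASGOP$ to be numerically close to $\ESGOP$ — one only needs $\ASGOP$ to have the correct column space. \Cref{cor:exhaust} guarantees exactly this: on the event $\xi_1$, $\col(\ASGOP) = \minspace$, and Weyl's inequality gives $\lambda_k(\ASGOP) \ge \lambda_k(\ESGOP)/2$. Since the top-$k$ eigenvectors of $\ASGOP$ then span $\minspace$ exactly, Davis--Kahan comparing $\estimator$ to $\ASGOP$ already bounds $d(\widehat{\vU},\vU)$, and the error term is the single quantity $\lVert \estimator - \ASGOP \rVert_{\op}$ from \Cref{lemma:hat_M}, with no residual $1/\sqrt{m}$. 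Ensuring exhaustiveness of $\ASGOP$ requires only concentration of its $k$-th eigenvalue to within $\lambda_k(\ESGOP)/2$, a one-sided and much cheaper demand than making $\lVert \ASGOP - \ESGOP\rVert_{\op}$ itself small, which is why a modest $m = O(1)$ suffices. Your treatment of piece (ii) — splitting $Y = g(\vX) + \varepsilon$, H\"older with $\ratiomoment$ and $\linkmoment$, subgaussian noise, the Gaussian integrals producing $(1 - c\sigma_\theta^2/h^2)^{-d/2}$ under the constraint $\sigma_\theta < h/\sqrt{20}$, and the Chebyshev-style $\sqrt{\delta}$ dependence — is essentially the content of \Cref{lemma:hat_M} and is on the right track; it is the outer perturbation argument that needs to change.
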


\begin{remark}[Constants in Rate] \label{rmk:no_sqrt_m} 
    Since the term $\sqrt{m}/n \le 1/\sqrt{n}$, 
    \Cref{thm:generalrate} establishes that $\minspace$ can be estimated at a parametric rate $C_d \cdot n^{-1/2}$, for a constant 
    $$C_d \leq C \cdot \max\{ C_{h, \sigma_{\theta},1} \cdot d, \ C_{h, \sigma_{\theta},2}  \cdot d^{1/2} \}.$$ 
    This prefactor $C_d$ depends indirectly on {the condition number $\lambda_1/\lambda_k$} of $\ESGOP$ since the largest eigenvalue $\lambda_1(\ESGOP)$ could be bounded 
    by some of the other moment parameters appearing in $C_d$; for instance  
    it is easy to see that $\lambda_1(\ESGOP) 
    \leq \E_{\shift \sim \Normal(\vzero_d, \sigma_\theta^2 \vI_d)}\left\lVert\Egrad{h}\left({\shift}\right)\right\rVert^{2} \leq  \deltamoment^2 $. For a more explicit dependence, see 
    \Cref{rmk:largest_eigen} relating to the proof of \Cref{lem:beta-error}. 
    
    Clearly, the various moment parameters, i.e., $\linkmoment,\deltamoment, \ratiomoment$ appearing in $C_d$ depend tightly on $P_{X, Y}$, and most importantly on the behavior of the link function $f$. In \Cref{sec:poly} below we consider explicit polynomial conditions on $f$ along with the example of {Gaussian $P_{\vX}$}, and illustrate $C_d$ via explicit bounds on moment conditions. 
\end{remark} 

In the above result, while the lower-bound on $m$ was required for the exhaustiveness of $\ASGOP$ (\Cref{cor:exhaust}), the upper-bound $m\leq n/(2d)$ ensures that the datasets used to estimate smoothed gradients $\Egrad{h}(\shift_j)$ are not empty.  

\begin{remark}[Estimating $\ESGOP$] \label{rmk:ESGOP}
As it appears, estimation of $\ESGOP$ may require larger values of $m$ than required in the above main theorem for estimation of $\minspace$. For instance consider upper-bounding 
$$\lVert \estimator - \ESGOP \rVert_{\op} \le \lVert \estimator - \ASGOP \rVert_{\op} + \lVert \ASGOP - \ESGOP \rVert_{\op}.$$ The second term $\lVert \ASGOP - \ESGOP \rVert_{\op}$ would necessarily introduce an $O(1/\sqrt{m})$ term which is of order $O(n^{-1/2})$ only for large $m = \Omega(n)$.

\end{remark}

    \begin{remark}[Dependence on $\delta$]
        %\color{red} 
        The dependence on the failure probability $\delta$ is of order $\tilde{O}(\delta^{-1/2})$ which may appear large in light of usual high-probability results. This is due to the fact that we make minimal distributional assumptions, which in particular result in lower order moment conditions on 
        the random matrices $\hat{\vbeta}_{j,1}\hat{\vbeta}_{j,2}^{\T}$, preventing exponential concentration guarantees. 
        Nonetheless, heavy tail approaches such as \emph{median-of-means} (see \Cref{sec:mom}) may be employed to improve such dependence on $\delta$. 
    \end{remark}

% ------------------------ section 3.3 --------------------------
\subsection{Instantiation for Polynomial Links} 
  \label{sec:poly}

In this section, we instantiate our main \Cref{thm:generalrate} under the assumptions of a polynomial link $f$ and standard Gaussian $P_X$. Such assumptions have been considered extensively in the recent literature on CMS estimation (see e.g. \cite{dudeja2018learning,chen2020learning,damian2022neural}) and are formalized below.  

\begin{assumption} 
\label{assume:poly}
We consider the following conditions on $P_{X,Y}$: 
\begin{enumerate}[label=(\arabic*)]
\item The marginal density $P_X$ is $\Normal(\vzero_d, \vI_d)$\,;
\item The link function $f$ is a polynomial of degree at most $r$. Furthermore, we assume w.l.o.g that $f$ is normalized, i.e., $\E_{\vZ \sim \Normal(\vzero_k, \vI_k)}[f(\vZ)^{2}] = 1$\,;
\item \Cref{assume:design1} holds, namely the noise is subgaussian with parameter $\sigma_Y^2 > 0$\,.
\end{enumerate}
\end{assumption}

\Cref{assume:poly} above supersedes all previous \Cref{assume:basic}-\ref{assume:link}
as will be evident in the proof of \Cref{cor:poly-rate} ({see Section~\ref{sec:proof_poly}}). 
In particular, as per \Cref{rmk:no_sqrt_m}, we are interested in understanding the constant $C_d$ in the rate of \Cref{thm:generalrate}, by explicitly bounding all the various moment parameters $\linkmoment, \ratiomoment, \deltamoment$ involved.
 
We adopt the following notion of minimum signal strength from \cite{dudeja2018learning}, which will serve to characterize $\lambda_k(\ESGOP)$.

\begin{definition} [Minimum Signal Strength]
  \label{def:minimum-signal-strength}
  Let $\tau>0$ be define as: $$\tau := \min_{\veta \in \mathbb{S}^{k-1}} \ \E_{\vZ\sim\Normal(\vzero_k, \vI_k)}\left[\left(\veta^{\T} \nabla f(\vZ)\right)^{2}\right]\,,$$
  where $\mathbb{S}^{k-1} \doteq \{\vx \in \R^k: \lVert \vx \rVert = 1\}$ is the sphere of the $k$-dimensional unit ball. 
\end{definition}

\begin{remark}
    The fact that $\tau \neq 0$ is guaranteed by the minimality of $\minspace$, since otherwise, for some $\veta \in \mathbb{S}^{k-1}$, $\vU \veta \in \minspace$ would be an irrelevant direction for the regression function $g$.  
\end{remark}

The following corollary establishes an upper-bound on subspace estimation error, with an explicit dependence on $d, h$ and $\sigma_{\theta}$.

\begin{corollary} 
  \label{cor:poly-rate}
 Suppose that \Cref{assume:poly} holds. 
 
 Let $\widehat{\vU}$ be the output from \Cref{alg:main} with inputs $0 < h < \sqrt{5}/2$, $0 < \sigma_{\theta} < h /\sqrt{20}$ and $m = n/(2d)$. Then, there exists $C_2 > 0$, independent of $n$, $d$, $\delta$, $h$ and $\sigma_{\theta}$, such that with probability at least $1 - \delta$, 
\begin{equation*}
   d(\widehat{\vU}, \vU) \le C_2  \cdot A_{\sigma_{\theta}^2/h^2, d,r}\cdot B_{h,d,r}  \cdot \frac{d}{\tau\sqrt{n}} \cdot \frac{\log({8d}/{{\delta}})}{\sqrt\delta}\,.
\end{equation*}
where $A_{\sigma_{\theta}^2/h^2, d,r}, B_{h,d,r}$ are given as:
\begin{align*}
    A_{\sigma_{\theta}^2/h^2, d,r}&:= {\left(\sigma_{\theta}^2/h^2\right)^{-(r-1)}\cdot(1-20\sigma_{\theta}^2/h^2)^{-d/2}}\,; \\
    B_{h, d, r}&:=  \max\{h^{-2r+1},h^{r-1}\} \cdot {(5h^{8} - 4h^{10})^{-d/6}}\,. 
\end{align*}
\end{corollary}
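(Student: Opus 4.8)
The plan is to obtain \Cref{cor:poly-rate} as a direct instantiation of \Cref{thm:generalrate}: under \Cref{assume:poly} I first verify that \Cref{assume:basic}--\ref{assume:link} (and hence the hypotheses of \Cref{prop:exhaust} and \Cref{cor:exhaust}) all hold, then bound each distribution-dependent quantity --- $\linkmoment$, $\ratiomoment$, $\deltamoment$, $\sigma_Y$, and $\lambda_k(\ESGOP)$ --- that appears in \eqref{eqn:UEstimation}--\eqref{eqn:constant_h_sigma} explicitly in terms of $d, h, \sigma_\theta, r, \tau$, and finally substitute $m = n/(2d)$ and rearrange the resulting expression into the stated factors $A_{\sigma_\theta^2/h^2, d, r}$ and $B_{h, d, r}$.

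The assumption checks are quick. A polynomial $f$ is differentiable, so the differentiability part of \Cref{assume:basic} holds; the $\Normal(\vzero_d, \vI_d)$ density is everywhere positive, so its full-support part holds; \Cref{assume:design1} is \Cref{assume:poly}(3); \Cref{assume:design2} holds with the explicit value $\ratiomoment = (5h^8 - 4h^{10})^{-d/10} < \infty$ for $h < \sqrt 5 / 2$, via \Cref{prop:var} with $\sigma = 1$; and \Cref{assume:link} holds because a polynomial has all Gaussian moments, with the quantitative bound $\linkmoment = \lVert f \rVert_{L^6(\Normal(\vzero_k, \vI_k))} \le 5^{r/2} \lVert f \rVert_{L^2(\Normal(\vzero_k, \vI_k))} = 5^{r/2}$ from Gaussian hypercontractivity together with the degree and normalization conditions in \Cref{assume:poly}(2). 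Finiteness of $\deltamoment$ is \Cref{lem:poly}(ii), and $m = n/(2d)$ satisfies the lower bound on $m$ demanded by \Cref{thm:generalrate} once $n$ is large enough relative to $d, \delta$ and the moment parameters, which I treat as implicit (it only affects the regime in which the statement is nonvacuous, not the displayed rate). It remains to control $\deltamoment$ and $\lambda_k(\ESGOP)$ explicitly.

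This is the substantive step and is exactly where \Cref{lem:poly} is used. Write $\Egrad{h}(\shift) = (P_{h^2}\nabla f)(\vU^\T \shift)$, where $P_t \psi(\vx) := \E_{\vW \sim \Normal(\vzero, t\vI)}[\psi(\vx + \vW)]$ is the Gaussian-smoothing semigroup (here using $\vU^\T \vZ \sim \Normal(\vzero_k, h^2 \vI_k)$). Expanding $f$ in the Hermite basis of $\Normal(\vzero_k, \vI_k)$, differentiation lowers each multi-degree by one while the normalization $\E_{\Normal(\vzero_k, \vI_k)}[f^2] = 1$ controls the coefficients, and $P_{h^2}$ maps each degree-$j$ Hermite level of $\nabla f$ to an explicit real polynomial of the same degree; evaluating at $\vU^\T \shift \sim \Normal(\vzero_k, \sigma_\theta^2 \vI_k)$ then attaches an explicit polynomial-in-$(h, \sigma_\theta)$ factor to each coefficient. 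Since $\lVert \Egrad{h}(\shift) \rVert^2$ is a nonnegative polynomial of degree $\le 2(r-1)$ in $\shift$, Gaussian hypercontractivity gives $\deltamoment^2 \le 3^{r-1}\, \E_{\shift \sim \Normal(\vzero_d, \sigma_\theta^2 \vI_d)} \lVert \Egrad{h}(\shift) \rVert^2$, and the Hermite computation evaluates the right-hand side, producing a bound on $\deltamoment$ that is an $r$-dependent power of $h$ times $(5h^8 - 4h^{10})^{-d/12}$ (so that the $d$-exponential part matches $\ratiomoment^{5/6}$). For the lower bound, $\lambda_k(\ESGOP) = \min_{\veta \in \mathbb{S}^{k-1}} \lVert P_{h^2}(\veta^\T \nabla f) \rVert_{L^2(\Normal(\vzero_k, \sigma_\theta^2 \vI_k))}^2$; using that $P_{h^2}$ is invertible on polynomials of degree $\le r - 1$ and comparing $L^2$-norms under $\Normal(\vzero_k, \vI_k)$ and $\Normal(\vzero_k, \sigma_\theta^2 \vI_k)$ on this finite-dimensional space, the minimal-signal condition $\lVert \veta^\T \nabla f \rVert_{L^2(\Normal(\vzero_k, \vI_k))}^2 \ge \tau$ (\Cref{def:minimum-signal-strength}) propagates to $\lambda_k(\ESGOP) \ge c(r) \cdot (\sigma_\theta^2 / h^2)^{r-1} \cdot \tau$ up to further $r$-dependent powers of $h$ that will be absorbed into $B_{h,d,r}$.

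Finally the bookkeeping: with $m = n/(2d)$ we have $\sqrt m / n = (2dn)^{-1/2}$, so both bracketed terms in \eqref{eqn:UEstimation} are of order $\sqrt d / \sqrt n$ times the constants $C_{h,\sigma_\theta,1}(\linkmoment + \sigma_Y)^2$ and $C_{h,\sigma_\theta,2}(\linkmoment + \sigma_Y)$. Bounding $\sqrt d \le d$, factoring out $\lambda_k(\ESGOP)^{-1} \lesssim \tau^{-1}(\sigma_\theta^2/h^2)^{-(r-1)}$ (with $r$-dependent $h$-powers carried along), absorbing $\linkmoment \le 5^{r/2}$ and $\sigma_Y$ into a constant $C_2 = C_2(r, \sigma_Y)$, inserting $\ratiomoment^{5/3} = (5h^8 - 4h^{10})^{-d/6}$ and $\ratiomoment^{5/6} = (5h^8 - 4h^{10})^{-d/12}$, and noting that $0 < 5h^8 - 4h^{10} \le 1$ and $1 - 10\sigma_\theta^2/h^2 \ge 1 - 20\sigma_\theta^2/h^2 > 0$ on the admissible range of $h, \sigma_\theta$ (so the $C_{h,\sigma_\theta,1}$-term dominates the $d$-exponential prefactor), the two contributions collapse into $C_2 \cdot A_{\sigma_\theta^2/h^2, d, r} \cdot B_{h,d,r} \cdot \frac{d}{\tau \sqrt n} \cdot \frac{\log(8d/\delta)}{\sqrt\delta}$, as claimed. \textbf{The main obstacle} is the third paragraph: extracting the explicit, correct-in-$d$ bounds on $\deltamoment$ and $\lambda_k(\ESGOP)$ --- i.e.\ quantifying how the two Gaussian smoothings at scales $h$ and $\sigma_\theta$ reshape the Hermite spectrum of $\nabla f$ --- and verifying that no hidden polynomial-in-$d$ factor enters beyond the explicit exponential ones; the remaining steps are either one-line assumption checks or routine (if slightly fiddly) algebraic rearrangement.
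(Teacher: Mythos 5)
Your proposal follows the same route as the paper: verify that \Cref{assume:poly} implies \Cref{assume:basic}--\ref{assume:link}, plug the explicit bounds on $\ratiomoment$, $\linkmoment$, $\deltamoment$, and $\lambda_k(\ESGOP)$ from \Cref{lem:poly} into the constants $C_{h,\sigma_\theta,1}$, $C_{h,\sigma_\theta,2}$ of \Cref{thm:generalrate}, substitute $m = n/(2d)$, and collect terms into $A_{\sigma_\theta^2/h^2,d,r}\cdot B_{h,d,r}$. One small inaccuracy: you claim the Hermite/hypercontractivity computation yields a $\deltamoment$ bound carrying a factor $(5h^8-4h^{10})^{-d/12}$, but $\Egrad{h}(\shift)$ depends on $\shift$ only through the $k$-dimensional projection $\vU^\T\shift$, so $\deltamoment$ involves no $d$-exponential at all --- \Cref{lem:poly}(iii) gives $\deltamoment\le(r+1)^{(6k+r+2)/4}\max\{h^{r-1},h^{-1}\}$, with the $d$-exponential in $C_{h,\sigma_\theta,2}$ coming entirely from $\ratiomoment^{5/6}$. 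The conclusion is unaffected because you are overestimating $\deltamoment$, and $(5h^8-4h^{10})^{-d/12}\le(5h^8-4h^{10})^{-d/6}$ on the admissible $h$ range makes the $C_{h,\sigma_\theta,1}$-term dominate either way, which is exactly the final collapse the paper performs.
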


\begin{remark}[Choice of $h, \sigma_{\theta}$]
\label{rmk:choice_of_h_sigma}
    In order to minimize the prefactor $C_2  \cdot A_{\sigma_{\theta}^2/h^2, d,r}\cdot B_{h,d,r}$, we can minimize $A_{\sigma_{\theta}^2/h^2, d,r}$ (with respect to $\sigma_{\theta}^2/h^2$) and $B_{h,d,r}$ (with respect to $h$) separately.
    These terms are minimized at $\sigma^2_{\theta}/h^2 = (r-1)/(20(r-1)+10d)$ and $h=1$. 
    With such optimal choices, we have $A_{\sigma_{\theta}^2/h^2, d,r} \le (20 + 30d/(r-1))^{r-1}$, and $B_{h,d,r} = 1$. Hence $d(\widehat{\vU}, \vU) \le C_d \cdot n^{-1/2}$ where $C_d \propto d^r$, ignoring the logarithmic term and $\delta$. 
\end{remark}

\begin{remark}[Bound Optimality]
%\color{red} 
The dependence on $d$ in the bound of \Cref{cor:poly-rate} is hard to improve since our estimator falls in the class of so-called \emph{Correlational Statistical Query (CSQ)} learners, i.e., learners that rely solely on statistics of form $\sum_i Y_i \cdot \Phi(\vX_i)$ for some function $\Phi$ (see \cite{BENDAVID1995240,kearns98csq,diakonikolas20d} for an exact definition). Such CSQ learners have been shown in the recent work of \cite{damian2022neural} to require a sample complexity $n=\Omega(d^{r/2})$ to achieve nontrivial error in subspace estimation with bounded adversarial noises. As a heuristic comparison, our method requires $n = O(d ^{2r})$, which corresponds to a constant factor mismatch in the exponent of $d$.  A recent work \cite{damian2023smoothing} shows that such a gap can be closed for single-index models with the knowledge of the link function. However, for general multi-index models, it remains unclear how such a gap may be closed for CSQ learners as the result of \cite{damian2022neural} also leaves this open. 

We note however that, outside of CSQ procedures, much more benign sample size requirements are possible, for instance $n= O(d)$ was shown for a recent method of \cite{chen2020learning}.

\end{remark}

\section{Analysis Overview}
    \label{sec:analysis}
In this section, we outline the proofs of results in \Cref{sec:results}. Some further details can be found in \Cref{sec:omitted_proof}.

\subsection{Proof of \Cref{prop:exhaust} (Exhaustiveness of $\ESGOP$)}
    \label{sec:proof_exhaust}
\begin{proof}
    We first consider the case $h=1$. Using the Hermite expansion of $f$:
$
    f\left(\vz\right) = \sum_{\valpha\in\N^{k}}\hermite_{\valpha}H_{\valpha}\left(\vz\right) 
    % = \sum_{\valpha\in\N^{k}}\hermite_{\valpha}\prod_{j\in[k]}H_{\alpha_{j}}\left(z_{j}\right),
$, we have
\begin{align} 
\label{eq:delta-i}
\E_{\vZ \sim \Normal(\vzero, \vI_d)}[\partial_i f(\vU^{\T}(\vZ+\vtheta))] 
   &= \sum_{\valpha\in\N^{k}: \alpha_i \ge 1}\hermite_{\valpha}\left(\prod_{j\neq i}\frac{1}{\sqrt{\alpha_{j}!}}(\vU_{j}^{\T} \vtheta)^{\alpha_{j}}\right)\frac{\sqrt{\alpha_{i}}}{\sqrt{\left(\alpha_{i}-1\right)!}}(\vU^{\T}_i\vtheta) ^{\alpha_{i}-1} \nonumber \\
  \overset{\alpha_i = \alpha_i - 1}&{=}   \sum_{\valpha\in\N^{k}}\sqrt{\alpha_{i}+1}\cdot\hermite_{\valpha{(i)}} \prod_{j \in [k]} \left(({\vU_{j}^{\T} \vtheta)^{\alpha_j}}/{\sqrt{\alpha_j!}}\right) , \nonumber \\
   &=: \sum_{\valpha\in\N^{k}} \lambda_{\valpha,i} S_{\valpha}(\vU^{\T}\vtheta)\,,
\end{align}
where $\valpha{(i)} := (\alpha_{1},\ldots,\alpha_{i-1},\alpha_{i}+1,\alpha_{i+1},\ldots,\alpha_{k})\in\R^{k}$, $\lambda_{\valpha,i}:= \sqrt{\alpha_{i}+1}\cdot\hermite_{\valpha{(i)}} \in \R$ and $S_{\valpha}(\vz) = \prod_{j \in [k]} ({z_j^{\alpha_j}}/{\sqrt{\alpha_j!}})$ is a scaled monomial basis indexed by $\valpha$.

According to the definition, $\vM = \vU \vLambda \vU^{\T}$, where $\vLambda := \E_{\shift}[\Egrad{h}(\shift)\Egrad{h}(\shift)^{\T}] \in \R^{k\times k}$. To show $\col({\vM}) = \col(\vU)$, we only need to show that the matrix $\vLambda$ is full-rank.  Suppose not, then there exists a unit vector $\veta \in\R^{k}$ such that $\veta^{\T}\vLambda\veta = 0$, which indicates $\veta^{\T}\Egrad{h}(\shift) = 0$, for almost every $\shift$ with respect to Gaussian measure. Therefore, 
\begin{align}
\label{eqn:zero_coeff} 
\forall \valpha \in \N^k, \quad \sum_{i \in [k]} \eta_{i} \lambda_{\valpha,i} = 0\,. 
\end{align}
Next, we show that \cref{eqn:zero_coeff} implies that $\veta^{\T} \nabla f(\vU^{\T}\vx) = 0$ for almost every $\vx \in \R^d$. For each $i \in [k]$,
\begin{align}
  \label{eq:partial}
 \partial_i f(\vU^{\T} \vx)
  & = \sum_{\valpha\in\N^{k}}\hermite_{\valpha} \cdot {\partial_i H_{\valpha}(\vU^{\T}\vx)} \nonumber \\
  & = \sum_{\valpha\in\N^{k}:\alpha_{i}\geq 1}\hermite_{\valpha} \cdot (\sqrt{\alpha_{i}}H_{\alpha_{i}-1}(\vU_{i}^{\T}\vx)) 
 \prod_{j\neq i}H_{\alpha_{j}}(\vU_{j}^{\T}\vx)\nonumber \\
  \overset{\alpha_i = \alpha_i - 1}&{=}  \sum_{\valpha\in\N^{k}} \sqrt{\alpha_i+1} 
 \cdot \hermite_{\valpha{(i)}} \cdot H_{\valpha}(\vU^{\T}\vx)  \nonumber \\
  & = \sum_{\valpha\in\N^{k}} \lambda_{\valpha,i} \cdot H_{\valpha}(\vU^{\T}\vx) \,.
\end{align}
Therefore, we have for almost every $\vx \in \R^d$ with respect to the Gaussian measures, and hence $P_{\vX}$-a.s. (since $P_{\vX}$ is dominated by Gaussian), 
\begin{align*}
    \veta^{\T} \nabla f(\vU^{\T} \vx) \overset{\cref{eq:partial}}{=} \sum_{\valpha \in \N^k} \left(\sum_{i \in [k]} \eta_i \lambda_{\valpha,i}\right) H_{\valpha}(\vU^{\T}\vx) \overset{\cref{eqn:zero_coeff}}{=} 0\,.
\end{align*}
Then, it follows that
$$(\vU\veta)^{\T} \nabla g(\vx) = \veta^{\T} \vU^{\T}\vU \nabla f(\vU^{\T}\vx) = \veta^{\T} \nabla f(\vU^{\T}\vx) = 0$$
In other words, $\vU \veta \in \minspace$ is a direction along which the directional derivative of $g$ is zero almost surely.  In other words, $\minspace$ admits an irrelevant direction for the regression function $g$, which contradicts the assumption that $\minspace$ is the CMS and hence cannot be further reduced. 

For the case $h \neq 1$, one can let ${f}_h(\vz) := f(h\vz)$, then $\E_{{\vZ}\sim\Normal(\vzero, \vI_d)}[f_h(\vZ)^2] = 1$, and
$$\Egrad{h}(\shift) := \E_{\vZ\sim\Normal(\vzero, h^2 \vI_d)}[\nabla f(\vU^{\T}(\vZ+\shift))] = h^{-1} \E_{\vZ\sim\Normal(\vzero, \vI_d)}[\nabla f_h(\vU^{\T}(\vZ+\shift)))] = h^{-1} \overline{\nabla}_{1}f_h(\shift). \,$$
Clearly, $\E_{\shift} [\overline{\nabla}_{1}\tilde{f}(\shift)\overline{\nabla}_{1}\tilde{f}(\shift)^{\T}]$ is full-rank by the reasoning for the case $h=1$, and it follows that $\ESGOP = h^{-2} \vU \E_{\shift} [\overline{\nabla}_{1}\tilde{f}(\shift)\overline{\nabla}_{1}\tilde{f}(\shift)^{\T}] \vU^{\T}$ is exhaustive.
\end{proof}

\subsection{Proof of \Cref{cor:exhaust} (Exhaustiveness of $\ASGOP$)}

The following lemma (see \Cref{sec:proof_ASGOP} for the proof) establishes the concentration of the ASGOP $\ASGOP$ around ESGOP $\ESGOP$, and is the key ingredient for proving the exhaustiveness of $\ASGOP$. 
\begin{lemma} 
    \label{lemma:M_tilde}
        Under the setting of Theorem~\ref{thm:generalrate}, with probability $1-\delta$,
            $$\lVert \ASGOP - \ESGOP \rVert_{\op} \le  \left({\sqrt{2} \deltamoment^2} \cdot \frac{1}{\sqrt{m}}\right) \cdot \frac{1}{\sqrt{{\delta}}} \cdot \log\left(\frac{4d}{{\delta}}\right)\,,$$
        for all $0 < \delta < 1$.
\end{lemma}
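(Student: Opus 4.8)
The plan is to view $\ASGOP - \ESGOP = \frac{1}{m}\sum_{j=1}^m \left(\vbeta_h(\shift_j)\vbeta_h(\shift_j)^\T - \ESGOP\right)$ as an average of $m$ i.i.d. mean-zero random symmetric matrices $\vW_j := \vbeta_h(\shift_j)\vbeta_h(\shift_j)^\T - \ESGOP$, and to control the operator norm of this average by a second-moment / Chebyshev-type argument rather than a Bernstein-type exponential bound — this is forced by the fact that \Cref{assume:smooth_gradient} only gives a fourth moment on $\Egrad{h}$, hence only a second moment on $\vbeta_h\vbeta_h^\T$. First I would compute $\E\lVert \vW_j \rVert_{\op}^2 \le \E\lVert \vbeta_h(\shift_j)\vbeta_h(\shift_j)^\T\rVert_{\op}^2 = \E\lVert \Egrad{h}(\shift_j)\rVert^4 \le \deltamoment^4$ (using $\lVert\vU\rVert_{\op}=1$ since $\vU$ has orthonormal columns, and dropping the centering since it only decreases the second moment of a matrix whose terms are PSD — more precisely $\E\lVert \vW_j\rVert_{\op}^2 \le \E\lVert\vbeta_h\vbeta_h^\T\rVert_{\op}^2$ after expanding, because $\ESGOP \preceq \E[\vbeta_h\vbeta_h^\T]$ in a suitable sense; one can also simply bound $\lVert\vW_j\rVert_{\op}\le \lVert\vbeta_h\vbeta_h^\T\rVert_{\op}+\lVert\ESGOP\rVert_{\op}\le 2\deltamoment^2$ and proceed, losing only constants).

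Next I would invoke a matrix concentration inequality that requires only a bound on $\E\lVert\vW_j\rVert_{\op}^2$ (or on the matrix variance statistic) together with a sub-exponential or bounded-moment control, yielding a tail of the form $\p\{\lVert \frac{1}{m}\sum_j \vW_j\rVert_{\op} > t\}$ that decays polynomially in $t$. The cleanest route producing exactly the stated bound is: by the triangle inequality and Jensen, $\E\lVert \frac{1}{m}\sum_j \vW_j\rVert_{\op} \lesssim \sqrt{\deltamoment^4/m}\cdot\log d = (\deltamoment^2/\sqrt m)\log d$ (a matrix Rosenthal / Khintchine-type inequality, e.g. Tropp's, where the $\log d$ is the dimensional factor), and then Markov's inequality $\p\{\lVert\cdot\rVert_{\op} > E[\lVert\cdot\rVert_{\op}]/\delta\} \le \delta$ converts this expectation bound into the claimed high-probability statement with the $\delta^{-1/2}$... — wait, the stated bound has $\delta^{-1/2}$, not $\delta^{-1}$, so the argument must instead be: bound $\E\lVert\frac1m\sum_j\vW_j\rVert_{\op}^2 \lesssim (\deltamoment^4/m)(\log d)^2$ (Rosenthal in $L_2$), then apply Markov to $\lVert\cdot\rVert_{\op}^2$ to get $\p\{\lVert\cdot\rVert_{\op}^2 > \E[\lVert\cdot\rVert_{\op}^2]/\delta\}\le\delta$, i.e. $\lVert\cdot\rVert_{\op} \le \sqrt{\E[\lVert\cdot\rVert_{\op}^2]/\delta}$, which gives the $\delta^{-1/2}$ dependence and the $\log(4d/\delta)$ after absorbing the union-bound constants. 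I would track constants carefully to land on the factor $\sqrt 2$.

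The main obstacle is obtaining the correct dimensional dependence: a naive scalar approach (applying Chebyshev to each matrix entry and union-bounding over $d^2$ entries, or controlling $\lVert\cdot\rVert_{\op}\le\lVert\cdot\rVert_F$) would cost a polynomial factor in $d$, whereas the lemma claims only a single $\log(4d/\delta)$. Getting this requires the right matrix concentration tool adapted to heavy-tailed (finite-moment, non-subexponential) summands — essentially a matrix Rosenthal inequality or a truncation argument combined with a matrix Bernstein bound on the truncated part — so the care is in choosing/citing the inequality whose dimensional factor is logarithmic and whose moment requirement matches \Cref{assume:smooth_gradient}. A secondary technical point is justifying $\E\lVert\vW_j\rVert_{\op}^2 \le \deltamoment^4$ cleanly despite the centering; I would handle this by noting $0 \preceq \ESGOP$ and $0\preceq \vbeta_h\vbeta_h^\T$ so $\lVert\vbeta_h\vbeta_h^\T - \ESGOP\rVert_{\op} \le \max\{\lVert\vbeta_h\vbeta_h^\T\rVert_{\op}, \lVert\ESGOP\rVert_{\op}\} \le \lVert\vbeta_h\vbeta_h^\T\rVert_{\op} = \lVert\Egrad{h}(\shift_j)\rVert^2$ whenever the difference is... actually this pointwise bound fails in general, so the honest route is $\lVert\vW_j\rVert_{\op}\le 2\lVert\Egrad h(\shift_j)\rVert^2$ at the cost of a constant, or a direct computation of the matrix variance $\lVert\E[\vW_j^2]\rVert_{\op} \le \E\lVert\Egrad h(\shift_j)\rVert^4 \le \deltamoment^4$ which does hold since $\E[\vW_j^2] = \E[(\vbeta_h\vbeta_h^\T)^2] - \ESGOP^2 \preceq \E[(\vbeta_h\vbeta_h^\T)^2]$.
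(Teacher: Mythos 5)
Your high-level plan is sound — write $\ASGOP - \ESGOP$ as a centered i.i.d. average $\frac1m\sum_j \vW_j$ with $\vW_j = \vbeta_h(\shift_j)\vbeta_h(\shift_j)^\T - \ESGOP$, control a second moment of its operator norm, and trade that against a Chebyshev/Markov step to obtain the $\delta^{-1/2}$ factor. This matches the essential structure of the argument. However, the $\log(4d/\delta)$ factor (with $\delta$ inside the logarithm) combined with the $\delta^{-1/2}$ prefactor is the signature of a \emph{truncation plus matrix Bernstein} argument: truncate $\vW_j$ at a level $T \asymp \deltamoment^2\sqrt{m/\delta}$ chosen so that $m\,\p(\lVert\vW_1\rVert_{\op}>T)\le\delta/2$ via Markov on $\E\lVert\vW_1\rVert_{\op}^2\le 2\deltamoment^4$, then apply the intrinsic-dimension matrix Bernstein inequality to the centered truncated sum; the range term $T\log(d/\delta)/m$ then dominates and reproduces the stated bound, including the $\sqrt2$ from the factor-of-two in $\E\lVert\vW_1\rVert_{\op}^2\le\E\lVert\Egrad h\rVert^4 + \lVert\ESGOP\rVert_{\op}^2 \le 2\deltamoment^4$. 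Your proposed route via a matrix-Rosenthal-type bound in $L^2$ plus Markov on the squared norm is genuinely different; it can work, and would actually yield $\sqrt{\log d}$ instead of $\log(4d/\delta)$ (which is stronger), but note that Tropp's matrix Rosenthal requires $2p\ge 3$ moments on the summand while you only have second moments of $\vW_j$; the tool you actually want is symmetrization plus noncommutative Khintchine at Schatten order $2p\asymp\log d$, or a Nemirovski-type type-$2$ inequality, which do close under second moments.

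There are two concrete errors to flag. First, the parenthetical claim that \emph{pointwise} $\lVert\vW_j\rVert_{\op}\le \lVert\vbeta_h\vbeta_h^\T\rVert_{\op}+\lVert\ESGOP\rVert_{\op}\le 2\deltamoment^2$ is false: $\lVert\vbeta_h(\shift_j)\vbeta_h(\shift_j)^\T\rVert_{\op}=\lVert\Egrad{h}(\shift_j)\rVert^2$ is an unbounded random variable (only its fourth moment $\deltamoment^4$ is assumed finite), not almost surely $\le\deltamoment^2$. If such an almost-sure bound held, matrix Bernstein would apply directly and give exponential tails in $\delta$, which would contradict the polynomial $\delta^{-1/2}$ dependence in the lemma you are trying to prove — so this shortcut cannot be available. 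Second, the claimed inequality $\E\lVert\vW_j\rVert_{\op}^2\le\E\lVert\vbeta_h\vbeta_h^\T\rVert_{\op}^2$ does not follow from the handwaved PSD argument; what you can cleanly show (using $\lVert A-B\rVert_{\op}\le\max\{\lVert A\rVert_{\op},\lVert B\rVert_{\op}\}$ for $A,B\succeq0$, which, contrary to your worry, \emph{does} hold) is $\E\lVert\vW_j\rVert_{\op}^2\le\E\lVert\Egrad h\rVert^4 + \lVert\ESGOP\rVert_{\op}^2\le 2\deltamoment^4$, and the $\sqrt2$ in the lemma is consistent with exactly this loss. Your fallback via the matrix variance $\lVert\E\vW_j^2\rVert_{\op}\le\E\lVert\Egrad h\rVert^4\le\deltamoment^4$ is correct and is indeed the right quantity for the Bernstein/Khintchine step, but be aware it is a different (and generally smaller) quantity than $\E\lVert\vW_j\rVert_{\op}^2$, and it alone does not control the heavy tail — you still need either the truncation step or the full moment bound $\E\lVert\vW_j\rVert_{\op}^2\le2\deltamoment^4$ to convert to a probability statement.
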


Now we proceed to the proof of the \Cref{cor:exhaust}. 
\begin{proof}[Proof of \Cref{cor:exhaust} (Exhaustiveness of $\ASGOP$)]
     By \Cref{lemma:M_tilde}, we have with probability at least $1-\delta$, 
         $$\lVert \ASGOP - \ESGOP \rVert_{\op} \le {\lambda_k(\ESGOP)}/{2}\,,$$
    when $m \ge {8 \deltamoment^4} {\lambda_k(\ESGOP)^{-2}} \delta^{-1} (\log\left({4d}/{\delta}\right))^2$. Consequently, $\lambda_k(\ASGOP) \ge \lambda_k(\ESGOP)/2 > 0$ by Weyl's inequality. Therefore, $\col(\ASGOP)$ is of rank $k$, and hence equals to $\minspace$.
\end{proof}

\subsection{Proof of \Cref{thm:generalrate} (Main Theorem)}

The following Lemma~\ref{lemma:hat_M} (see \Cref{sec:proof_estimator} for the proof) establishes a probabilistic bound on $\lVert \estimator - \ASGOP \rVert_{\op}$. It is a key step for the proof of the main theorem.  
\begin{lemma}
    \label{lemma:hat_M}
    Under the setting of Theorem~\ref{thm:generalrate}, for all $0 < \delta < 1$ and some absolute constant $C > 0$, with probability $1-\delta$,
        \begin{equation}
            \label{eqn:M_error}
            \lVert \estimator - \ASGOP \rVert_{\op} \le C \left(\tilde{C}_{h,\sigma_{\theta},1} \cdot  d (\linkmoment + \sigma_Y)^2  \cdot \frac{\sqrt{m}}{n} + \tilde{C}_{h,\sigma_{\theta},2} \cdot  d^{1/2} (\linkmoment + \sigma_Y)  \cdot \frac{1}{\sqrt{n}}\right) \frac{\log({4d}/{{\delta}})}{\sqrt{{\delta}}}  \,.
        \end{equation}
     where $\tilde{C}_{h, \sigma_{\theta}, \ell}$, $\tilde{C}_{h, \sigma_{\theta}, 2}$ are as defined as:
    \begin{align*}
      \tilde{C}_{h, \sigma_{\theta}, 1} := (1-20\sigma_{\theta}^2/h^2)^{-d/2} \cdot \ratiomoment^{5/3}; \quad 
      \tilde{C}_{h, \sigma_{\theta}, 2} := (1-10\sigma_{\theta}^2/h^2)^{-d/2} \cdot \deltamoment \cdot \ratiomoment^{5/6}.
  \end{align*}
\end{lemma}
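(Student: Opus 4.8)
The plan is to bound $\lVert \estimator - \ASGOP \rVert_{\op}$ by first reducing to per-location errors in the smoothed-gradient estimates $\hat\vbeta_{j,\ell}$, and then controlling those errors via a moment computation exploiting the importance-weighted unbiased form in \eqref{eq:t_LR_rw_me}. First I would write
$\estimator - \ASGOP = \frac{1}{2m}\sum_{j=1}^m\big[(\hat\vbeta_{j,1}\hat\vbeta_{j,2}^\T - \vbeta_h(\shift_j)\vbeta_h(\shift_j)^\T) + (\hat\vbeta_{j,2}\hat\vbeta_{j,1}^\T - \vbeta_h(\shift_j)\vbeta_h(\shift_j)^\T)\big]$,
and decompose each summand using $\hat\vbeta_{j,\ell} = \vbeta_h(\shift_j) + \vDelta_{j,\ell}$ with $\vDelta_{j,\ell} := \hat\vbeta_{j,\ell} - \vbeta_h(\shift_j)$, so that $\hat\vbeta_{j,1}\hat\vbeta_{j,2}^\T - \vbeta_h\vbeta_h^\T = \vbeta_h\vDelta_{j,2}^\T + \vDelta_{j,1}\vbeta_h^\T + \vDelta_{j,1}\vDelta_{j,2}^\T$ (suppressing the $\shift_j$ argument). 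Taking operator norms and using the triangle inequality over $j$, the problem reduces to bounding averages of $\lVert\vbeta_h(\shift_j)\rVert\cdot\lVert\vDelta_{j,\ell}\rVert$ and $\lVert\vDelta_{j,1}\rVert\lVert\vDelta_{j,2}\rVert$; since $\Data_{j,1}$ and $\Data_{j,2}$ are independent given $\shift_j$, and both are independent across $j$, I would then bound the error in expectation (or second moment) and apply a Markov/Chebyshev-type inequality to get the high-probability statement with the stated $\delta^{-1/2}$ dependence, rather than a Bernstein-type bound.

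The core quantitative step is bounding $\E[\lVert\vDelta_{j,\ell}\rVert^2 \mid \shift_j]$ (and, since I will also need the cross term and the product term, possibly $\E[\lVert\vDelta_{j,\ell}\rVert^4\mid\shift_j]$). Here $\vDelta_{j,\ell}$ is an average over $|\Data_{j,\ell}| = n/(2m)$ i.i.d.\ mean-zero terms of the form $h^{-2}\rho_h(\vX;\shift_j)\,Y\,(\vX-\shift_j) - \vbeta_h(\shift_j)$, so $\E[\lVert\vDelta_{j,\ell}\rVert^2\mid\shift_j] = \frac{2m}{n}\cdot\operatorname{Var}$-type quantity $\le \frac{2m}{n}\,h^{-4}\,\E_{\vX\sim P_\vX}[\rho_h(\vX;\shift_j)^2\,Y^2\,\lVert\vX-\shift_j\rVert^2]$. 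I would split $Y^2 \lesssim f(\vU^\T\vX)^2 + (Y - g(\vX))^2$, use the subgaussian noise bound (\Cref{assume:design1}) to control the noise contribution by $\sigma_Y^2$, and bound the signal contribution using $\linkmoment$. The factors $\rho_h(\vX;\shift_j)^2$ and $\lVert\vX-\shift_j\rVert^2$ are handled by Hölder's inequality: pulling $\rho_h$ to the fifth power matches \Cref{assume:design2} ($\ratiomoment$), and the remaining pieces are controlled after a change of measure from $P_\vX$ to $\Normal(\shift_j,h^2\vI_d)$ — this is where the density-ratio power and the Gaussian shift generate the factor $(1-20\sigma_\theta^2/h^2)^{-d/2}$ once the expectation over $\shift_j\sim\Normal(\vzero_d,\sigma_\theta^2\vI_d)$ is taken, using $\sigma_\theta < h/\sqrt{20}$ to keep the resulting Gaussian integral finite. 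The dimension factors $d$ and $d^{1/2}$ come from $\E\lVert\vX-\shift_j\rVert^2 = O(d)$ under the shifted Gaussian. Finally, $\lVert\vbeta_h(\shift_j)\rVert$ in the cross term is bounded through $\E_{\shift}\lVert\Egrad{h}(\shift)\rVert^2 \le \deltamoment^2$, which (together with one power of the $\vDelta$ bound) produces the $\tilde C_{h,\sigma_\theta,2}$ term, while the product term $\vDelta_{j,1}\vDelta_{j,2}^\T$ produces the $\tilde C_{h,\sigma_\theta,1}$ term carrying $\ratiomoment^{5/3}$ and $(\sqrt m/n)$.

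The main obstacle I expect is the bookkeeping of the Hölder exponents so that the density-ratio moment appears exactly to the power available ($\ratiomoment^5$ inside, giving $\ratiomoment^{5/3}$ or $\ratiomoment^{5/6}$ after taking roots) while simultaneously leaving enough room for the subgaussian moments of $Y$, the polynomial-in-$d$ moments of $\lVert\vX-\shift_j\rVert$, and the $\linkmoment^6$ budget — and then carrying these conditional bounds through the outer expectation over $\shift_j$, where the $\rho_h$-induced Gaussian reweighting interacts with the $\Normal(\vzero_d,\sigma_\theta^2\vI_d)$ law of $\shift_j$ to yield the clean closed-form prefactor $(1-c\sigma_\theta^2/h^2)^{-d/2}$. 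A secondary subtlety is that, because only low-order moments are assumed, one cannot use matrix Bernstein; instead the concentration over the $m$ locations must go through a second-moment bound on $\lVert\estimator-\ASGOP\rVert_{\op}$ (bounding the operator norm by, e.g., the Frobenius norm or via a direct variance estimate for the sum of independent-across-$j$ matrices) followed by Markov's inequality, which is what forces the $\delta^{-1/2}\log(4d/\delta)$ form rather than $\log(1/\delta)$.
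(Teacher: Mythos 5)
Your decomposition $\hat\vbeta_{j,\ell} = \vbeta_h(\shift_j) + \vDelta_{j,\ell}$ and the resulting split into the three block averages $\tfrac{1}{m}\sum_j \vbeta_j\vDelta_{j,2}^\T$, $\tfrac{1}{m}\sum_j\vDelta_{j,1}\vbeta_j^\T$, and $\tfrac{1}{m}\sum_j\vDelta_{j,1}\vDelta_{j,2}^\T$ is the right structure, and your accounting of where $\ratiomoment$, $\deltamoment$, the $(1-c\sigma_\theta^2/h^2)^{-d/2}$ factor, and the $d$, $d^{1/2}$ factors come from is essentially on target. The gap is in how you propose to concentrate over $j$.

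First, you say ``taking operator norms and using the triangle inequality over $j$, the problem reduces to bounding averages of $\lVert\vbeta_h(\shift_j)\rVert\lVert\vDelta_{j,\ell}\rVert$ and $\lVert\vDelta_{j,1}\rVert\lVert\vDelta_{j,2}\rVert$.'' That step throws away the crucial cancellation: conditionally on $\shift_j$, each summand in each of the three block averages has mean zero (because $\E[\vDelta_{j,\ell}\mid\shift_j]=0$ and $\Data_{j,1}\indep\Data_{j,2}$), so the averages concentrate at scale $m^{-1/2}$ times the per-summand scale. If you bound the operator norm by the average of summand norms, you lose that $m^{-1/2}$: for the product block, each $\E[\lVert\vDelta_{j,1}\rVert\lVert\vDelta_{j,2}\rVert]\asymp m/n$, and a triangle-inequality-plus-Markov route only recovers $m/n$, not the claimed $\sqrt{m}/n$. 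You must keep the three averages as genuine mean-zero matrix sums.

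Second, even your fallback (``second-moment bound on $\lVert\estimator-\ASGOP\rVert_{\op}$ via the Frobenius norm, then Markov'') cannot produce the bound's $\log(4d/\delta)/\sqrt\delta$ dependence: a Frobenius second-moment plus Markov gives $\delta^{-1/2}$ with no logarithmic factor in $d$. The $\log(4d/\delta)$ is the fingerprint of a matrix concentration inequality, and your claim that ``one cannot use matrix Bernstein'' is the part to revisit. The standard device under only low-order moment assumptions is to truncate each per-$j$ matrix at a level $T$, apply matrix Bernstein to the bounded, truncated, centered sum (this produces the $\log(4d/\delta)$), and control $\p(\exists j:\text{summand exceeds }T)$ by a moment/Markov bound (this forces the choice $T\propto (m/\delta)^{1/2}$ and produces the $\delta^{-1/2}$). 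Carried out for the present lemma, exactly this recipe reproduces the $\log(4d/\delta)/\sqrt\delta$ prefactor and also explains \Cref{lemma:M_tilde}, which has the same form. Your Hölder and change-of-measure bookkeeping for $\E[\lVert\vDelta_{j,\ell}\rVert^2\mid\shift_j]$ is what feeds the variance proxy in that matrix Bernstein step, so your moment computations are not wasted; they just need to be routed through the truncated-Bernstein argument rather than a scalar Markov step applied after a triangle inequality.
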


Now we proceed to the proof of the main theorem. 
\begin{proof} [Proof of \Cref{thm:generalrate} (Main Theorem)]
 Let $\tilde{\delta} = \delta/2 \in (0,1/2)$. We define two good events with desired properties for the concentration of the subspace estimator:  Let $\xi_1$ denote the event where $\minspace$ can be fully recovered from $\ASGOP$, i.e., $\col(\ASGOP) = \minspace$. Since $m \ge {8 \deltamoment^4} {\lambda_k(\ESGOP)^{-2}} \tilde{\delta}^{-1} (\log({4d}/{\tilde\delta}))^2$, we have the guarantee that $\p(\xi_1) \ge 1-\tilde\delta$ by \Cref{cor:exhaust}.
            Let $\xi_2$ denote the good event in \Cref{lemma:hat_M} with $\delta = \tilde\delta$.
        Next, we show that \cref{eqn:UEstimation} holds on $\xi_1 \cap \xi_2$, whose probability is at least $1-\delta$.  This follows from a standard application of the \emph{Davis-Kahan Theorem} (Theorem V.3.6 from \cite{stewart90perturb}). In particular, the eigengap can be calculated as:
        \begin{align*}
            \lambda_{k}(\ASGOP) - \lambda_{k+1}(\estimator) &= \lambda_{k}(\ASGOP) - (\lambda_{k+1}(\estimator) - \lambda_{k+1}(\ASGOP)) \\
            & \ge \lambda_k(\ASGOP) - \lVert \estimator - \ASGOP \rVert_{\op} \\
            &\ge \lambda_k(\ASGOP)/2 \,,
        \end{align*}
        for large $n$ that ensures $\lVert \estimator - \ASGOP \rVert_{\op} \le \lambda_k(\ASGOP)/2$. Therefore,
        $$ d(\widehat{\vU}, \vU) \le \frac{\sqrt{2} \lVert \estimator - \ASGOP \rVert_{\op}}{\lambda_{k}(\ASGOP) - \lambda_{k+1}(\estimator)} \le \frac{2 \sqrt{2} \lVert \estimator - \ASGOP \rVert_{\op}}{\lambda_k(\ASGOP)} \le \frac{4 \sqrt{2} \lVert \estimator - \ASGOP \rVert_{\op}}{\lambda_k(\ESGOP)}.$$
        By substituting \cref{eqn:M_error} into the last inequality, we conclude the proof.
    \end{proof}

\subsection{Proof of \Cref{cor:poly-rate}}
  \label{sec:proof_poly}
  To invoke the main \Cref{thm:generalrate} for the special case of polynomial links, we rely on the following lemma (see \Cref{sec:proof:lemma_poly} for proof) that quantifies the various moment parameters and $\lambda_k(\ESGOP)$ in terms of $d, r,  \tau, h$ and $\sigma_{\theta}$.

\begin{lemma}	
  \label{lem:poly}
Under the conditions for \Cref{cor:poly-rate}, we have:
\begin{enumerate}[label=(\roman*)]
    \item $\ratiomoment = (5h^{8} - 4h^{10})^{-d/10}\,;$
    \item $\linkmoment \leq 5^{r/2}\,;$
    \item $\deltamoment \leq (r+1)^{(6k+r+2)/4} \cdot \max\{h^{r-1}, h^{-1}\}\,;$
    \item $\lambda_k\left(\ESGOP \right) \geq r^{-r-4k}  \cdot 2^{-3(r-1)-k} \cdot ((r-1)!)^{-1} \cdot \left(\sigma_{\theta}^2/h^2\right)^{r-1} \cdot \min\{h^{2(r-1)},1\} \cdot \tau\,.$
\end{enumerate}
\end{lemma}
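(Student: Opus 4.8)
The plan is to establish the four bounds more or less independently, the main tools being the Hermite expansion of $f$, Gaussian hypercontractivity, and the density‑ratio moment already computed in \Cref{prop:var}; items (i)--(iii) are essentially bookkeeping, while (iv) carries all the difficulty.

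\textbf{Items (i) and (ii).} Item (i) is immediate from \Cref{prop:var} with $\sigma=1$, since \Cref{assume:poly} fixes $P_{\vX}=\Normal(\vzero_d,\vI_d)$ and \Cref{cor:poly-rate} assumes $0<h<\sqrt5/2$. For (ii), orthonormality of $\vU$ gives $\vU^{\T}\vX\sim\Normal(\vzero_k,\vI_k)$, so $\linkmoment$ is the $L^{6}(\Normal(\vzero_k,\vI_k))$‑norm of $f$; as $f$ is a polynomial of degree at most $r$ with $\E_{\Normal(\vzero_k,\vI_k)}[f^{2}]=1$, Gaussian hypercontractivity ($\lVert q\rVert_{L^{p}}\le(p-1)^{\deg q/2}\lVert q\rVert_{L^{2}}$) with $p=6$ yields $\linkmoment\le 5^{r/2}$.

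\textbf{Item (iii).} Since $\Egrad{h}(\shift)=\E_{\vZ\sim\Normal(\vzero_d,h^{2}\vI_d)}[\nabla f(\vU^{\T}(\vZ+\shift))]$ and $\vU^{\T}(\vZ+\shift)\sim\Normal(\vzero_k,v\vI_k)$ with $v:=h^{2}+\sigma_\theta^{2}\in(h^{2},\tfrac{21}{20}h^{2})$ when $\shift\sim\Normal(\vzero_d,\sigma_\theta^{2}\vI_d)$, two applications of Jensen give $\deltamoment^{4}\le\E_{\vW\sim\Normal(\vzero_k,v\vI_k)}[\lVert\nabla f(\vW)\rVert^{4}]$. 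Writing $f=\sum_{\valpha}\hermite_{\valpha}H_{\valpha}$ in the orthonormal Hermite basis and using $\partial_i H_{\valpha}=\sqrt{\alpha_i}\,H_{\valpha-\ve_i}$ gives $\sum_i\E_{\Normal(\vzero_k,\vI_k)}[(\partial_i f)^{2}]=\sum_{\valpha}\lvert\valpha\rvert\,\hermite_{\valpha}^{2}\le r$. Bounding $\E_{\Normal(\vzero_k,v\vI_k)}[\lVert\nabla f\rVert^{4}]$ by $3^{O(r)}$ times $(\E_{\Normal(\vzero_k,v\vI_k)}[\lVert\nabla f\rVert^{2}])^{2}$ via hypercontractivity on the degree‑$\le(r-1)$ components of $\nabla f$, and comparing the measures $\Normal(\vzero_k,\vI_k)$ and $\Normal(\vzero_k,v\vI_k)$ on such polynomials — here $v<\tfrac{21}{20}h^{2}$ enters, the $h$‑power $\max\{h^{r-1},h^{-1}\}$ being the two branches for $h\gtrless1$ — leaves a combinatorial factor $(r+1)^{O(k)}$ from summing over monomials and coordinates, which collects into $\deltamoment\le(r+1)^{(6k+r+2)/4}\max\{h^{r-1},h^{-1}\}$.

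\textbf{Item (iv), and the main obstacle.} Write $\ESGOP=\vU\vLambda\vU^{\T}$ with $\vLambda:=\E_{\shift}[\Egrad{h}(\shift)\Egrad{h}(\shift)^{\T}]\in\R^{k\times k}$, so $\lambda_k(\ESGOP)=\lambda_{\min}(\vLambda)=\min_{\veta\in\mathbb S^{k-1}}\E_{\shift}[(\veta^{\T}\Egrad{h}(\shift))^{2}]$. Fix $\veta\in\mathbb S^{k-1}$ and set $g_{\veta}:=\veta^{\T}\nabla f\colon\R^{k}\to\R$, a polynomial of degree $\le r-1$ with $\E_{\Normal(\vzero_k,\vI_k)}[g_{\veta}^{2}]\ge\tau$ by \Cref{def:minimum-signal-strength}. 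Since $\vs:=\vU^{\T}\shift\sim\Normal(\vzero_k,\sigma_\theta^{2}\vI_k)$ and $\veta^{\T}\Egrad{h}(\shift)=\E_{\vW\sim\Normal(\vzero_k,h^{2}\vI_k)}[g_{\veta}(\vs+\vW)]=:(T_h g_{\veta})(\vs)$ is the Gaussian smoothing of $g_\veta$ at bandwidth $h$, we must lower‑bound $\E_{\vs\sim\Normal(\vzero_k,\sigma_\theta^{2}\vI_k)}[(T_h g_{\veta})(\vs)^{2}]$ uniformly over $\veta$ in terms of $\tau$. I would diagonalise the smoothing via Mehler's formula: for $h<1$, $(T_h g)(\vy)=(P_t g)(\vy/\sqrt{1-h^{2}})$ with $e^{-2t}=1-h^{2}$, where $P_t$ is the Ornstein--Uhlenbeck semigroup, $P_t H_{\valpha}=(1-h^{2})^{\lvert\valpha\rvert/2}H_{\valpha}$; rescaling the outer Gaussian then reduces the quantity to $\E_{\vw\sim\Normal(\vzero_k,\rho^{2}\vI_k)}[(P_t g_{\veta})(\vw)^{2}]$ with $\rho^{2}=\sigma_\theta^{2}/(1-h^{2})$ (the case $1\le h<\sqrt5/2$ being treated by a parallel argument, e.g.\ via $\E_{\Normal(\vzero_k,\sigma_\theta^{2}\vI_k)}[(T_hg_\veta)^{2}]=\E_{\Normal(\vzero_k,\vI_k)}[(T_{h/\sigma_\theta}(g_\veta(\sigma_\theta\,\cdot)))^{2}]$, which is where $\min\{h^{2(r-1)},1\}=1$ originates). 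By pigeonhole over the at most $r$ Hermite degrees present in $g_\veta$, some degree $j_\veta\le r-1$ carries mass $\ge\tau/r$; the crux is to show that this degree‑$j_\veta$ block of the smoothed $g_\veta$ is not cancelled by the other degrees under the narrow Gaussian and survives at scale $\gtrsim\rho^{2j_\veta}(1-h^{2})^{j_\veta}=\sigma_\theta^{2j_\veta}$, up to factorial and binomial constants from converting Hermite coefficients to monomial scales over the $k$ coordinates in Mehler's formula. Since $\sigma_\theta^{2}/h^{2}<1$, replacing $j_\veta$ by its worst case $r-1$ gives $\sigma_\theta^{2j_\veta}\ge(\sigma_\theta^{2}/h^{2})^{r-1}\min\{h^{2(r-1)},1\}$, and the accumulated constants assemble into the prefactor $r^{-r-4k}\,2^{-3(r-1)-k}\,((r-1)!)^{-1}$; minimising over $\veta$ finishes. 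The genuinely hard point is precisely this non‑cancellation: $T_h$ is \emph{not} diagonal in the Hermite basis and $\sigma_\theta\ll h$, so a priori the distinct Hermite‑degree blocks of $g_\veta$ could interfere destructively after smoothing; the Mehler/OU reduction is what makes the isolation of the dominant block quantitative, at the cost of the explicit factorial/exponential constants above.
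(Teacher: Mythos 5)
Items (i) and (ii) are correct: (i) is just \Cref{prop:var} with $\sigma=1$, and (ii) is exactly the Gaussian hypercontractivity bound $\lVert f\rVert_{L^{6}}\le 5^{r/2}\lVert f\rVert_{L^{2}}$ for a degree-$r$ polynomial, using that $\vU^{\T}\vX\sim\Normal(\vzero_k,\vI_k)$. Item (iii) is a reasonable outline — Jensen to pass to $\E_{\Normal(\vzero_k,v\vI_k)}[\lVert\nabla f\rVert^{4}]$, then hypercontractivity plus a change-of-variance comparison — but you never actually derive the exponent $(6k+r+2)/4$; you just assert that "combinatorial factors collect" into it. That is bookkeeping you owe, not a given.

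The genuine gap is in item (iv), and you flag it yourself. After the Mehler/OU reduction, what you must control is a quantity of the form $\E_{\vw\sim\Normal(\vzero_k,\rho^{2}\vI_k)}\bigl[(\textstyle\sum_{\valpha:\,|\valpha|\le r-1} d_\valpha H_\valpha(\vw))^{2}\bigr]$ with $\rho\neq 1$, which is a quadratic form whose Gram matrix $\E_{\vw\sim\Normal(\vzero_k,\rho^{2}\vI_k)}[H_\valpha(\vw)H_\vbeta(\vw)]$ has nontrivial off-diagonal entries between Hermite degrees of the same parity. Lower-bounding $\lambda_k(\ESGOP)$ uniformly in $\veta$ is precisely the statement that the smallest eigenvalue of this Gram matrix (restricted to degree $\le r-1$) is bounded below by $c(k,r)\,\sigma_\theta^{2(r-1)}\max\{h,1\}^{-2(r-1)}$ or similar. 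You describe this as "the crux" and "the genuinely hard point," state that it needs to be "shown," and then skip directly to the conclusion, asserting that "the accumulated constants assemble into $r^{-r-4k}\,2^{-3(r-1)-k}\,((r-1)!)^{-1}$." There is no argument that the cross-degree terms in the Gram quadratic form cannot destroy the mass at degree $j_\veta$, and the prefactor is not derived from anything. The pigeonhole step ("some degree carries mass $\ge\tau/r$") is fine, but it gives you a coefficient-vector lower bound, not a lower bound on the value of the quadratic form — bridging those two is the entire content of (iv), and it is absent from your proof. Until you either (a) explicitly bound the smallest eigenvalue of the monomial/Hermite Gram matrix under $\Normal(\vzero_k,\sigma_\theta^2\vI_k)$ in terms of $\sigma_\theta, r, k$, or (b) orthogonalize in a way that kills the cross terms before invoking pigeonhole, the proof of (iv) is a plan, not an argument.
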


Next, we proceed to prove \Cref{cor:poly-rate}, which merely substitutes the bounds from \Cref{lem:poly} into  \Cref{thm:generalrate}.

\begin{proof}[Proof of \Cref{cor:poly-rate}]
    By \cref{eqn:constant_h_sigma} and \Cref{lem:poly}, we have:
    \begin{align*}
        C_{h, \sigma_{\theta},1} &\le C_{k,r}  \cdot \max\{h^{-2r+2},1\}  {(5h^{8} - 4h^{10})^{-d/6}} \left(\sigma_{\theta}^2/h^2\right)^{-(r-1)}  (1-20\sigma_{\theta}^2/h^2)^{-d/2}\cdot \tau^{-1}\,; \nonumber \\
        C_{h, \sigma_{\theta},2} &\le C_{k,r} \cdot \max\{h^{-2r+1},h^{r-1}\}   {(5h^{8} - 4h^{10})^{-d/12}}\left(\sigma_{\theta}^2/h^2\right)^{-(r-1)} (1-10\sigma_{\theta}^2/h^2)^{-d/2}\cdot \tau^{-1} \,,  
    \end{align*}
    where $C_{k,r}$ is a constant depends only on $k$ and $r$. Thus, we have 
    $
        \max\{d \cdot C_{h, \sigma_{\theta},1}, d^{1/2} \cdot C_{h, \sigma_{\theta},2}\} 
         \le C_{k,r} \cdot A_{\sigma_{\theta}^2/h^2, d,r}\cdot B_{h,d,r} \cdot d
    $.
    Finally, we invoke \Cref{thm:generalrate} and get:
    \begin{align*}
        d(\widehat{\vU}, \vU) \le C_2 \cdot A_{\sigma_{\theta}^2/h^2, d,r}\cdot B_{h,d,r} \cdot \frac{d}{\tau\sqrt{n}} \cdot \frac{1}{\tau} \cdot \frac{\log({8d}/{{\delta}})}{\sqrt\delta}\,,
    \end{align*}
    where $C_2 := C_1 \cdot C_{k,r} \cdot \max\{\linkmoment + \sigma_Y, (\linkmoment + \sigma_Y)^2\}$.
\end{proof}

\section{Simulations} 
  \label{sec:experiment}

In this section, we demonstrate how to choose the hyperparameters $h$ (the ``radius" of smoothing) and $m$ (the number of $\shift_j$ sampled in ASGOP) properly through a simple example with polynomial link function and standard Gaussian design. The related theoretical discussions can be found in \Cref{rmk:m_and_asgop}, \ref{rmk:choice_of_h}, \ref{rmk:ESGOP} and \ref{rmk:choice_of_h_sigma}.

\paragraph{Experiment Settings} Throughout the experiments, we consider a multi-index model~\cref{eqn:model} with $d=10,  k=3$, and
$
    \vU = [\ve_{1}, \ldots, \ve_{k}] \in \R^{d \times k},
$
where $\ve_{i}$ denotes the $i$-th column of the $d \times d$ identity matrix. That is, only the first three coordinates of $\vX$ are relevant directions for $Y$. The link function $f:\R^3 \to \R$ is a polynomial: $f(\vz) = z_{1}^{2} + z_{2}z_{3}$.

\paragraph{Choices of $h$ under Different $P_{\vX}$} Recall from \Cref{rmk:choice_of_h} that our method work for any $P_{\vX}$ with heavier tails than Gaussian. The choice of $h$ is critical when $P_{\vX}$ is a Gaussian distribution (see \Cref{prop:var}). We verify the statement by applying \Cref{alg:main} with different choices of $h > 0$ and $\sigma_{\theta} = h/\sqrt{20 + 10d}$ (see Remark~\ref{rmk:choice_of_h_sigma} for the reasoning of such choice) to datasets generated from $P_{\vX}$ being either standard Gaussian or standard Cauchy. Under appropriate choices of $h$, our method works reasonably well for both cases.

\begin{figure}[hbp!]
    \centering
    \includegraphics[scale=0.45]{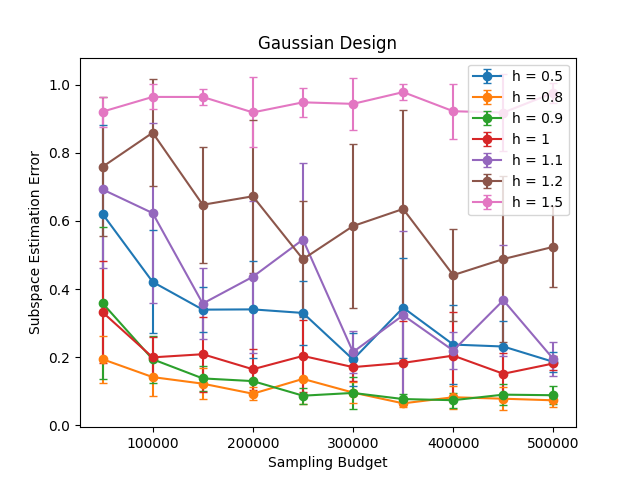}
    \includegraphics[scale=0.45]{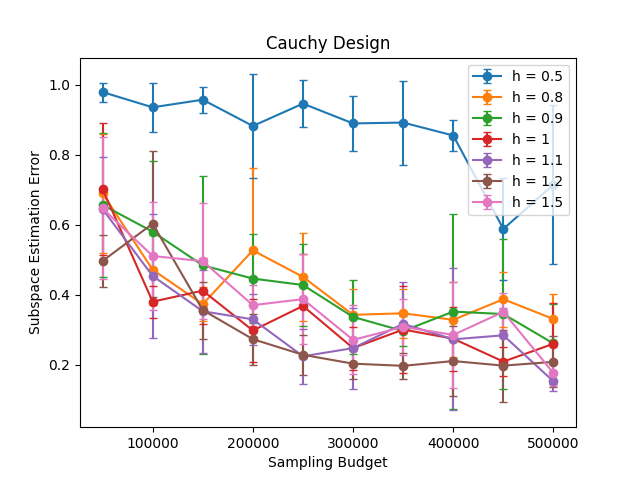}
    \caption{The subspace estimation error $d(\widehat{\vU}, \vU)$ v.s. sampling budget $n$. Here, we fixed the number of partitions $m = 15$ and $\sigma_{\theta} = h/\sqrt{20 + 10d}$. We replicate 10 times for each pair of $(n,h)$ and plot the mean (the dots) and the standard error (the error bars) of the estimation error. When $P_{\vX} =$ standard Gaussian (left), the performance of \Cref{alg:main} is quite sensitive to the choice of $h$. The optimal choice of $h$ is around the data variance 1. When $h$ gets smaller (e.g., $h=0.5$) or larger (e.g., $h=1.2, 1.5$), we observe larger errors under the same budget level. This actually coincides approximately with the minimizer of $\ratiomoment$ in terms of $h$ (c.f. \Cref{prop:var}). When $P_{\vX} =$ standard Cauchy, the method is more robust to the choice of $h$. }
    \label{fig:x_distribution}
\end{figure}
\paragraph{Choice of the Number of Partitions $m$ under Gaussian design} The number of partition $m$ affects the subspace estimation error rates in two ways. Firstly, when $m$ is not large enough, there is no guarantee that $\ASGOP$ is exhaustive (\Cref{cor:exhaust}). With a positive probability, one can only recover a proper subspace of the CMS $\minspace$, hence the subspace estimation error should be large. When $m$ gets larger, we expect the estimation error to grow in the order of $\sqrt{m}$ for a fixed $n$ according to the upper-bound in \Cref{thm:generalrate}. We evaluate the performance of \Cref{alg:main} under different choices of $m$, with a fixed sampling budget $n = 100,000$ under a standard Gaussian design. The related results are presented in \Cref{fig:choice_of_m}. 

\begin{figure}[ht!]
%\color{red}
    \centering
    \includegraphics[scale=0.55]{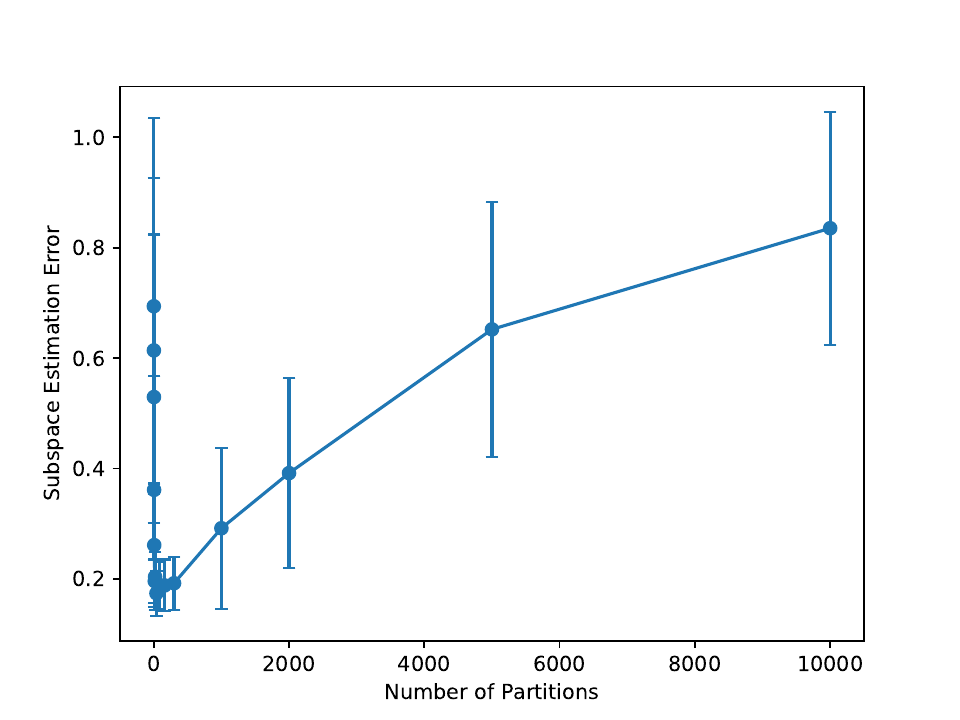}
    \caption{The subspace estimation error $d(\widehat{\vU},\vU)$ v.s. the choice of $m$. We replicate 10 times for each $m$, and plot the mean (the dots) and the standard error (the error bars) of the subspace estimation errors. When $m$ is small, the ASGOP $\ASGOP$ is not guaranteed to be exhaustive, and only a proper subspace of $\minspace$ can be recovered. This results in a large subspace estimation error. When $m$ is larger than a certain threshold (c.f., \Cref{cor:exhaust}), the ASGOP $\ASGOP$ is exhaustive with high probability, and the subspace error has an upper-bound that grows at the rate of $O(\sqrt{m})$. The result in the figure matches the reasoning above, as the subspace estimation error drops sharply atk the regime where $m$ is small, and grows gradually for large $m$.}
    \label{fig:choice_of_m}
\end{figure}

\newpage
\bibliographystyle{siamplain}
\bibliography{references}

\newpage
\appendix

\section{SAVE Is Not Exhaustive}
\label{app:save}    
    In this section, we provide an example where the Sliced Average Variance Estimator (SAVE) proposed by \cite{cook00save} cannot extract all the relevant directions, and hence cannot recover the whole central mean subspace (CMS), let alone the central subspace (CS). In the setting of the multi-index model \cref{eqn:model}, SAVE extracts the index space by estimating the SAVE matrix: $$\vV = \E \left[\left(\E \left[ \vX \vX^T \mid \ Y \right] - \vI\right)^2 \right].$$ One critical condition needed for exhaustiveness here is that $\col(\vV)$ should contain all the relevant directions in the index space. This condition is not always satisfied. 
    
    Consider a simple noiseless single-index model: 
    \begin{align} \label{eqn:save_model}
        Y = f(\vu^{\T} \vX)\,,
    \end{align}
    where $\vX = (X_1, X_2)^{\T} \sim \Normal(\vzero, \vI_2)$, and $\vu = (1,0)^{\T}$. Next, we show that there is a class of link functions with which $\vV$ is a zero matrix.  Let $\xi(z_1, z_2) := \E_{Z \sim \Normal(0,1)}[Z^2 | Z \in [z_1, z_2]]$ for $0 \le z_1 \le 1 \le z_2$. Note that $\xi$ is continuous, and monotonically increasing in each of $z_1$ and $z_2$, $\xi(1,1) = \E[Z^2|Z=1] = 1$ and $\xi(0, \infty) = \E[Z^2|Z \ge 0] = 1$. 
    
    Therefore, there exists a continuous decreasing function 

    $\nu: [1, \infty) \to (0,1]$, such that $\nu(1) = 1, \lim_{z \to \infty}\nu(z) = 0$ and 
    \begin{align*}
        \xi (\nu(z),z) = 1 \text{ for any } z \in [1, \infty)\,.
    \end{align*} 

    Let $f_0 \colon [0,1] \to \R_+$ be any continuous, non-negative, and monotone increasing function defined on $[0,1]$ with $f_0 = 0$. Then, we define $f \colon \R \to \R_+ $ in terms of $f_0$ as follows:
    \begin{align}
    \label{eqn:save_link}
        f(z):=\left\{
            \begin{array}{lll}
               0,  &  & z < 0; \\
               f_0(z), &  & 0 \le z \le 1; \\
               f_0(\nu(z)), &  & 1 < z < \infty.
            \end{array}
        \right.\,
    \end{align}

    The following proposition shows that the SAVE matrix $\vV$ is a zero matrix with link function $f$.

       \begin{proposition}
        Suppose $(\vX, Y)$ is an example generated from Model \eqref{eqn:save_model} with link function as defined in \eqref{eqn:save_link}. Then, we have $\vV = 0$.
        \begin{proof}
            It suffices to show that for any $y \ge 0$, $\E[\vX \vX^{\T} \id(Y \ge y)] = \vI_2 \p(Y \ge y)$.
            For $y > f_0(1)$ or $y \le 0$, this is vacuously true.
            For $0 < y \le f_0(1)$, we can show from the definition of $f$ that: 
            \begin{align}
                \label{eqn:xy}
                \{Y \geq y\} = \{X_1: y \leq f(X_1) \leq f_0(1)\} = \{ f_0^{-1}(y) \leq X_1 \leq \nu^{-1}(f_0^{-1}(y))\}.
            \end{align}
            See also Figure~\ref{fig:link} for a graphical illustration. Therefore,
            \begin{align*}
                \E\left[\vX \vX^{\T}\Big|Y \ge y\right] 
                &= 
                \E\left[
                \begin{bmatrix}
                    X_1^2 & X_1 X_2 \\
                    X_1X_2 & X_2^2
                \end{bmatrix} \Bigg | Y \ge y \right] \\
                & = 
                \begin{bmatrix}
                    \E\left[X_1^2\Big|Y \ge y\right] & \E\left[X_1\Big|Y \ge y\right] \E[X_2] \\
                    \E\left[X_1\Big|Y \ge y\right] \E[X_2] & \E[X_2^2]
                \end{bmatrix} \\
                \alignedoverset{\eqref{eqn:xy}}{=}
                \begin{bmatrix}
                   \xi(f_0^{-1}(y), \nu^{-1}(f_0^{-1}(y))) & 0 \\
                    0 & 1
                \end{bmatrix} = \vI\,,
            \end{align*}

            Finally, we have:
            \begin{align*}
                \E\left[\vX \vX^{\T} \id(Y \ge y)\right] &= \E\left[\vX \vX^{\T} \mid Y \ge y \right] \p(Y \ge y) = \vI \p(Y \ge y)\,.
            \end{align*}
            The proof is completed.
        \end{proof}
    \end{proposition}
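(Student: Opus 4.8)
The plan is to reduce the claim to showing that $\E[\vX\vX^{\T}\mid Y] = \vI_2$ almost surely; once that is established, $\vV = \E[(\E[\vX\vX^{\T}\mid Y]-\vI_2)^2] = \vzero$ directly. In the noiseless single-index model \eqref{eqn:save_model} with $\vu=\ve_1$ we have $Y = f(X_1)$, so every event in $\sigma(Y)$ is determined by $X_1$ alone. Since $X_1 \indep X_2$ and $X_2\sim\Normal(0,1)$, this already pins down two of the three distinct entries of the conditional second-moment matrix regardless of $f$: the $(2,2)$ entry is $\E[X_2^2\mid Y]=\E[X_2^2]=1$ and the off-diagonal entry is $\E[X_1X_2\mid Y]=\E[X_1\mid Y]\,\E[X_2]=0$. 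Hence the whole content of the proposition is the single scalar identity $\E[X_1^2\mid Y]=1$ a.s., and this is exactly what the engineered link \eqref{eqn:save_link} is designed to force.

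To verify it, I would work with the super-level events $\{Y\ge y\}$ rather than the level sets $\{Y=y\}$ (which need not be singletons here). Using the three-piece description of $f$ — identically $0$ on $(-\infty,0)$, equal to the increasing $f_0$ on $[0,1]$, and equal to the decreasing composition $f_0\circ\nu$ on $(1,\infty)$ — a short case analysis shows that for every $0<y\le f_0(1)$ the event $\{Y\ge y\}$ coincides with the interval $\{f_0^{-1}(y)\le X_1\le \nu^{-1}(f_0^{-1}(y))\}$, while $\{Y\ge y\}=\R$ for $y\le 0$ and $\p(Y\ge y)=0$ for $y>f_0(1)$. Writing $z_1:=f_0^{-1}(y)\in(0,1]$ and $z_2:=\nu^{-1}(z_1)\in[1,\infty)$, the defining property of $\nu$ — namely $\xi(\nu(z),z)=1$ for all $z\ge 1$, where $\xi(z_1,z_2)=\E_{Z\sim\Normal(0,1)}[Z^2\mid Z\in[z_1,z_2]]$ — gives $\E[X_1^2\mid f_0^{-1}(y)\le X_1\le \nu^{-1}(f_0^{-1}(y))]=\xi(z_1,z_2)=1$. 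Combined with the $X_2$ bookkeeping above, this yields $\E[\vX\vX^{\T}\,\id(Y\ge y)]=\vI_2\,\p(Y\ge y)$ for every $y\in\R$.

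It then remains to promote this family of identities, which holds over the $\pi$-system $\{\{Y\ge y\}:y\in\R\}$ generating $\sigma(Y)$, to $\E[\vX\vX^{\T}\mid Y]=\vI_2$ a.s.\ by the standard uniqueness argument for conditional expectations, and to substitute into the definition of $\vV$. I expect the only genuinely delicate point to be the preliminary construction itself: the existence of a continuous decreasing $\nu\colon[1,\infty)\to(0,1]$ with $\nu(1)=1$, $\lim_{z\to\infty}\nu(z)=0$ and $\xi(\nu(z),z)=1$, together with the clean identification of $\{Y\ge y\}$ as an interval. Both rest on the intermediate value theorem, the strict monotonicity of $\xi$ in each argument and of $f_0$, and the boundary values $\xi(1,1)=\xi(0,\infty)=1$ recorded just before the statement, so I would lock those facts down first; the remaining $2\times 2$ block computation and independence bookkeeping are routine.
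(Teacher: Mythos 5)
Your proposal is correct and follows essentially the same route as the paper's proof: both reduce to verifying $\E[\vX\vX^{\T}\,\id(Y\ge y)]=\vI_2\,\p(Y\ge y)$ over super-level sets, identify $\{Y\ge y\}$ with the interval $[f_0^{-1}(y),\nu^{-1}(f_0^{-1}(y))]$ for $0<y\le f_0(1)$, and invoke independence of $X_1,X_2$ together with the defining property $\xi(\nu(z),z)=1$ of $\nu$. The only difference is cosmetic: you make explicit the $\pi$-system/uniqueness step promoting the super-level-set identity to $\E[\vX\vX^{\T}\mid Y]=\vI_2$ a.s., and you dispatch the $(1,2)$ and $(2,2)$ entries up front via independence rather than inside the matrix computation, both of which the paper leaves implicit.
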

\begin{figure}
    \centering
    \includegraphics[scale=1.3]{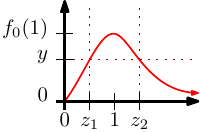}
    \caption{An example plot of the link function $f$ as defined in \eqref{eqn:save_link}. Here, we have $\{Y > y\} = [z_1, z_2]$ from the plot  $\{x: y \le f(x) \}$, where $z_1 = f_0^{-1}(y)$ and $z_2 = \nu^{-1}(z_1) = \nu^{-1}(f_0^{-1}(y))$.}
    \label{fig:link}
\end{figure}

\section{Connection to Local Linear Regression}
\label{sec:lle}
In this section, we will show that our estimator $\hat{\vbeta}_h(\shift):= \hat{\vbeta}_h(\shift; \Data)$, which is defined similarly as in \cref{eq:t_LR_rw_me} but with a full sample $\Data = \{(\vX_i, Y_i)\}_{i=1}^n$, is asymptotically equivalent to the following local linear estimator (LLE) of $\nabla g(\shift)$ with a data-dependent kernel: 
$$\hat{\vbeta}^{\LLE}_h(\shift) := \argmin_{(\alpha,\vbeta) \in \R \times \R^d} \sum_{i=1}^n \left(Y_i - \alpha - \vbeta^{\T} (\vX_i - \shift)\right)^2 K^{\shift}_h(\vX_i)\,,$$
where $K^{\shift}_h(\vx):= \rho_h(\vx; \shift)$.

\begin{proposition} \label{prop:LLE}
For each $\shift \in \R^d$,
    $$\left\lVert \hat{\vbeta}_h(\shift) - \hat{\vbeta}^{\LLE}_h(\shift) \right\rVert \overset{\p}{\rightarrow}{0}\,,$$
    here, the notation $\overset{\p}{\rightarrow}$ represents convergence in probability. 
    \begin{proof}

    By standard solution for the weighted least square error problems gives:
    \begin{align}
    \label{eqn:lle}
        \hat{\vbeta}^{\LLE}_h(\shift) = \vI_d^{\backslash 1}  (\vX_{\shift}^{\T} \vW^{\shift}_h \vX_{\shift})^{-1} \vX_{\shift}^{\T} \vW^{\shift}_h \vY
    \end{align}
    where $\vI_{d+1}^{\backslash 1} \in \R^{d \times (d+1)}$ is obtained by removing the first row of a $(d+1)\times (d+1)$ identity matrix $\vI_{d+1}$, $$\vX_{\shift} = \begin{bmatrix}
        1 & (\vX_1 - \shift)^{\T}\\
        \vdots & \vdots \\
        1 & (\vX_n - \shift)^{\T}
    \end{bmatrix}, \quad \vW_{\shift} = \diag(K^{\shift}_h(\vX_1), \ldots, K^{\shift}_h(\vX_n)), \quad \vY = \begin{bmatrix}
        Y_1\\
        \vdots \\
        Y_n
    \end{bmatrix}.$$
        One can show from \cref{eqn:lle} and the block matrix inversion formula (\Cref{lemma:matrix}) 
        % Chapter 14.2 of \cite{matrixbook}
        that $$ \hat{\vbeta}_h(\shift) = h^{-2}\left( \hat{\Sigma}(\shift) \, \hat{\vbeta}^{\LLE}_h(\shift) + \hat{\mu}_{y}(\shift) \hat{\vmu}_{\vx}(\shift)\right), $$
    where
    \begin{align*}
        \hat{\Sigma}(\shift) &:= n^{-1}\sum_{i=1}^n \rho_h(\vX_i; \shift) (\vX_i - \shift) (\vX_i - \shift)^{\T} ;  \\
        \hat{\vmu}_{\vx}(\shift) &:= n^{-1}\sum_{i=1}^n \rho_h(\vX_i; \shift) (\vX_i - \shift)  ; \\
        \hat{\mu}_{y}(\shift) &:= n^{-1}\sum_{i=1}^n \rho_h(\vX_i; \shift) Y_i. 
    \end{align*}
    By the weak law of large numbers,
    $\hat{\Sigma}\overset{\p}{\rightarrow} h^2 \vI_d,$
    $\hat{\vmu}_{\vx}\overset{\p}{\rightarrow} 0,$
    $\hat{\mu}_{y} \overset{\p}{\rightarrow} \E_{\tilde{\vX} \sim \Normal(\vzero_d, \vI_d)} [f(\vU^{\T} \tilde{\vX})].$ The proposition follows and the proof is completed.
    \end{proof}
\end{proposition}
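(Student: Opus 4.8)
The plan is to make the algebraic relationship between $\hat{\vbeta}_h(\shift)$ and the weighted least--squares solution $\hat{\vbeta}^{\LLE}_h(\shift)$ explicit, and then push everything through the weak law of large numbers. First I would record the closed form for the weighted least--squares minimizer: the normal equations for $\min_{\alpha,\vbeta}\sum_{i=1}^n (Y_i - \alpha - \vbeta^{\T}(\vX_i-\shift))^2 K^{\shift}_h(\vX_i)$ give that the minimizer is $(\vX_{\shift}^{\T}\vW^{\shift}_h\vX_{\shift})^{-1}\vX_{\shift}^{\T}\vW^{\shift}_h\vY$, and extracting its last $d$ coordinates yields \eqref{eqn:lle}.

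Next I would identify the $(d+1)\times(d+1)$ Gram matrix and the right--hand side in terms of empirical moments. Writing $\hat\mu_{\rho}(\shift) := n^{-1}\sum_i \rho_h(\vX_i;\shift)$, the matrix $n^{-1}\vX_{\shift}^{\T}\vW^{\shift}_h\vX_{\shift}$ has $(1,1)$ entry $\hat\mu_{\rho}(\shift)$, first--row and first--column off--diagonal blocks $\hat{\vmu}_{\vx}(\shift)^{\T}$ and $\hat{\vmu}_{\vx}(\shift)$, and lower--right $d\times d$ block $\hat{\Sigma}(\shift)$, while $n^{-1}\vX_{\shift}^{\T}\vW^{\shift}_h\vY$ has first entry $\hat{\mu}_y(\shift)$ and remaining block $h^2\hat{\vbeta}_h(\shift)$, the latter because $n^{-1}\sum_i\rho_h(\vX_i;\shift)Y_i(\vX_i-\shift) = h^2\hat{\vbeta}_h(\shift)$ by \eqref{eq:t_LR_rw_me}. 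Applying the block matrix inversion formula (\Cref{lemma:matrix}) to read off the bottom $d$ rows of the inverse times this vector yields the exact finite--sample identity $h^2\hat{\vbeta}_h(\shift) = \hat{S}(\shift)\,\hat{\vbeta}^{\LLE}_h(\shift) + \hat{\mu}_{\rho}(\shift)^{-1}\hat{\mu}_y(\shift)\,\hat{\vmu}_{\vx}(\shift)$, where $\hat{S}(\shift) := \hat{\Sigma}(\shift) - \hat{\mu}_{\rho}(\shift)^{-1}\hat{\vmu}_{\vx}(\shift)\hat{\vmu}_{\vx}(\shift)^{\T}$ is the Schur complement; equivalently $\hat{\vbeta}^{\LLE}_h(\shift) = \hat{S}(\shift)^{-1}\bigl(h^2\hat{\vbeta}_h(\shift) - \hat{\mu}_{\rho}(\shift)^{-1}\hat{\mu}_y(\shift)\hat{\vmu}_{\vx}(\shift)\bigr)$ whenever $\hat{S}(\shift)$ is invertible. (The displayed identity in the statement is the leading--order version of this, obtained by dropping the second--order term $\hat{\vmu}_{\vx}\hat{\vmu}_{\vx}^{\T}$ and using $\hat{\mu}_{\rho}\approx 1$.)

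Then I would invoke the weak law of large numbers for each empirical moment. The change--of--measure identity $\E_{\vX\sim P_{\vX}}[\rho_h(\vX;\shift)\psi(\vX)] = \E_{\vZ\sim\Normal(\shift,h^2\vI_d)}[\psi(\vZ)]$ together with Stein's Lemma (\Cref{lem:stein}) identifies the limits: $\hat{\mu}_{\rho}(\shift)\overset{\p}{\rightarrow}1$, $\hat{\vmu}_{\vx}(\shift)\overset{\p}{\rightarrow}\vzero_d$, $\hat{\Sigma}(\shift)\overset{\p}{\rightarrow}h^2\vI_d$, $\hat{\mu}_y(\shift)\overset{\p}{\rightarrow}\E_{\vZ\sim\Normal(\shift,h^2\vI_d)}[g(\vZ)]$ (using $\E[Y\mid\vX]=g(\vX)$), and $\hat{\vbeta}_h(\shift)\overset{\p}{\rightarrow}h^{-2}\E_{\vZ\sim\Normal(\shift,h^2\vI_d)}[g(\vZ)(\vZ-\shift)] = \vbeta_h(\shift)$. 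For the integrability hypotheses of the WLLN I would verify, via the same change of measure together with \Cref{assume:design2}, \Cref{assume:link} and \Cref{assume:design1}, that $\E_{P_{\vX}}[\rho_h(\vX;\shift)\lVert\vX-\shift\rVert^2]$, $\E_{P_{\vX}}[\rho_h(\vX;\shift)\lvert Y\rvert\,\lVert\vX-\shift\rVert]$ and $\E_{P_{\vX}}[\rho_h(\vX;\shift)\lvert Y\rvert]$ are finite; the Gaussian weight $\rho_h$ tames the possibly heavy tails of $P_{\vX}$, and for the terms involving $\lvert Y\rvert$ one first conditions on $\vX$ and bounds $\E[\lvert Y\rvert\mid\vX]\le\lvert g(\vX)\rvert + C\sigma_Y$. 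By the continuous mapping theorem $\hat{S}(\shift)\overset{\p}{\rightarrow}h^2\vI_d$, so $\hat{S}(\shift)^{-1}\overset{\p}{\rightarrow}h^{-2}\vI_d$ on an event of probability tending to one, and Slutsky's lemma applied to the formula for $\hat{\vbeta}^{\LLE}_h(\shift)$ gives $\hat{\vbeta}^{\LLE}_h(\shift)\overset{\p}{\rightarrow}h^{-2}\bigl(h^2\vbeta_h(\shift)-\vzero_d\bigr)=\vbeta_h(\shift)$. Since both $\hat{\vbeta}_h(\shift)$ and $\hat{\vbeta}^{\LLE}_h(\shift)$ converge in probability to the same limit $\vbeta_h(\shift)$, we conclude $\lVert\hat{\vbeta}_h(\shift)-\hat{\vbeta}^{\LLE}_h(\shift)\rVert\overset{\p}{\rightarrow}0$.

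The step I expect to be the main obstacle is bookkeeping rather than conceptual: handling the random inverse $\hat{S}(\shift)^{-1}$, which only exists on an event of probability tending to one, and checking that each summand entering the WLLN genuinely has a finite first moment under the standing assumptions --- in particular the mixed term $\rho_h(\vX;\shift)\lvert Y\rvert\lVert\vX-\shift\rVert$, which requires conditioning on $\vX$, using the subgaussian noise of \Cref{assume:design1} (and the moment bound on $f$) to control $\E[\lvert Y\rvert\mid\vX]$, and only then changing measure to $\Normal(\shift,h^2\vI_d)$.
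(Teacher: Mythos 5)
Your argument follows the same route as the paper's proof: closed-form weighted least squares via the normal equations, a block matrix inversion to obtain a finite-sample identity relating $\hat{\vbeta}_h(\shift)$ to $\hat{\vbeta}^{\LLE}_h(\shift)$, and then the weak law of large numbers combined with the change-of-measure identity to pass to the limit. Your exact identity, featuring the Schur complement $\hat{S}(\shift) = \hat{\Sigma}(\shift) - \hat{\mu}_{\rho}(\shift)^{-1}\hat{\vmu}_{\vx}(\shift)\hat{\vmu}_{\vx}(\shift)^{\T}$ and the $\hat{\mu}_{\rho}(\shift)^{-1}$ normalization, is in fact the correct finite-sample relation (the paper's displayed version substitutes $\hat{\mu}_y(\shift)$ for the fitted intercept $\hat{\alpha}$ and $\hat{\Sigma}(\shift)$ for $\hat{S}(\shift)$, which only agree asymptotically, as you rightly flag), and the additional integrability and invertibility checks you supply are precisely the details the paper's terse proof leaves implicit.
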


\section{Estimation Bias Under $P_{\vX}$ with a Bounded Support}
\label{sec:bounded_support}
In this section, we discuss the performance of our estimator when \Cref{assume:basic}(2) is not satisfied and the Lebesgue density $p$ of $P_{\vX}$ has a bounded support $\{\vx \in \R^d: \lVert \vx \rVert \le K\}$ for some $K < \infty$. In this case, we note that the smoothed gradient estimator $\hat{\vbeta}_h(\shift_j; \Data_{j,l})$ as defined in \cref{eq:t_LR_rw_me} is biased for ${\vbeta}_h(\shift_j)$. In particular, for a fixed $\shift_j$,
\begin{align*}
    \E[\hat{\vbeta}_h(\shift_j; \Data_{j,l})] &= {h^{-2}} \E[\rho_h(\vX; \shift) \cdot Y \cdot (\vX-\shift_j)]\\
    &= {h^{-2}} \int_{\lVert\vX\rVert \le K}  \rho_h(\vX; \shift_j) \cdot g(\vX) \cdot (\vX - \shift_j) d P_{\vX} \\
    &= {h^{-2}} \E_{\vZ\sim\Normal(\shift_j, h^2\vI_d)} \left[ \mathbbm{1}(\lVert\vZ\rVert \le K)  \cdot g(\vZ) \cdot (\vZ - \shift_j)\right]\,.
\end{align*}
Hence, the bias is given as:
\begin{align}
\label{eqn:bias}
    B_{h,j} &:= \E[\hat{\vbeta}_h(\shift_j; \Data_{j,l})] - {\vbeta}_h(\shift_j) \nonumber \\ 
    &= -{h^{-2}} \E_{\vZ\sim\Normal(\shift_j, h^2\vI_d)} \left[ \mathbbm{1}(\lVert\vZ\rVert > K)  \cdot g(\vZ) \cdot (\vZ - \shift_j)\right] 
\end{align}
Intuitively, $\lVert B_{h,j} \rVert$ should shrink to zero as $h \rightarrow 0$ when $K  \gg \lVert \shift_j\rVert$, which holds with high probability if we choose the variance $\sigma_{\theta}^2$ of $\shift_j$'s is small compared to $K$.
 
On the other hand, it is clear that if $\sup_{j \in [m]} \lVert B_{h,j} \rVert = o(n^{-1/2})$, then the bias will not affect the rate in the upper-bound for subspace estimation error. The following proposition shows how small $h$ needs to be to guarantee such small biases.

\begin{proposition}
    \label{prop:bias}
    Suppose that $\lVert \shift_j \rVert \le h$. Let $h = K (\sqrt{d} + \sqrt{\log n})^{-1}/4$, then we have the bias $\lVert \E \hat{\vbeta}_{j, \ell} - {\vbeta}_h(\shift_j) \rVert = O(\log n \cdot n^{-2/3})$.
\end{proposition}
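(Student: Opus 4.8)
The plan is to start from the closed-form bias expression in \cref{eqn:bias}, apply the Gaussian change of variables $\vZ = \shift_j + h\vW$ with $\vW\sim\Normal(\vzero_d,\vI_d)$, and exploit the fact that under the stated choice of $h$ the truncation event $\{\lVert\vZ\rVert>K\}$ forces $\lVert\vW\rVert$ to sit deep in the Gaussian tail, at scale $\sqrt d+\sqrt{\log n}$. Concretely, \cref{eqn:bias} becomes
\begin{align*}
    B_{h,j} = -h^{-1}\,\E_{\vW\sim\Normal(\vzero_d,\vI_d)}\!\left[\id\!\left(\lVert\shift_j+h\vW\rVert>K\right)g(\shift_j+h\vW)\,\vW\right],
\end{align*}
where $g=f(\vU^\T\cdot)$ is read as a function on all of $\R^d$, since the smoothed gradient $\vbeta_h$ is defined through the link $f$, which lives on all of $\R^k$. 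Taking norms reduces the task to bounding the truncated Gaussian moment $h^{-1}\E\big[\id(\lVert\shift_j+h\vW\rVert>K)\,|g(\shift_j+h\vW)|\,\lVert\vW\rVert\big]$.

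First I would note that, since $\lVert\shift_j\rVert\le h$, we have $\lVert\shift_j+h\vW\rVert\le h(1+\lVert\vW\rVert)$, so $\{\lVert\shift_j+h\vW\rVert>K\}\subseteq\{\lVert\vW\rVert> K/h-1\}$, and the choice $h=K(\sqrt d+\sqrt{\log n})^{-1}/4$ gives the threshold $t:=K/h-1 = 4(\sqrt d+\sqrt{\log n})-1$. Next, under a polynomial-growth condition on the link (in particular when $f$ is a degree-$\le r$ polynomial, the Lipschitz case being $r=1$) one has $|f(\vz)|\le C_{k,r}(1+\lVert\vz\rVert)^r$, hence, using $h\le1$ and $\lVert\shift_j+h\vW\rVert\le 1+\lVert\vW\rVert$, $|g(\shift_j+h\vW)|\le C'_{k,r}(1+\lVert\vW\rVert^{r})$. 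Plugging this in and applying Cauchy--Schwarz,
\begin{align*}
    \lVert B_{h,j}\rVert \;\le\; h^{-1}C'_{k,r}\Big(\E\big[\lVert\vW\rVert^{2}(1+\lVert\vW\rVert^{r})^2\big]\Big)^{1/2}\big(\p(\lVert\vW\rVert>t)\big)^{1/2}\;\le\; h^{-1}C''_{d,r}\,\big(\p(\lVert\vW\rVert>t)\big)^{1/2},
\end{align*}
since every moment of $\lVert\vW\rVert$ is a finite constant depending only on $d$ and $r$. Finally, by Gaussian concentration of the norm, $\p(\lVert\vW\rVert>\sqrt d+s)\le e^{-s^2/2}$; here $t-\sqrt d = 3\sqrt d+4\sqrt{\log n}-1\ge 4\sqrt{\log n}$ for $d\ge1$, so $\p(\lVert\vW\rVert>t)\le e^{-8\log n}=n^{-8}$. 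Since $h^{-1}=O(\sqrt d+\sqrt{\log n})$ and the bound is uniform over $j\in[m]$, this yields $\sup_{j}\lVert B_{h,j}\rVert = O(\sqrt{\log n}\cdot n^{-4})$, which in particular is $O(\log n\cdot n^{-2/3})$ as claimed (and is in fact far smaller — any fixed polynomial rate in $n$ is attainable by adjusting the constant in the choice of $h$).

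The only genuinely load-bearing step is the observation that the truncation event pushes $\lVert\vW\rVert$ into the far Gaussian tail, combined with the quantitative tail bound for $\lVert\vW\rVert$; everything else is routine bookkeeping of Gaussian moments. A secondary care-point — controlling the growth of $g$ outside $\supp(P_{\vX})$ — is dispatched by the polynomial-growth bound on $f$, which holds under the assumptions already in play.
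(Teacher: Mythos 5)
Your proof is correct under the additional hypothesis you introduce, but it quietly strengthens the paper's assumptions. The paper's proof handles the factor $g(\vZ)$ by a three-way H\"older split of the bias
$\|B_{h,j}\| \le h^{-2}\bigl(\E[\id(\|\widetilde\vZ\|>Kh^{-1})]\cdot\E[g(\vZ)^3]\cdot\E[\|\vZ-\shift_j\|^3]\bigr)^{1/3}$
and controls $\E_{\vZ\sim\Normal(\shift_j,h^2\vI_d)}[g(\vZ)^3]$ via the \emph{moment} conditions of Assumptions~\ref{assume:design2} and~\ref{assume:link}, i.e. finiteness of $\ratiomoment$ and $\linkmoment$. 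You instead impose a \emph{pointwise} bound $|f(\vz)| \le C_{k,r}(1+\|\vz\|)^r$, which is what allows you to absorb $g$ into a Gaussian polynomial moment and apply a simple two-term Cauchy--Schwarz. That pointwise bound is not implied by $\linkmoment < \infty$; the assertion that it ``holds under the assumptions already in play'' is only true if one further restricts to polynomial links (Assumption~\ref{assume:poly}), which Proposition~\ref{prop:bias} does not do. So as written, your argument proves the proposition for polynomial-growth links only, whereas the paper's argument covers any link satisfying the stated moment conditions.

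Beyond that gap, your route is genuinely cleaner and quantitatively stronger where it applies. The load-bearing observation (the truncation event $\{\|\vZ\|>K\}$ lives at radius $\Theta(\sqrt d+\sqrt{\log n})$, deep in the Gaussian tail) is the same as the paper's, but your use of the norm concentration $\p(\|\vW\|>\sqrt d + s)\le e^{-s^2/2}$ together with the two-term Cauchy--Schwarz yields $O(\sqrt{\log n}\cdot n^{-4})$, considerably sharper than the paper's $O(\log n\cdot n^{-2/3})$, because the paper pays a cube-root on the (loosely bounded) tail probability $n^{-2}$ rather than a square-root on $n^{-8}$. If you want your version to match the paper's generality, replace the pointwise bound on $g$ by an $L^q$-moment bound under $\Normal(\shift_j, h^2\vI_d)$ (deduced from $\ratiomoment,\linkmoment<\infty$ as the paper does) and adjust the H\"older exponents accordingly; the qualitative conclusion and the tail estimate carry through unchanged.
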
 

\begin{proof}
    By Cauchy-Schwarz Inequality and \cref{eqn:bias}, we have:
    \begin{align*}
        \lVert B_{h,j} \rVert  &\le {h^{-2}} \Bigg(\E_{\widetilde{\vZ}\sim\Normal(h^{-2}\shift_j, \vI_d)} \left[ \mathbbm{1}(\lVert\widetilde{\vZ}\rVert > Kh^{-1}) \right] \cdot \E_{\vZ\sim\Normal(\shift_j, h^2\vI_d)} \left[g(\vZ)^3\right] \nonumber \\ 
    & \qquad \qquad \qquad \cdot \E_{\vZ\sim\Normal(\shift_j, h^2\vI_d)}\left[\lVert\vZ - \shift_j\rVert^3\right] \Bigg)^{1/3}\,.
    \end{align*}
    We will upper-bound each of the terms in the parentheses.
    By \Cref{assume:design2} and \ref{assume:link}, it can be verified that there exists some $C > 0$, independent of $h$ and $n$, such that: 
    \begin{align}
        \label{eqn:bias_4}
        \E_{\vZ\sim\Normal(\shift_j, h^2\vI_d)} \left[g(\vZ)^3\right] \le C \cdot \ratiomoment \cdot \linkmoment\,.
    \end{align}
    The condition $\lVert\shift_j\rVert \le h$ implies that:
    \begin{align}
        \label{eqn:bias_5}
        \E_{\vZ\sim\Normal(\shift_j, h^2\vI_d)}\left[\lVert\vZ - \shift_j\rVert^3\right] \le C  \cdot h^3\,.
    \end{align}
    Finally, we have under the assumption $\lVert\shift_j\rVert < h < K/2$ that: 
    \begin{align}
    \label{eqn:bias_2}      \E_{\widetilde{\vZ}\sim\Normal(h^{-2}\shift_j, \vI_d)} \left[ \mathbbm{1}(\lVert\widetilde{\vZ}\rVert > Kh^{-1}) \right] & \le \p\left(\lVert \Normal(\vzero_d,\vI_d)\rVert > \frac{Kh^{-1}}{2}\right) =  \p\left(\chi^2_d > \frac{K^2h^{-2}}{4}\right) .
    \end{align}
    Consider the following concentration inequality for the chi-squared distribution:
    \begin{lemma}[Concentration for $\chi^2_d$, adapted from Lemma 1 of \cite{Laurent00adaptive}]
    \label{lemma:chi_square}
        Let $W$ be a random variable follows the centered chi-squared distribution with degree of freedom $d$, i.e., $W \sim \chi^2_d$, then for any $t>0$:
        $$\p(W > d+2\sqrt{dt}+2t) \le \exp(-t)\,.$$
    \end{lemma}
    By \Cref{lemma:chi_square}, when $h = K(\sqrt{d}+\sqrt{\log n})^{-1}/4$, we have:
    \begin{align}
    \label{eqn:bias_3}
       \p\left(\chi^2_d > \frac{K^2h^{-2}}{4}\right) \leq n^{-2}\,.
    \end{align}
    From \cref{eqn:bias_4} to \cref{eqn:bias_3}, we have:
    \begin{align*}
        \lVert B_{h,j} \rVert \le C h^{-2} (n^{-2} \cdot \ratiomoment \cdot \linkmoment \cdot h^3)^{1/3} \le \tilde{C} \cdot \log n \cdot  n^{-2/3}\,.
    \end{align*}
    where $\tilde{C}$ is independent of $n$. 
\end{proof}

\begin{remark}
    The condition $\lVert \shift_j \rVert \le h$ in \Cref{prop:bias} holds uniformly for $j \in [m]$, with probability at least $1-\delta$ with the choice $\sigma_{\theta} \lesssim h \cdot \log(m/\delta)$.
    This is consistent with our choice of $\sigma_{\theta}$ in the main theorem. 
\end{remark}

{%\color{red} 
\section{The Case with Unknown Density Ratio} \label{sec:unknown_density_ratio}
 In this section, we consider the estimation of the smoothed gradient when the density ratio is unknown. Without loss of generality, we assume that the smoothing radius $h=1$ throughout this section and omit all $h$ in the subscript and simply write $\vbeta_j := \vbeta_h(\shift_j)$. Recall that for $j \in [m]$, $\ell = 1, 2$, we have the following smoothed gradient estimators in \Cref{sec:estimation}: $$\hat{\vbeta}_{j,l} = \frac{2m}{n} \sum_{(\vX, Y) \in \Data_{j, \ell}} \rho(\vX; \shift_j) \cdot Y \cdot (\vX-\shift_j), \quad $$
One natural solution to resolve the issue of unknown $\rho$ is to replace it by some estimates $\hat{\rho}_N$ based on another sample $\widetilde\vX_1, \ldots, \widetilde\vX_N$, and we get the following plug-in estimator:
$$\tilde{\vbeta}_{j,\ell} = \frac{2m}{n} \sum_{(\vX, Y) \in \Data_{j, \ell}} \hat{\rho}_N(\vX; \shift_j) \cdot Y \cdot (\vX-\shift_j). $$

The following proposition quantifies the effect of ``plugging in" on the $L_2$ error of estimating the smoothed gradients. 

\begin{proposition}
    Let $j \in [m]$, $\ell = 1,2$, and $\vx, \shift_j \in \R^d$.  Suppose that $\hat\rho_N(\vx; \shift_j)$ is an estimate of the density ratio $\rho(\vx; \shift_j)$ from a sample $\widetilde\vX_1, \ldots, \widetilde\vX_N$ with:
    \begin{align*}
       &\E\left[\left|\rho(\vX;\shift_j)\right|^{12}\right] < \infty \,,
       \E\left[\lVert\vX -\shift_j\rVert^{12}\right] < \infty \,,\\
        &a_{N,j} := \E\left[\left(\frac{\hat\rho_N(\vX; \shift_j)}{\rho(\vX; \shift_j)}\right)^{4}\right] \overset{N \to \infty}\longrightarrow 1, \\ 
        &b_{N,j} := \E\left[\left({\hat\rho_N(\vX; \shift_j)-\rho(\vX; \shift_j)}\right)^{2}\right] \overset{N \to \infty}\longrightarrow 0,
    \end{align*}
    where the expectation is taken over the randomness of $\vX, \widetilde{\vX}_1, \ldots, \widetilde{\vX}_N$. Then, we have for some $C>0$ independent of $n,m$ and $N$ such that:

    \begin{align*}
        \E\left[\lVert\tilde{\vbeta}_{j,\ell} - \vbeta_j \rVert^2 \right] \le C \cdot \left(\frac{m}{n} \cdot a_{N,j}^{n/4m} + b_{N,j}\right).
    \end{align*}

\end{proposition}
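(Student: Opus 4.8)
The plan is a bias--variance decomposition of the mean-squared error, the only subtlety being that the plug-in ratio $\hat\rho_N:=\hat\rho_N(\cdot;\shift_j)$ is a single random function shared by every summand of $\tilde\vbeta_{j,\ell}$, so the summands $\hat\rho_N(\vX_i)\,Y_i(\vX_i-\shift_j)$ are i.i.d.\ only after conditioning on the pilot sample $\widetilde\vX_1,\dots,\widetilde\vX_N$. Throughout I take $h=1$ (as in the statement), write $N:=|\Data_{j,\ell}|=n/(2m)$, $\rho:=\rho(\cdot;\shift_j)$, $\vW:=Y(\vX-\shift_j)$, and use that the oracle estimator is unbiased, i.e.\ $\vbeta_j=\E[\rho(\vX)\vW]$. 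I would first write
\[
  \E\bigl\lVert\tilde\vbeta_{j,\ell}-\vbeta_j\bigr\rVert^2
  \;=\;\bigl\lVert\E\tilde\vbeta_{j,\ell}-\vbeta_j\bigr\rVert^2
  \;+\;\E\bigl\lVert\tilde\vbeta_{j,\ell}-\E\tilde\vbeta_{j,\ell}\bigr\rVert^2 ,
\]
all expectations over both $\Data_{j,\ell}$ and $\widetilde\vX_{1:N}$, and treat the two pieces separately.

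For the squared bias, conditioning on $\hat\rho_N$ gives $\E[\tilde\vbeta_{j,\ell}\mid\hat\rho_N]=\E_{\vX}[\hat\rho_N(\vX)g(\vX)(\vX-\shift_j)]$, so $\E\tilde\vbeta_{j,\ell}-\vbeta_j=\E_{\hat\rho_N}\E_{\vX}[(\hat\rho_N(\vX)-\rho(\vX))g(\vX)(\vX-\shift_j)]$. Applying Jensen over $\hat\rho_N$, then Cauchy--Schwarz over $\vX$, then Cauchy--Schwarz over $\hat\rho_N$ bounds the norm of this by $b_{N,j}^{1/2}\,(\E_{\vX}[g(\vX)^2\lVert\vX-\shift_j\rVert^2])^{1/2}$; the second factor is finite by H\"older using the link-moment bound $\linkmoment<\infty$ and $\E\lVert\vX-\shift_j\rVert^{12}<\infty$. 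Hence the squared bias is $O(b_{N,j})$.

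For the variance I would invoke the law of total variance, $\operatorname{Var}(\tilde\vbeta_{j,\ell})=\E_{\hat\rho_N}[\operatorname{Var}(\tilde\vbeta_{j,\ell}\mid\hat\rho_N)]+\operatorname{Var}_{\hat\rho_N}(\E[\tilde\vbeta_{j,\ell}\mid\hat\rho_N])$. The ``between-$\hat\rho_N$'' term is at most $\E_{\hat\rho_N}\lVert\E[\tilde\vbeta_{j,\ell}\mid\hat\rho_N]-\vbeta_j\rVert^2$, which by the identical Cauchy--Schwarz manipulation as for the bias is again $O(b_{N,j})$ --- the crucial point being to recenter at $\vbeta_j$ first, since a naive bound on the cross terms of $\E\lVert\tilde\vbeta_{j,\ell}\rVert^2$ leaves a non-vanishing $O(1)$ remainder. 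Conditionally on $\hat\rho_N$, $\tilde\vbeta_{j,\ell}$ is an average of $N$ i.i.d.\ vectors, so the ``within-$\hat\rho_N$'' term is at most $\tfrac1N\,\E[\hat\rho_N(\vX)^2Y^2\lVert\vX-\shift_j\rVert^2]$; writing $\hat\rho_N^2=\rho^2(\hat\rho_N/\rho)^2$ and applying H\"older with exponents $(2,2)$ peels this into $(\E[(\hat\rho_N/\rho)^4])^{1/2}(\E[\rho^4Y^4\lVert\vX-\shift_j\rVert^4])^{1/2}$, whose second factor is finite by H\"older with exponents $(3,3,3)$ using $\E\rho^{12}<\infty$, subgaussianity of $Y$, and $\E\lVert\vX-\shift_j\rVert^{12}<\infty$. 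Assembling the pieces yields $\E\lVert\tilde\vbeta_{j,\ell}-\vbeta_j\rVert^2 \;\lesssim\; \tfrac mn\cdot(\text{a power of }a_{N,j})\;+\;b_{N,j}$; since $a_{N,j}\to1$ this is exactly the advertised $\tfrac mn\,a_{N,j}^{n/(4m)}+b_{N,j}$ up to the bookkeeping that fixes the exponent (which I would obtain by carrying the conditional second-moment estimate of the $N$ i.i.d.\ summands through without splitting the ratio off prematurely, so that the $N$-fold product structure surfaces the $a_{N,j}^{N/2}$ factor).

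The main obstacle is precisely this loss of independence: the summands $\hat\rho_N(\vX_i)Y_i(\vX_i-\shift_j)$ are coupled through the common estimated ratio, so no i.i.d.\ variance bound applies directly. Conditioning on $\hat\rho_N$ resolves this, but one must then be careful about which pilot-sample error governs each piece --- $a_{N,j}$ (the fourth moment of the relative error $\hat\rho_N/\rho$) controls the conditional ``within'' fluctuations, whereas $b_{N,j}$ (the mean-squared absolute error) controls both the genuine bias and the ``between'' variance, and the latter collapses only after recentering at $\vbeta_j$. Checking that all attendant moment products are finite is exactly where the twelfth-moment hypotheses on $\rho(\vX;\shift_j)$ and $\lVert\vX-\shift_j\rVert$, together with the standing subgaussian-noise and link-moment assumptions, are used.
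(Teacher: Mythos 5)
Your bias--variance split is the same starting point as the paper's, and your bias bound (Jensen over the pilot sample, then Cauchy--Schwarz/H\"older, yielding $O(b_{N,j})$) matches the paper's. The variance treatment is where you diverge, and your version is the more careful one. The paper drops the centering, bounding the variance by $\E\lVert\tilde\vbeta_{j,\ell}\rVert^2$, and then writes $\E\lVert\tilde\vbeta_{j,\ell}\rVert^2=\E\bigl[\lVert\hat\vbeta_{j,\ell}\rVert^2\prod_{(\vX,Y)\in\Data_{j,\ell}}(\hat\rho_N(\vX;\shift_j)/\rho(\vX;\shift_j))^2\bigr]$, applying H\"older to this product to surface the exponent $n/(4m)$ on $a_{N,j}$. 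That identity is not valid --- $\tilde\vbeta_{j,\ell}$ and $\hat\vbeta_{j,\ell}$ are both weighted sums of the same terms $Y_i(\vX_i-\shift_j)$, so their ratio is a weighted average, not a product, of the pointwise ratios $\hat\rho_N(\vX_i)/\rho(\vX_i)$ --- and moreover $\E\lVert\tilde\vbeta_{j,\ell}\rVert^2$ converges to $\lVert\vbeta_j\rVert^2$ rather than to zero, so the uncentered bound cannot deliver an $O(m/n)$ rate in the first place. You observed exactly this, and your law-of-total-variance route (conditioning on $\hat\rho_N$ to recover the i.i.d.\ structure for the ``within'' term, recentering at $\vbeta_j$ to make the ``between'' term $O(b_{N,j})$) is the right repair; the moment checks you call upon are the same ones the paper invokes.

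The one loose thread is your closing speculation that a sharper bookkeeping of the conditional second moment over the $n/(2m)$ i.i.d.\ summands would surface the exponent $n/(4m)$ on $a_{N,j}$: it would not. A correct conditional variance is $\frac{2m}{n}$ times a \emph{single} conditional second moment, not an $(n/2m)$-fold product over the data batch; that product structure is an artifact of the paper's invalid identity quoted above. What your decomposition actually delivers is $\frac{m}{n}\,a_{N,j}^{1/2}+O(b_{N,j})$, which is a valid and cleaner form of the claim. Since $n/(4m)\ge d/2\ge 1/2$ under the standing constraint $m\le n/(2d)$, your exponent $1/2$ is the smaller one when $a_{N,j}>1$ and the larger when $a_{N,j}<1$, and the two bounds coincide up to constants exactly in the regime where the proposition is informative ($a_{N,j}\to1$). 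I would state the result with $a_{N,j}^{1/2}$ rather than chase the paper's exponent, whose derivation does not stand.
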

\begin{remark}
The performance of the plug-in estimator $\tilde\vbeta_{j,\ell}$ depends on both the relative error $a_{N,j}$ and the absolute error $b_{N,j}$ of the density ratio estimator $\hat\rho_N(\vx; \shift_j)$. The same squared $L_2$ error rate of $O(m/n)$ as the oracle estimator $\hat\vbeta_{j,\ell}$ can be obtained when $a_{N,j}^{n/4m}$ is bounded from above by a constant and $b_{N,j}$ is of smaller order than $O(m/n)$. 
\end{remark}

\begin{proof}
    We first apply the standard $L_2$ error decomposition: 
\begin{align}
    \label{eqn:decompose_error}
    \E[\lVert \tilde{\vbeta}_{j, \ell} - {\vbeta}_{j} \rVert^2] 
    &= \E[\lVert \tilde{\vbeta}_{j, \ell} - \E\tilde{\vbeta}_{j, \ell} \rVert^2]+ \lVert\E{\vbeta}_{j, \ell}  - {\vbeta}_{j}  \rVert^2 \nonumber\\ 
    &\le  \E[\lVert \tilde{\vbeta}_{j, \ell}\rVert^2]+ \lVert\E{\vbeta}_{j, \ell}  - {\vbeta}_{j}  \rVert^2.
\end{align}
In what follows, we provides upper bound the two terms on the right-hand side.
\begin{align*}
    \E [\lVert \tilde{\vbeta}_{j,\ell} \rVert^2] &= \E \left[\lVert \hat{\vbeta}_{j,\ell} \rVert^2 \cdot \prod_{\vX: (\vX, Y) \in \Data_{j, \ell}} \left(\frac{\hat{\rho}_N(\vX; \shift_j)}{{\rho}(\vX; \shift_j)}\right)^{2}\right] \\
    &\le \left(\E[\lVert\hat{\vbeta}_{j,\ell} \rVert^4]\right)^{1/2}
    \cdot \left(\E\left[\left(\frac{\hat{\rho}_N(\vX; \shift_j)}{{\rho}(\vX; \shift_j)}\right)^{4}\right]\right)^{n/4m}\\
    &= \left(\E\left[\left\lVert\frac{2m}{n} \sum_{(\vX, Y) \in \Data_{j, \ell}} \rho(\vX; \shift_j) \cdot Y \cdot (\vX-\shift_j) \right\rVert^4 \right]\right)^{1/2} \cdot a_N^{n/4m}\\
    &\le \frac{2m}{n} \cdot  \left(\E\left[\left\lVert \rho(\vX; \shift_j) \cdot Y \cdot (\vX-\shift_j) \right\rVert^4 \right]\right)^{1/2} \cdot a_N^{n/4m}\\
    & \le \frac{2m}{n} \cdot  \left(\E\left[\left|\rho(\vX;\shift_j)\right|^{12}\right]\E\left[\lVert\vX -\shift_j\rVert^{12}\right]\E\left[Y^{12}\right]\right)^{1/6} \cdot a_N^{n/4m} \\
    & \lesssim \frac{2m}{n} \cdot a_{N,j}^{n/4m}\,.
\end{align*}

\begin{align*}
    \lVert \E \tilde \vbeta_{j, \ell}  - {\vbeta}_{j}  \rVert &= \lVert \E[\left({\hat\rho_N(\vX; \shift_j)-\rho(\vX; \shift_j)}\right) \cdot (\vX - \shift_j) \cdot Y] \rVert \\
    &\le \left(\E[\left({\hat\rho_N(\vX; \shift_j)-\rho(\vX; \shift_j)}\right)^2]\right)^{1/2} \cdot \left(\E[Y^4]\right)^{1/4} \cdot \left(\E[\lVert\vX - \shift_j\rVert^4]\right)^{1/4}\\
    &\lesssim b_{N,j}^{1/2}\,.
\end{align*}
 The proof is completed by substituting these bounds in \Cref{eqn:decompose_error}.
\end{proof}

\section{Improving Dependence on $\delta$} \label{sec:mom}
Recall that in the upper-bound of our main theorem (\Cref{thm:generalrate}), the dependence on failure probability $\delta$ is of order $\tilde{O}(\delta^{-1/2})$ which may appear large in light of usual high-probability results, e.g., sub-gaussian concentrations. This is mainly due to the heavy-tailedness of the outer product estimates $\estimator_j$'s. We note that such dependence on $\delta$ is further improvable by the classical idea of median-of-mean estimators in robust statistics \cite{nemirovsky83problem}. Here, we present \Cref{alg:main_mom}, a median-of-mean variant of our estimator, adapted from Algorithm 2 of \cite{hsu2016heavytail}. The main difference from Algorithm \Cref{alg:main} lies in the way of aggregating the ASGOP estimates $\{\widehat{\vM}_j\}_{j \in [m]}$. Instead of taking a simple average, the multivariate median $\estimator_{\star}$ (c.f. Line 7) is taken as the final estimate of the ASGOP $\ASGOP$. It can be readily shown that with proper choice of $m \sim \log(1/\delta)$, one can establish an exponential concentration of $\estimator_{\star}$ to $\ASGOP$. Here, we outline the key steps:

\begin{itemize}
    \item By \Cref{lem:Z-error}, $$P(\lVert \estimator_j - \ASGOP\rVert \ge C_d \cdot \sqrt{m/n}) \le 2/3\,,$$ for some $C_d > 0$, which may depends on dimension $d$ and various parameters defined as in \Cref{sec:results}. Hence, Assumption 1 and 2 in \cite{hsu2016heavytail} is satisfied with $\varepsilon = C_d \cdot \sqrt{m/n}$. 
    \item By Proposition 9 of \cite{hsu2016heavytail}, we have:
    $$P\left(\lVert \estimator_{\star} - \ASGOP\rVert \ge 3 C_d \cdot \sqrt{m/n}\right) \le e^{-m/8}\,.$$
    \item Choose $m \approx 8 \log(1/\delta)$ to establish the desired results. 
\end{itemize}

\begin{algorithm}[tbp]
\caption{Estimation of the CMS $\minspace$ (Median-of-Means Variant)}\label{alg:main_mom}
\small 
\begin{algorithmic}[1]
    %\color{red}
    \vspace{.5em}
    \State \textbf{Input:} {$h > 0, \sigma_\theta > 0, m \in \mathbb{Z}$, dataset $\Data \subset \R^d \times \R$} 
    \State Randomly sample $\{\shift_{j}\}_{j=1}^{m}$ from $\Normal(\mathbf{0}_d, \sigma_\theta^2 \mathbf{I}_d )$ 
    \State Split $\Data$ into  $\Data_{1}$, $\Data_{2}$, \ldots, and $\Data_{m}$ with equal sizes

    \State Split each $\Data_{j}$ into $\Data_{j, 1}$ and $\Data_{j, 2}$ with equal sizes
    
    \State $\forall j \in [m]$, let $\hat{\vbeta}_{j,1} \gets \hat{\vbeta}_h(\shift_{j}, \Data_{j, 1})$, and  $\hat{\vbeta}_{j,2} \gets \hat{\vbeta}_h(\shift_{j}, \Data_{j,2})$  \Comment See (3.4) in the main paper

    \State $\forall j \in [m]$, $\estimator_j \gets  \frac{1}{2} (\hat{\vbeta}_{j,1}\hat{\vbeta}^{\T}_{j,2}+\hat{\vbeta}_{j,2}\hat{\vbeta}^{\T}_{j,1})$; and ${\cal M} \gets \{\estimator_j: j \in [m]\}$
    \State $\forall j \in [m]$, $r_j := \min\{r \ge 0: |B_r(\estimator_j) \cap {\cal M}| > m/2\}$; $j_{\star} := \arg\min_{j \in [m]} r_j$; and $\estimator_{\star} \gets \estimator_{j_{\star}}$
    
    \State \textbf{Return:} {$\widehat{\minspace}_{\star} \in \R^{d \times k}$, the top-$k$ eigenvectors of $\estimator_{\star}$}

\end{algorithmic}
\end{algorithm}
}

\newpage

\includepdf[pages=-]{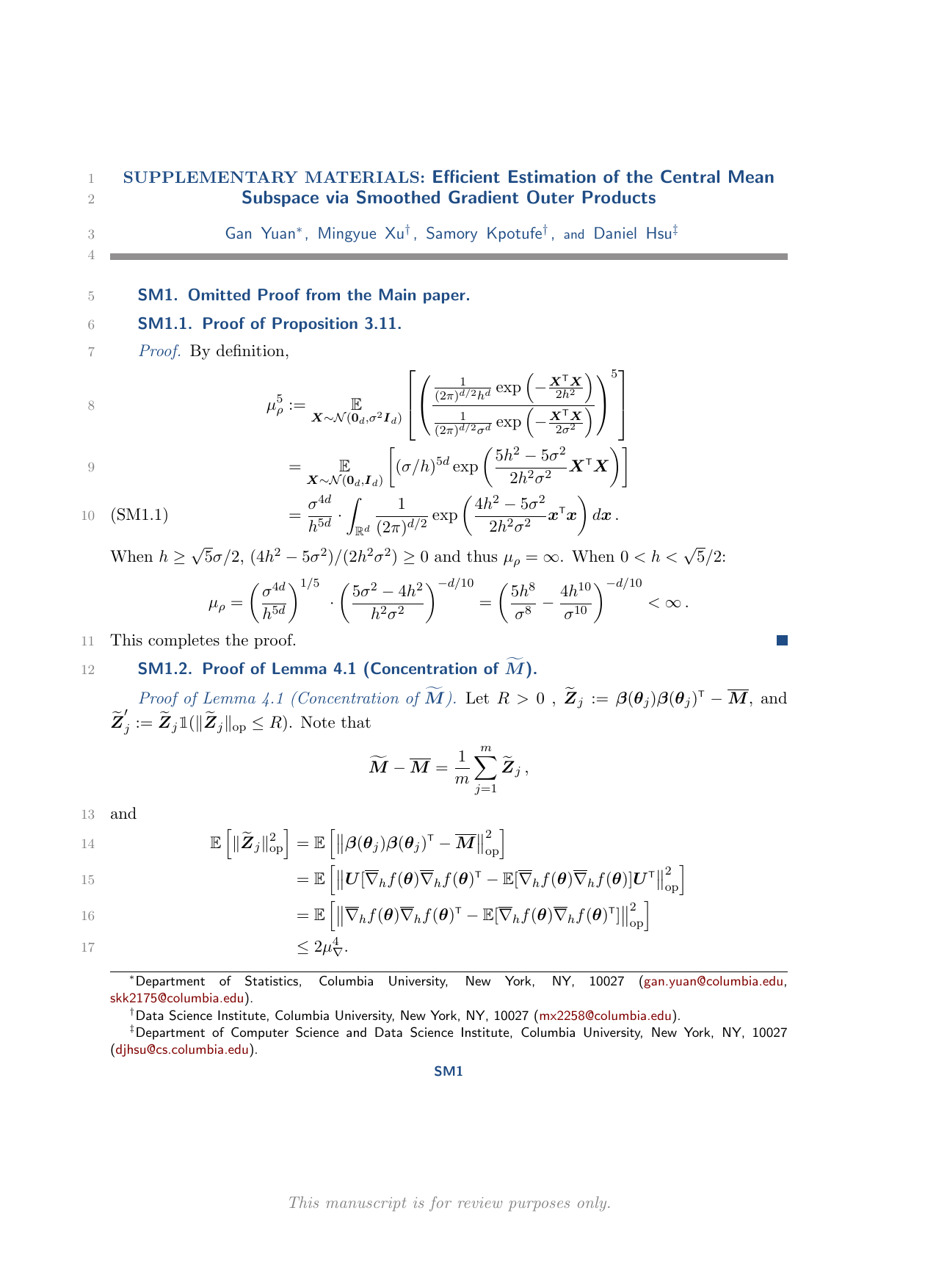}

\end{document}